\theoremstyle{plain}
\newtheorem{theorem}{Theorem}[section]
\theoremstyle{definition}
\theoremstyle{remark}
\DeclareMathOperator*{\argmin}{arg\,min}
\DeclareMathOperator*{\argmax}{arg\,max}
\title{Ada-KV: Optimizing KV Cache Eviction by Adaptive Budget Allocation for Efficient LLM Inference}
\author{ 
\textbf{Yuan Feng}\textsuperscript{1,3{$\dagger$}}
\textbf{Junlin Lv}\textsuperscript{1,3{$\dagger$}}
\textbf{Yukun Cao}\textsuperscript{1,3}
\textbf{Xike Xie}\textsuperscript{2,3{*}}
\textbf{S. Kevin Zhou}\textsuperscript{2,3}
}
\affil{
    \textsuperscript{1}School of Computer Science, University of Science and Technology of China (USTC) \\
    \textsuperscript{2}School of Biomedical Engineering, USTC \\
    \textsuperscript{3}Data Darkness Lab, MIRACLE Center, Suzhou Institute for Advanced Research \\
}
\begin{document}
\maketitle
	
\def\thefootnote{}\footnotetext{\textsuperscript{$\dagger$}Equal Contribution, \textsuperscript{*}Corresponding Author: Xike Xie(xkxie@ustc.edu.cn)} \def\thefootnote{\arabic{footnote}}

\begin{abstract}
	Large Language Models have excelled in various domains but face efficiency challenges due to the growing Key-Value (KV) cache required for long-sequence inference.
	Recent efforts aim to reduce KV cache size by evicting vast non-critical cache elements during runtime while preserving generation quality.
	However, these methods typically allocate compression budgets uniformly across all attention heads, ignoring the unique attention patterns of each head.
	In this paper, we establish a theoretical loss upper bound between pre- and post-eviction attention output, explaining the optimization target of prior cache eviction methods, while guiding the optimization of adaptive budget allocation.
	Base on this, we propose {\it Ada-KV}, the first head-wise adaptive budget allocation strategy. It offers plug-and-play benefits, enabling seamless integration with prior cache eviction methods.
	Extensive evaluations on 13 datasets from Ruler and 16 datasets from LongBench, all conducted under both question-aware and question-agnostic scenarios, demonstrate substantial quality improvements over existing methods. Our code is available at \url{https://github.com/FFY0/AdaKV}.
\end{abstract} 
\section{Introduction}
Autoregressive Large Language Models (LLMs) have achieved significant success and are widely utilized across diverse natural language processing applications, including dialogue systems~\cite{yi2024survey}, document summarization~\cite{laban2023summedits}, and code generation~\cite{gu2023llm}.
The widespread deployments of LLMs have propelled the development of their capacities to process extended sequences. For instance, GPT supports sequences up to 128K~\cite{achiam2023gpt}, Claude3 up to 200K~\cite{anthropic2024claude}, and Gemini-Pro-1.5~\cite{reid2024gemini} up to 2M tokens.
However, this growth in token length introduces significant challenges, particularly the rapid expansion of cache size during inference. For an 8B LLM, handling a single sequence of 2M tokens can require up to 256GB of cache, severely impacting both GPU memory efficiency and computational runtime efficiency.

In particular, the inference process, for each multi-head self-attention layer, consists of two phases: \textit{prefilling} and \textit{decoding}. In the prefilling phase, LLMs compute and store all Key-Value (KV) cache elements for the tokens in the input prompt.
In the decoding phase, the model autoregressively uses the most recently generated token to retrieve information from the stored cache, producing the next output iteratively.
The large KV cache size introduces two important efficiency challenges: First, it severely affects GPU memory efficiency, making it increasingly difficult to scale with longer inputs. Second, during decoding, increased I/O latency duo to cache access results in substantial delays, degrading runtime efficiency.

To address the challenges posed by large KV cache sizes, various cache eviction methods have been developed~\cite{FastGen,H2o,PyramidInfer,PyramidKV,SnapKV}.
These methods constrain the cache size to a predefined budget by retaining only a subset of elements in each attention head and evicting the rest.
They reduces memory usage and accelerate decoding, facilitating efficient long-sequence inference.
Most of these methods rely on a Top-$k$ selection based on attention weights, to identify and prioritize critical cache elements, deciding which to retain and which to evict.

\begin{figure}[t]

	\vspace{-0.2cm}
	\begin{subfigure}[b]{0.45\linewidth}
		\centering
		\includegraphics[width=\linewidth]{./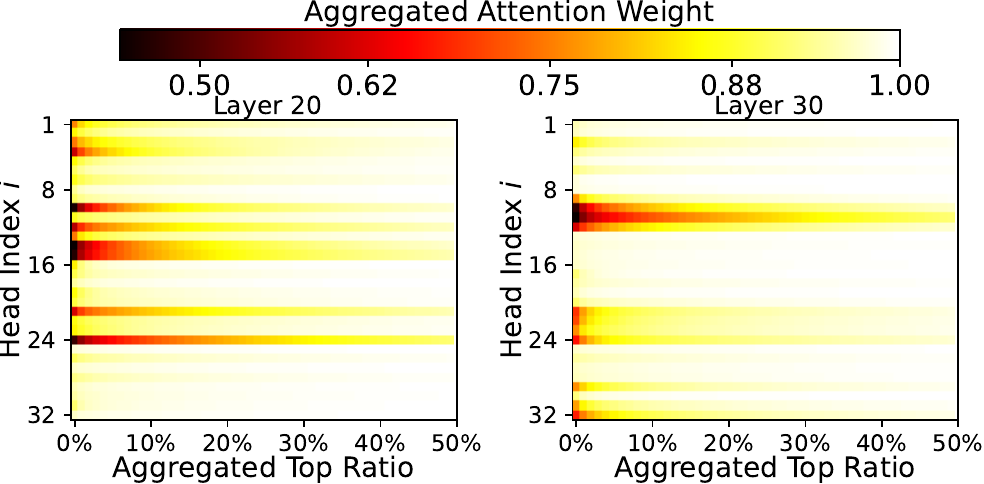}
		\vspace{-0.4cm}
		\caption{ Varied Attention Concentration Across Heads. }
		\label{fig:accu_weights}  
	\end{subfigure}
	\begin{subfigure}[b]{0.53\linewidth}
		\centering
		\includegraphics[width=\linewidth]{./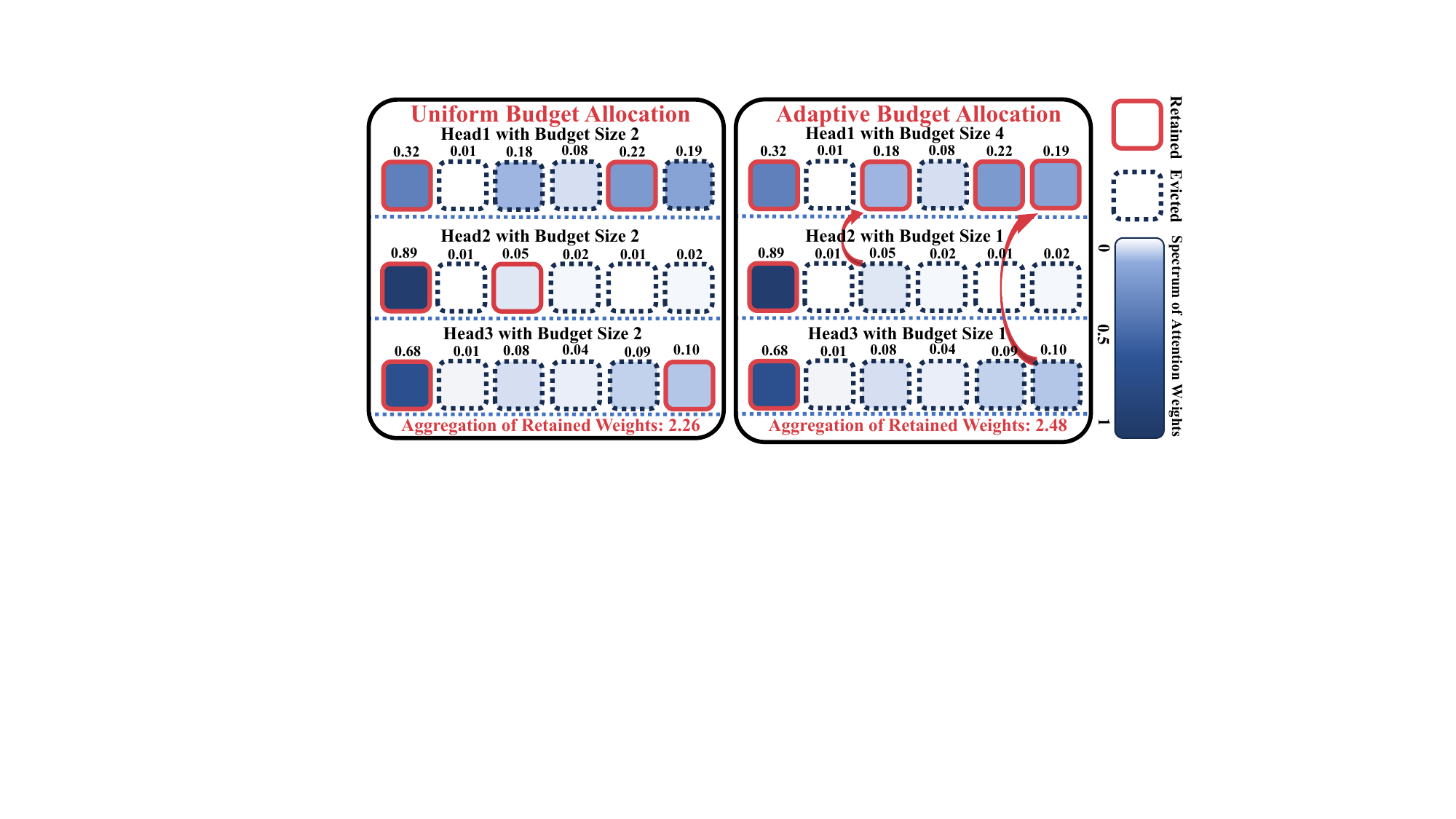}
		\vspace{-0.4cm}
		\caption{From Uniform to Adaptive Budget Allocation}
		\label{fig:illu}  
	\end{subfigure}
	\vspace{-0.1cm}
	\caption{Adaptive budget allocation accommodates varying attention concentration across heads. \textit{Left: Analysis using Llama-3.1-8B-Instruct shows most heads retain nearly all attention weights with a small cache (e.g., top 5\%), while dispersed heads require larger cache proportions. Right: Adaptive allocation, which shifts budgets from sparse to dispersed heads, increases the aggregated retained attention weights (from 2.26 to 2.48) and reduces eviction loss compared to uniform allocation.}  
	}  
	\vspace{-0.3cm}
\end{figure}

However, existing methods uniformly allocate cache budgets across attention failing to account for the unique characteristics of each head.
As shown in Figure~\ref{fig:accu_weights}, attention heads exhibit diverse concentration patterns: some focus narrowly (or sparsely) on a small portion of the cache, while others distribute their attention more broadly \footnote{More visualizations can be found in Appendix~\ref{apdx:heads}}.
This uniform allocation results in inefficiencies---either wasting cache budgets on heads with sparse concentration or incurring significant eviction losses in heads with dispersed distribution. Such imbalances degrade the trade-off between overall budget utilization and the quality of post-eviction generation.
To address this issue, we analyze the impact of existing Top-$k$ eviction methods on attention outputs and propose the first adaptive budget allocation strategy, called {\it Ada-KV}, as illustrated in Figure~\ref{fig:illu}. This strategy dynamically reallocates budgets from heads with sparse concentration to those with more dispersed attention, improving the quality of post-eviction generation.

Our study begins by revisiting Top-$k$ eviction methods, demonstrating that, under specific budget allocation, these methods are equivalent to minimizing an upper bound of eviction loss between pre- and post-eviction attention outputs\footnote{For simplicity, we use $L_1$
  distance to quantify eviction loss; extensions to $L_p$
  norms are possible but orthogonal to this work.}. Based on this, we propose Ada-KV, a simple yet effective adaptive allocation strategy designed to integrate and enhance existing Top-$k$ eviction methods in a plug-and-play manner.
Guided by minimizing the theoretical upper bound of eviction loss, Ada-KV adapts to the varying concentration patterns across attention heads, significantly reducing practical eviction loss.

By integrating the Ada-KV into two state-of-the-art (SOTA) methods, SnapKV \cite{SnapKV} and Pyramid~\cite{PyramidInfer,PyramidKV}, we have developed two integration cases: {\it Ada-SnapKV} and {\it Ada-Pyramid} 
\footnote{
	 The plug-and-play nature of Ada-KV enables broad applicability to existing methods and future developments, extending well beyond the two cases presented in this paper. Section~\ref{sc:broad} provides further examples of its adoption in subsequent work.}.
These methods are evaluated using two comprehensive benchmarks, Ruler and LongBench, which include 13 and 16 datasets, respectively.
Experimental results demonstrate that Ada-KV could effectively improves performance across various budget settings under the widely studied \textbf{question-aware} compression scenario \cite{SnapKV,PyramidKV,H2o}.
 Furthermore, in the more challenging \textbf{question-agnostic} \cite{kvpress} scenario---where compression is applied without leveraging question-specific information, an important scenario that has been largely overlooked---Ada-KV demonstrates even greater advantages.
The main contributions are summarized as follows.

\begin{itemize}
  \item {\bf Adaptive Budget Allocation.}
  We identify a critical limitation in current KV cache eviction methods: their uniform budget allocation overlooks the unique attention patterns of individual heads. To address this, we propose Ada-KV, the first adaptive budget allocation strategy, which enhances budget utilization across individual heads, leading to more efficient cache eviction methods.

  \item {\bf Theoretical Insights.}
  We establish a theoretical framework for cache eviction by defining eviction loss and deriving its upper bound. This framework not only explains the optimization target of prior methods but also guides the design of Ada-KV, enabling principled and adaptive budget allocation.
  
  \item {\bf Empirical Advances.}
  With efficient CUDA kernel implementations, Ada-KV offers plug-and-play compatibility, enabling seamless adoption and performance enhancements in existing methods.
  We evaluate Ada-KV by integrating it into SOTA cache eviction methods, and demonstrate significant improvements across 29 datasets from Ruler and LongBench, all under both question-aware and question-agnostic scenarios.
  
\end{itemize}

\section{Related Works}

\subsection{Cache Eviction Methods}

In the long-sequence inference, the vast scale of the KV cache elements leads to a memory-bound situation, causing significant memory burden and I/O latency~\cite{wang2023catalyst}.
Numerous studies have sought to mitigate by reducing the cache size, notably through the eviction of non-critical cache elements\footnote{For additional related works, see Appendix \ref{apdx:additional_related}}.
These  methods are primarily divided into two categories: \textit{sliding window eviction} and \textit{Top-k eviction} methods.
The sliding window eviction methods~\cite{beltagy2020longformer,han2024lminfinitezeroshotextremelength,SLM}, exemplified by \textit{StreamingLLM}~\cite{SLM}, simply retain several initial cache elements and those within a sliding window, while evicting others. However, the undiscriminating sliding eviction of cache elements results in a significant reduction in generation quality.
In contrast, Top-k eviction methods~\cite{FastGen,liu2024scissorhands,ren2024efficacy,H2o,PyramidInfer,PyramidKV,SnapKV} identify and retain a selected set of $k$ critical cache elements based on attention weights, for enhancing the post-eviction generation quality.
The adaptive budget allocation presented in this paper, tailored for Top-$k$ Eviction Methods, enhances post-eviction generation quality within the same overall budget.

In the study of Top-$k$ eviction methods, early work like FastGen~\cite{FastGen} searches and combines multiple strategies, such as maintaining caches of special elements, punctuation elements, recent elements, and Top-$k$ selected elements, based on the characteristics of attention heads.
H2O, as the representation~\cite{H2o,liu2024scissorhands,ren2024efficacy}, develops a Top-$k$ based eviction scheme that leverages the query states of all tokens to identify critical cache elements.
However, under unidirectional mask in LLM computations, aggregating attention weights across all query states often causes recent KV cache elements to be mistakenly evicted, degrading the quality of subsequent generations. 
Recent works, such as SnapKV~\cite{SnapKV} address this issue by using query states within an observation window to identify critical elements, thereby achieving SOTA performance. Subsequent methods, such as Pyramid~\cite{PyramidInfer,PyramidKV}, further introduce budget allocation across different layers.
However, existing Top-$k$ eviction methods typically assign the overall budget uniformly across different heads.
In contrast, Ada-KV, which has demonstrated superior theoretical and empirical results through adaptive budget allocation, provides a novel strategy to optimizing these methods.

\subsection{Sparse Attention Methods}
Sparse attention methods are conceptually related to KV cache eviction, but differ fundamentally in their approach~\cite{jiang2024minference,li2025mminference,liu2024retrievalattention,chen2024arkvale,zhang2025pqcache,sun2025breaking}. The key difference is that KV cache eviction retains only a subset of the KV cache, while sparse attention methods keep all entries but selectively utilize only a critical subset during computation. Consequently, sparse attention methods do not reduce the memory footprint of the KV cache, thus often necessitating cache offloading to CPU memory~\cite{liu2024retrievalattention,chen2024arkvale,zhang2025pqcache} or even disk storage~\cite{sun2025breaking}. Additionally, the two technique lines are, in fact, orthogonal. Future research could explore first employing KV cache eviction to compress the cache to a certain proportion and then applying sparse attention for further acceleration. Ada-KV presented in this paper leverages head-wise adaptive allocation to enhance cache eviction methods, and this approach can similarly be applied in future work to enhance dynamic sparse attention methods.

\section{Methodology}
\label{sec:method}
We begin by providing a formal description of a multi-head self-attention layer (Section~\ref{sec:pre}).
Building on this, we theoretically revisit the foundational principles of existing Top-$k$ eviction methods by introducing an $L_1$ eviction loss metric (Section~\ref{sec:revisiting}).
Inspired by theoretical findings, we propose a simple yet effective strategy for adaptive budget allocation, which is proven to outperform traditional uniform budget allocation (Section~\ref{sec:optimizing}).
We further demonstrate its compatibility with existing Top-$k$ eviction methods (Section~\ref{sec:integration}) and present an efficient implementation (Section~\ref{sec:imp}).

\subsection{Preliminaries}
\label{sec:pre}
LLMs operate through an autoregressive generation process, where each step relies on the last token to predict the next one. Let $X\in \mathbb{R}^{n \times d}$ represent the embedding matrix containing all tokens in the sequence, and let $x \in \mathbb{R}^{1 \times d}$ be the embedding of the last token used as input at the current time step.
Using the notation system from ~\cite{liu2023dejavucontextualsparsity} with
$h$ attention heads, we provide a formal description of one multi-head self-attention layer.
For each head $i \in [1, h]$, the transformation matrices $W_i^Q$, $W_i^K$, $W_i^V \in \mathbb{R}^{d \times d_h}$ map token embeddings to their respective Query, Key, and Value states, while the final output matrix $W_i^O \, \in \mathbb{R}^{d_h \times d}$ transforms the intermediate result to the output hidden states. At each time step, the previous KV cache elements of head $i$ are initialized as:
$
	K_i = XW_i^K,V_i=XW_i^V
$.
Then, the embedding of input token $x$ is mapped to its respective  Query, Key, and Value states  for each head, and the previous KV cache is updated accordingly:
{\small
\begin{equation}
	q_i = xW_i^Q,k_i = xW_i^K,v_i = xW_i^V
    \: and \:
	K_i = Cat[K_i:k_i],V_i=Cat[V_i:v_i]
\end{equation}
}
Finally, the output $y \in \mathbb{R}^{1 \times d}$ is computed using the attention weights $A_i \in \mathbb{R}^{1 \times n}$ as follows\footnote{The scaling factor $\sqrt{d_h}$ is omitted for simplification.}:
{\small
\begin{equation}
	y = \sum_{i \in [1, h]} A_i V_i W_i^O \: \text{where} \: 	A_i = \text{softmax}(q_{i}K_i^{T})
\end{equation}
}
\subsection{Theoretical Foundation: Revisiting Top-$k$ Methods with Bounded Eviction Loss}
\label{sec:revisiting}
Top-k eviction methods typically presuppose the stability of critical cache elements during future generation process, to facilitate cache eviction~\cite{H2o,SnapKV,PyramidInfer,PyramidKV}. SOTA methods~\cite{SnapKV,PyramidInfer,PyramidKV} commonly leverage query states of a series of tokens within an observation window to calculate observed attention weights with past KV cache elements. These weights, combined with Top-k selections, approximate the identification of cache elements critical in subsequent generations.

For ease of presentation, we assume a window size of 1, implying the eviction procedure relies solely on the attention weights $A$ between the past cache elements and the query state of the last token for the detection of critical cache elements. A set of indicator variables $\left\{\mathcal{I}_i \in \mathbb{R}^{1 \times n}\right\}$\footnote{Given that the first dimension of $\mathcal{I}_i$ is 1, $\mathcal{I}_i^j$ is used to simplify the notation for $\mathcal{I}_i(1, j)$. Similarly, $A_i^j$ follows the same notation.} represent the eviction decision, with allocated budgets $\left\{B_i\right\}$ for all heads $\{i \in [1,h]\}$, where $B = \sum_{i \in [1, h]} B_i$ is the overall budget for one attention layer:
\begin{equation}
\text{Eviction Decision:} \quad	\mathcal{I}_i^j  =
	\begin{cases}
		1 	\quad	  \text{Retain  $K_i^j \: \text{and} \: V_i^j$,   }\\
		0   \quad    \text{Evict  $K_i^j \: \text{and} \: V_i^j$},\\
	\end{cases}
\end{equation}
where $\mathcal{I}_i^j$ indicates whether the $j\text{th} \in [1, n]$ KV cache element in $K_i,V_i \in \mathbb{R}^{n \times d_h}$ is evicted for head $i$.
Thus, only a budget size $B_i$ of cache elements is retained for head $i$: $\sum_{j \in [1,n]} \mathcal{I}_i^j = B_i$.
Then, the post-eviction output $\hat{y}$ of multi-head self-attention mechanism is:
 \begin{equation}
 	\hat{y} = \sum_{i \in [1, h]} \hat{A}_i V_i W_i^O,
 	\text{where} \: \hat{A}_i  = \text{softmax}(-\infty \odot ( \textbf{1} - \mathcal{I}_i) + q_iK_i^{T}),
 \end{equation}
and $\odot$ denotes element-wise multiplication.

The degradation in generation quality after cache eviction stems from changes in the attention output. We quantify the eviction loss as the $L_1$ distance between the pre- and post-eviction outputs of the self-attention mechanisms:
\begin{equation}
	L_1 \: \text{Eviction Loss} = ||y-\hat{y}||_1
\end{equation}
Utilizing the row norm of the matrix, we derive an upper bound $\epsilon$ for the $L_1$ Eviction Loss in Theorem \ref{thm:bound}. For a detailed proof, please refer to Appendix \ref{apdx:prof_bound}.
\begin{theorem}
	\label{thm:bound}
	  The $L_1$ eviction loss  can be bounded by $\epsilon$:
	{\small
	\begin{equation}
	L_1 \: \text{Eviction Loss} \leq \epsilon = 2hC - 2C\sum_{i\in [1, h]}\sum_{j \in [1,n ]} \mathcal{I}_i^jA_i^j
	\end{equation}	
	}
	where $C=Max\left\{\lVert V_iW_i^O\rVert_{\infty} \right\}$ is a constant number, representing the max row norm .
		\vspace{-0.2cm}
\end{theorem}

Essentially, existing cache eviction methods are equivalent to minimizing the upper bound $\epsilon$, as defined in Theorem~\ref{thm:bound}. One such method, Top-$k$ cache eviction method~\cite{H2o,SnapKV,PyramidInfer,PyramidKV} stands out as a prominent approach, designed to selectively retain the most critical cache elements by  prioritizing those with the highest attention weights. Specifically, the Top-$k$ eviction decision is formulated as :
	\begin{equation*}
		\text{Top-$k$ Eviction Decision } \:
		\mathcal{I}_i^{*j}  =
		\begin{cases}
			1 	\:	  \text{ if $A_i^j \in \text{Top-$k$}(A_i, k= B_i)$,   }\\
			0 \: \text{Evict  $K_i^j \: \text{and} \: V_i^j$}.\\
		\end{cases}
	\end{equation*}
{
This approach ensures that each head $i$ retains only the top $B_i$ critical cache elements based on attention weights $A_i^j$, evicting the rest. In Theorem \ref{thm:upper_bound}, we prove that the Top-$k$ eviction decision $\left\{ \mathcal{I}^*_i \right\}$ achieves the tightest upper bound $\epsilon^*$, establishing its effectiveness, under given budget allocation $\left\{B_i\right\}$. For a detailed proof, please refer to Appendix \ref{apdx:proof_upper_bound}.
}

\begin{theorem}
	\label{thm:upper_bound}
 The Top-$k$ cache eviction $\left\{ \mathcal{I}^*_i \right\}$ minimizes the upper bound $\epsilon$ of $L_1$ eviction loss, resulting in $\epsilon^*$:
		\begin{equation}
		\text{Top-k eviction decision} \:	\left\{ \mathcal{I}^*_i \right\} = \argmin_{\left\{ \mathcal{I}_i \right\}} \epsilon  \:\: and \:\:
			\epsilon^*= \min_{\left\{ \mathcal{I}_i \right\}} \epsilon = 2hC - 2C\sum_{i\in [1, h]}\sum_{j \in [1, n]}^{A_i^j \in \text{Top-$k$}(A_i,k=B_i)} A_i^j
		\end{equation} 
\end{theorem}

{
Theorems \ref{thm:bound} and \ref{thm:upper_bound} establish the theoretical basis for Top-$k$ eviction methods under any given budget allocation $\left\{B_i\right\}$.
In the sequel, we show how an adaptive strategy can further minimize $\epsilon^*$, outperforming uniform budget allocation (i.e., $\left\{B_i = B/h\right\}$).
}

\subsection{Optimizing Top-$k$ Methods with Adaptive Budget Allocation}
\label{sec:optimizing}
		\begin{algorithm}[H]  
			\caption{Ada-KV: Adaptive Budget Allocation}  
			\label{alg:allocation}  
			\textbf{Input}: Total budget: $B$; Attention weights for head $i$: $\left\{A_{i}\right\}$ \\
			\textbf{Output}: Allocated budgets  $\left\{B^*_i \right\}$  
			\begin{algorithmic}[1]  
				\STATE Concatenate attention weights across heads $A=\text{Cat}\left(\{A_i\}\right)$  
				\STATE Select top $B$ weights from $A$: Top-$k$($A,k=B$)  
				\STATE Count number of selected weights for each head $i$: $\{f_i\}$  
				\STATE Set the allocated budgets as $\{B^*_i = f_i\}$ \\
				\textbf{Return} allocated budgets $\{B^*_i\}$  
			\end{algorithmic}  
		\end{algorithm}  

Here, we propose the first adaptive budget allocation strategy for Top-$k$ eviction methods in Algorithm \ref{alg:allocation}, designed to further minimize the upper bound $\epsilon^*$ in Theorem \ref{thm:upper_bound}.
It begins by identifying the $B$ largest attention weights across all heads within a single layer. Based on the frequency of  selection in each head, the strategy dynamically determines the  budget allocation $ \{ B^*_i \} $. Thereby, attention-sparse heads, where most attention weights are concentrated on only a few cache elements, are assigned a smaller budget. The saved budget is  reallocated to attention-dispersed heads with more widespread concentration patterns. Thus it effectively adapts to the distinct attention patterns inherent to each head. Below, we further demonstrate the superiority of this strategy theoretically and empirically in subsequent sections.

\textbf{Theoretical Advantages of Adaptive Allocation.}
Given the adaptive budget allocation  $\{ B^*_i\}$, the upper bound associated with the Top-$k$ eviction decision $\left\{\mathcal{I}_i^* \right\}$  is expressed as  $\epsilon^{**}$:
\begin{equation}
	\epsilon^{**}=  2hC - 2C\sum_{i\in [1, h]}\sum_{j \in [1, n]}^{A_i^j \in \text{Top-$k$}(A_i,B^*_i)} A_i^j
\end{equation}
Theorem~\ref{thm:better} demonstrates that the adaptive budget allocation achieves the minimum upper bound  $\epsilon^{**}$  compared to any other allocation strategy, including the commonly used uniform allocation.  A detailed proof is provided in Appendix \ref{apdx:proof_better}.
\begin{theorem}
	\label{thm:better}
	The adaptive budget allocation  $\{ B^*_i\}$, derived from Algorithm~\ref{alg:allocation} achieves the minimal upper bound $\epsilon^{**}$ for loss associated with Top-$k$ eviction strategies:
		$
			\epsilon^{**} = \min_{\left\{B_i\right\}} \epsilon^{*}
	$
	\vspace{-0.2cm}
\end{theorem}
Although a lower upper bound does not strictly guarantee reduced eviction loss, it is generally a key objective in algorithm design, as optimizing a tighter bound typically leads to better results.

\textbf{Empirical Evidence for Adaptive Allocation.}
Empirically, the adaptive budget allocation described in Algorithm~\ref{alg:allocation} assigns larger budgets to dispersed heads and conservatively adjust budgets for sparse heads. This approach aligns well with the significant variation in attention concentration across heads, as shown in Figure~\ref{fig:accu_weights}.  By adjusting the head-wise budget, this strategy effectively reduces practical eviction loss. Figure~\ref{fig:loss_comparation} provides a detailed visualization, showing that adaptive budget allocation consistently reduces practical eviction loss across most samples under the same cache budget. These results demonstrate the effectiveness of the proposed method in real-world scenarios.

\begin{figure*}[t]  
	\centering  
	\vspace{-0.3cm}
	\begin{minipage}[t]{0.60\textwidth}  
		\vspace{0pt}  
		\begin{algorithm}[H]  
			\small  
			\caption{Ada-SnapKV/Ada-Pyramid in One Layer}  
			\label{alg:ada_snap}  
			\textbf{Input}: total budget $B$, tokens in observation window $X^{win} \in \mathbb{R}^{win * d}$, cache in observation window $\left\{K_i^{win},V_i^{win}\right\}$, cache outside observation window $\left\{K_i,V_i\right\}$\\
			\textbf{Output}: retained cache  $\hat{K}_i,\hat{V}_i$  
			\begin{algorithmic}[1]  
				\FOR{$i \gets 1$ to $h$}  
				\STATE $Q_i^{win} = X^{win}W_i^Q$  
				\STATE $\bar{A_i} =softmax(Q_i^{win} K_i^T)$  
				\STATE $\bar{A_i} =\bar{A_i} .maxpooling(dim=1).mean(dim=0)$  
				\ENDFOR  
				\STATE $B = B - winsize \times h$  
				\STATE Derive budget allocation  $\left\{B_i^*\right\}$ using Algorithm \ref{alg:allocation}($B,\left\{\bar{A}_i\right\} $)  
				\STATE Safeguard $\left\{B_i^*\right\} = \alpha \times \left\{B_i^*\right\} + (1-\alpha) \times (B/h)$  
				\STATE Determine the Top-$k$ eviction decision $\left\{\mathcal{I}_i^* \right\}$ based on $\left\{B_i^*\right\}$  
				\STATE Select $\left\{\hat{K}_i,\hat{V}_i\right\}$ from $\left\{K_i,V_i\right\}$ according to { $\left\{{\mathcal{I}_i^*}\right\}$} 		  
				\STATE  $\left\{\hat{K}_i,\hat{V}_i\right\} = \text{Cat}(\left\{\hat{K}_i,\hat{V}_i\right\},\left\{K_i^{win},V_i^{win}\right\})$\\
				\textbf{Return}  retained cache $\hat{K}_i,\hat{V}_i$  
			\end{algorithmic}  
		\end{algorithm}  
	\end{minipage}  
	\hfill  
	\begin{minipage}[t]{0.38\textwidth}  
			\vspace{0pt}  
			\begin{subfigure}[b]{\linewidth}  
				\centering  
				\includegraphics[width=0.7\linewidth]{./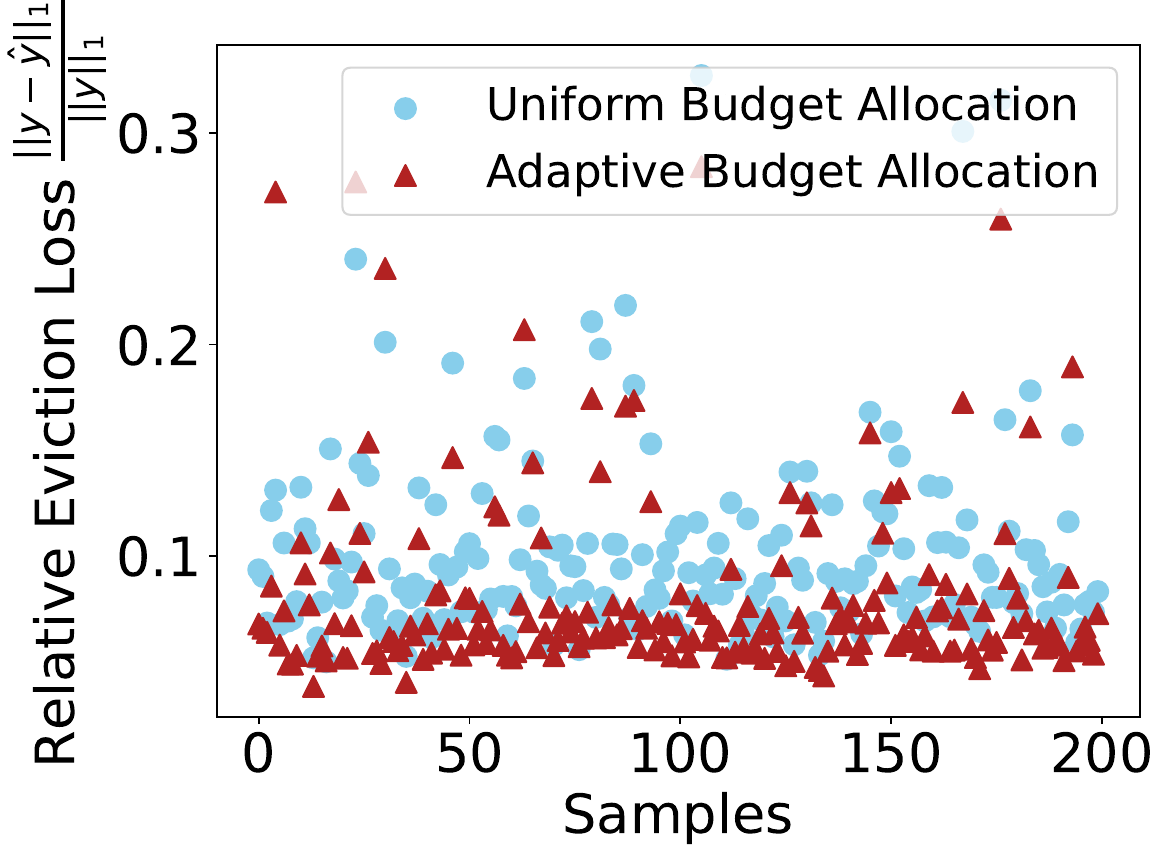}  
				\vspace{-0.2cm}  
				 \captionsetup{labelformat=empty} 
				\caption{\small $B$ = 20\% cache size}    
			\end{subfigure}  
			\begin{subfigure}[b]{\linewidth}  
				\centering  
				\includegraphics[width=0.7\linewidth]{./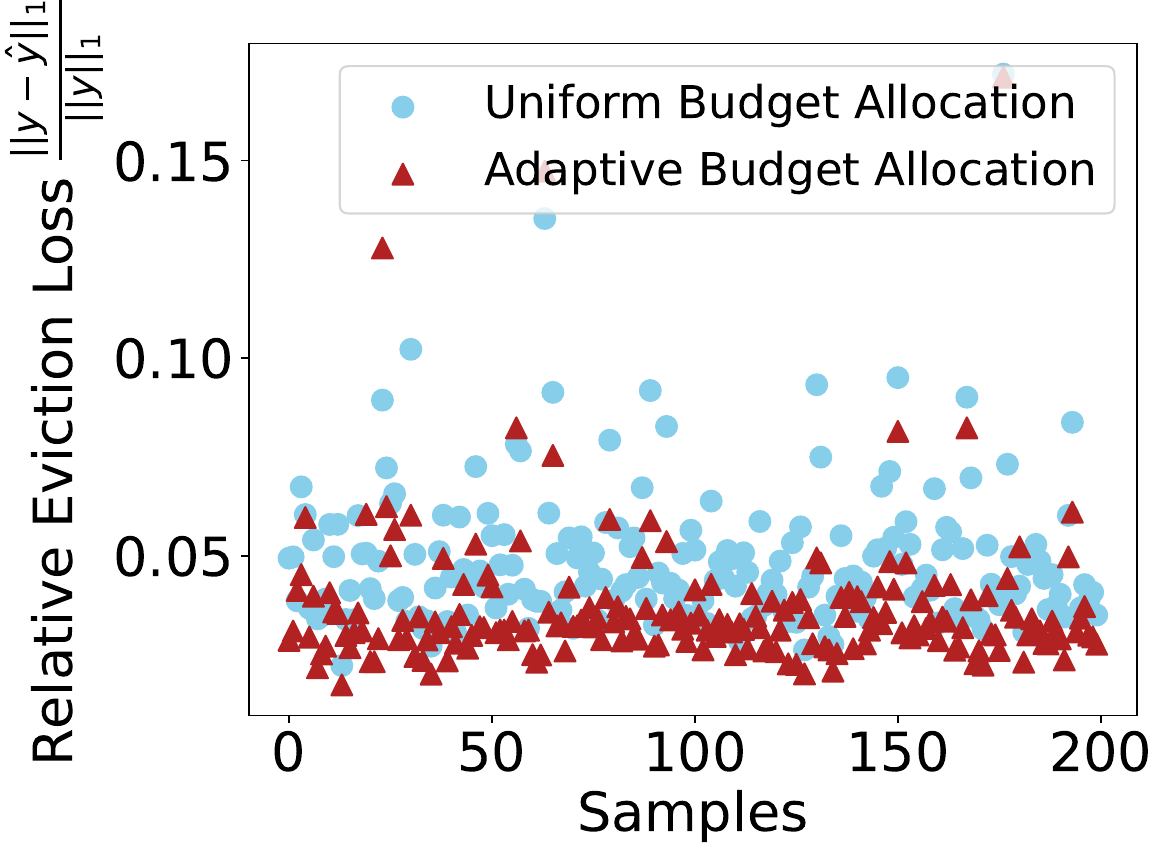}  
				\vspace{-0.2cm}  
				 \captionsetup{labelformat=empty} 
				\caption{\small $B$ = 40\% cache size}  
				\vspace{-0.2cm}  
			\end{subfigure}    
			\caption{Adaptive allocation yields lower practical eviction losses (Llama-3.1-8B-Instruct on Qasper Dataset).}
			\label{fig:loss_comparation}  
			\vspace{-0.3cm} 
	\end{minipage}
	\vspace{-0.3cm}  
\end{figure*}

\subsection{Integration into Existing Cache Eviction Methods}
\label{sec:integration}
We demonstrate the Plug-and-Play compatibility of Ada-KV strategy with two SOTA methods, SnapKV and Pyramid, by integrating it into their existing Top-$k$ eviction frameworks to create enhanced versions: Ada-SnapKV and Ada-Pyramid.
Both SnapKV and Pyramid utilize tokens $X^{win} \in \mathbb{R}^{winsize * d}$ from a recent observation window (typically size 32) to identify and evict the less crucial elements in past KV cache.
SnapKV excels under larger budget scenarios, while Pyramid is optimized for constrained budget conditions.
This is because Pyramid uses a pyramidal budget distribution across attention layers via pre-set hyperparameters, prioritizing shallower layers. Thus, for a given layer with a total budget of $B$, their eviction algorithms are identical. However, both methods allocate budgets evenly across all heads within one layer.

Incorporating our adaptive allocation, as outlined in Algorithm \ref{alg:ada_snap}, we modify these methods to better manage budget allocation at the head level.
This integration occurs prior to the eviction process in each layer, where our strategy adaptively adjusts budget allocations based on the observed attention weights among heads, as shown in Line 7. Overall, they first calculate the observed attention weights $\bar{A}_i$ of past cache elements using the query states within the observation window. A max pooling layer processes these weights to preserve essential information~\cite{SnapKV}, followed by a Top-$k$ selection of past cache elements outside the observation window.
These selected elements, along with others within the observation window, are retained, while the rest are evicted.
Moreover, we introduce a safeguard hyper-parameter, $\alpha$ (defaulted to 0.2), to prevent the allocation of excessively small budgets to highly sparse heads, thereby enhancing fault tolerance for presupposed critical stability~\cite{SnapKV,H2o}.

\subsection{Implementation of Computation under Adaptive Budget Allocation}
\label{sec:imp}
\textbf{Variable-length Attention with Variable-sized Cache Elements.} Adaptive allocation improves cache eviction quality but introduces challenges for efficient computation due to variable-sized cache elements across attention heads. We found that the variable-length FlashAttention technique~\cite{dao2022flashattention,dao2023flashattention}, widely adopted in many inference frameworks for continuous batching~\cite{kwon2023efficient}, could support efficient computation under adaptive allocation. To further enable this technique, we implement a flattened cache storage layout that concatenates the caches of all attention heads within a layer into a single tensor structure. This layout, combined with a custom CUDA kernel, enables efficient cache update operations. As demonstrated in Section \ref{sc:exp_efficiency}, these components work in synergy to maintain computational efficiency under adaptive allocation, comparable to that of conventional FlashAttention.

\textbf{Compatibility with Group Query Attention.}
Existing SOTA LLMs like Llama~\cite{llama3} and Mistral~\cite{jiang2023mistral}, have  employed Group Query Attention (GQA)~\cite{GQA} technique to reduce KV cache sizes.
However, existing cache eviction methods, such as SnapKV and Pyramid, lack GQA compatibility, redundantly replicating grouped KV caches across heads.
We implement a simple GQA-compatible  mechanism that uses the mean attention weight within each group as the selection criterion, eliminating redundancy. This enables SnapKV, Pyramid, and their adaptive variants---Ada-SnapKV and Ada-Pyramid---to achieve significant cache size reductions, such as a 4x reduction in Llama-3.1-8B.

\section{Experiments}
\label{sec:exp}
\subsection{Settings}
\label{sec:settings}
{\bf Base Models.} We employ two open-source base models: Llama-3.1-8B-Instruct~\cite{llama3} and Mistral-7B-instruct-v0.2~\cite{jiang2023mistral}. These models, widely adopted due to moderate parameter sizes and impressive performance on long-sequence tasks, both leverage the GQA~\cite{GQA} technique, which reduces the KV cache size to one-quarter of its original.

{\bf Datasets.} We conduct evaluations using two popular benchmarks: Ruler~\cite{hsieh2024ruler} and LongBench~\cite{bai2023longbench}. Ruler includes 13 long-sequence tasks, primarily variations of the Needle-in-a-Haystack test, offering a range of difficulty levels for comprehensive model assessment. According to the official evaluation~\cite{hsieh2024ruler}, the maximum effective length for Mistral-7B-instruct-v0.2 and Llama-3.1-8B-Instruct with the full KV Cache is 16K. Therefore, we select the 16K Ruler Benchmark as the primary benchmark for evaluation. Additionally, we also conduct evaluations using LongBench, including 16 datasets covering multiple task domains. Detailed dataset information is provided in Appendix \ref{apdx:ruler_info} and \ref{apdx:longbench_info}. 

{\bf Baselines.} We select the \textit{SnapKV}\cite{SnapKV} and \textit{Pyramid}\cite{PyramidInfer, PyramidKV} as the primary baselines, given that they are SOTA methods and foundational bases for our \textit{Ada-SnapKV} and \textit{Ada-Pyramid} methods. All these methods implement GQA compatibility as stated in Section~\ref{sec:imp}, reducing cache size to a quarter of previous naive implementations.
Additionally, \textit{StreamingLLM}~\cite{SLM} is also included as a representative of Sliding Window Eviction Methods for reference.

{\bf Parameters.} Cache eviction methods can be applied whenever KV cache compression is required. Following standard practices in prior studies~\cite{SnapKV, PyramidInfer, PyramidKV}, we perform cache eviction after the prefilling phase of each layer for consistent comparison. In all experiments, the hyperparameter $\alpha$ for adaptive budget allocation is fixed at 0.2. Both \textit{Ada-SnapKV} and \textit{Ada-Pyramid}, as well as \textit{SnapKV} and \textit{Pyramid}, follow the configuration settings outlined in \cite{SnapKV}, including an observation window size of 32 and a max pooling kernel size of 7. 

\begin{figure*}[t!]
	\vspace{-0.1cm}
	\begin{minipage}{\linewidth}
	\centering
	\includegraphics[width=0.85\linewidth]{./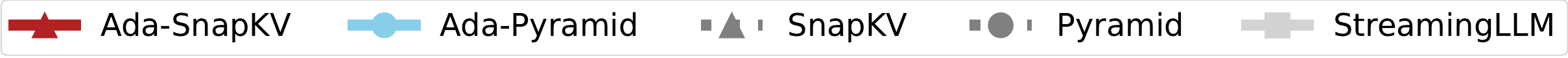}
	\end{minipage}
	\centering
	\begin{subfigure}[b]{0.24\linewidth}
		\centering
		\includegraphics[width=\linewidth]{./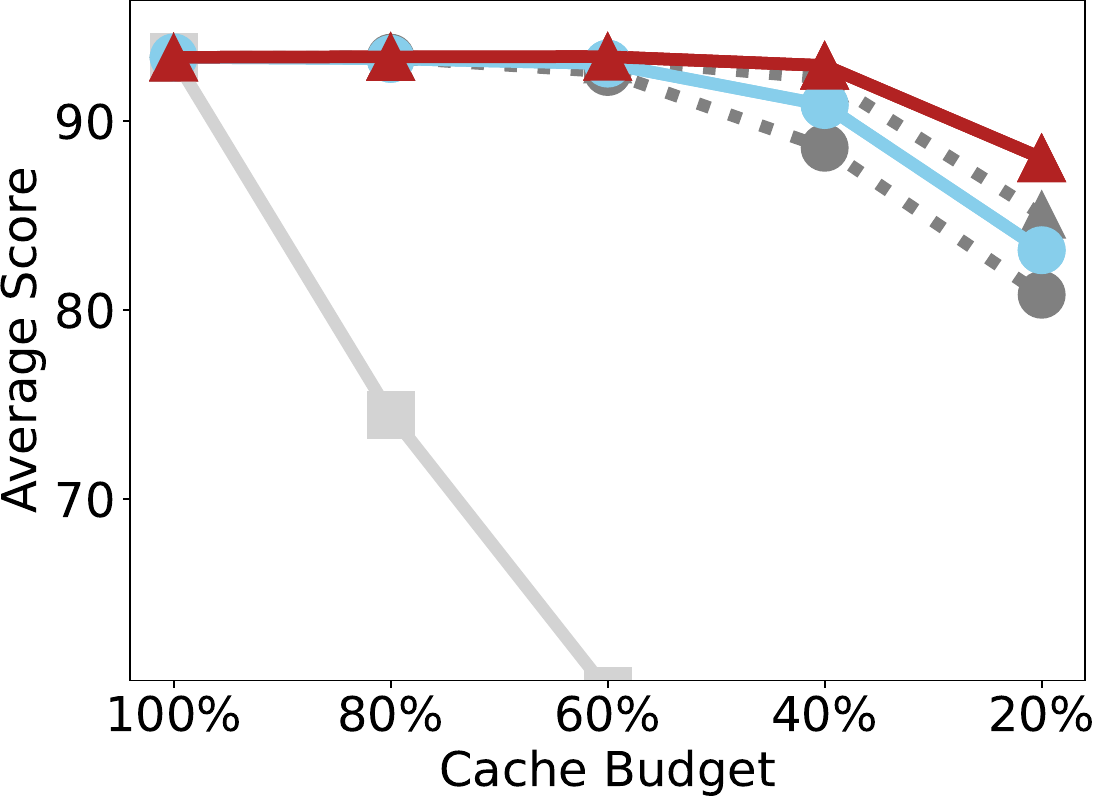}
		\vspace{-0.4cm}
		\caption{\centering  Question-aware, \newline Llama-3.1-8B-Instruct}
		\label{fig:aware_llama}
	\end{subfigure}
	\begin{subfigure}[b]{0.24\linewidth}
		\centering
		\includegraphics[width=\linewidth]{./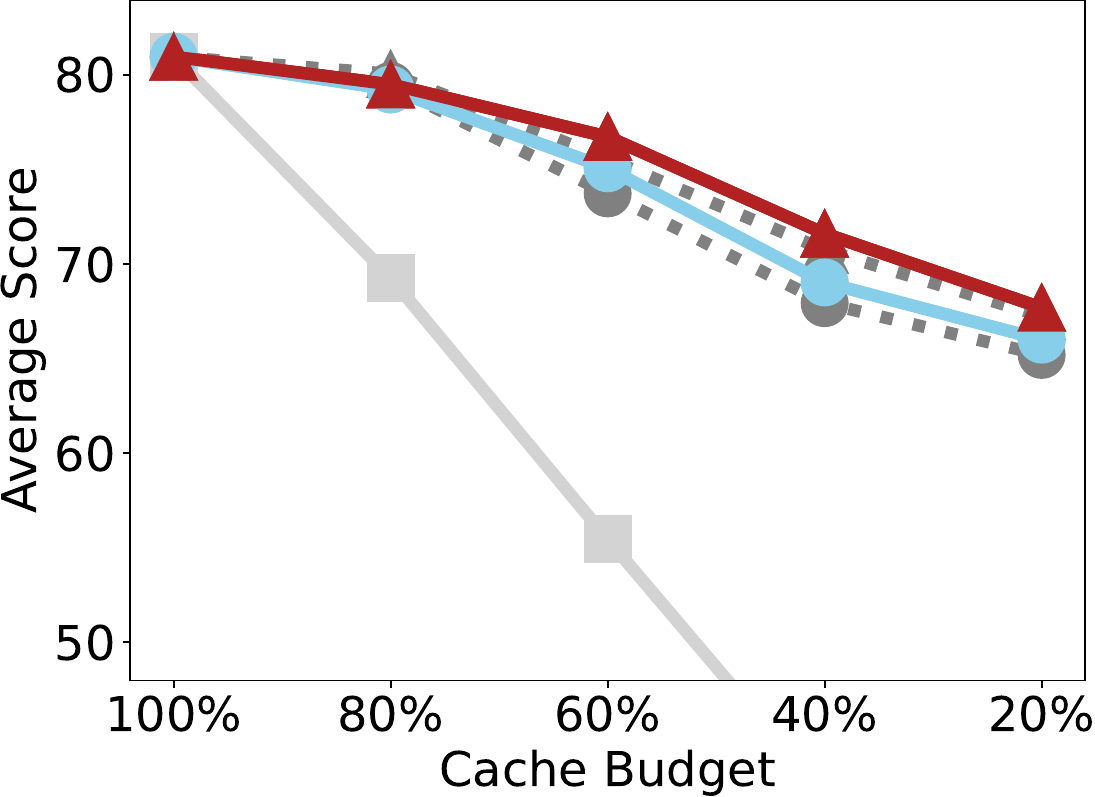}
		\vspace{-0.4cm}
		\caption{\centering  Question-aware, \newline Mistral-7B-Instruct-v0.2}
	\end{subfigure}
	\begin{subfigure}[b]{0.24\linewidth}
		\centering
		\includegraphics[width=\linewidth]{./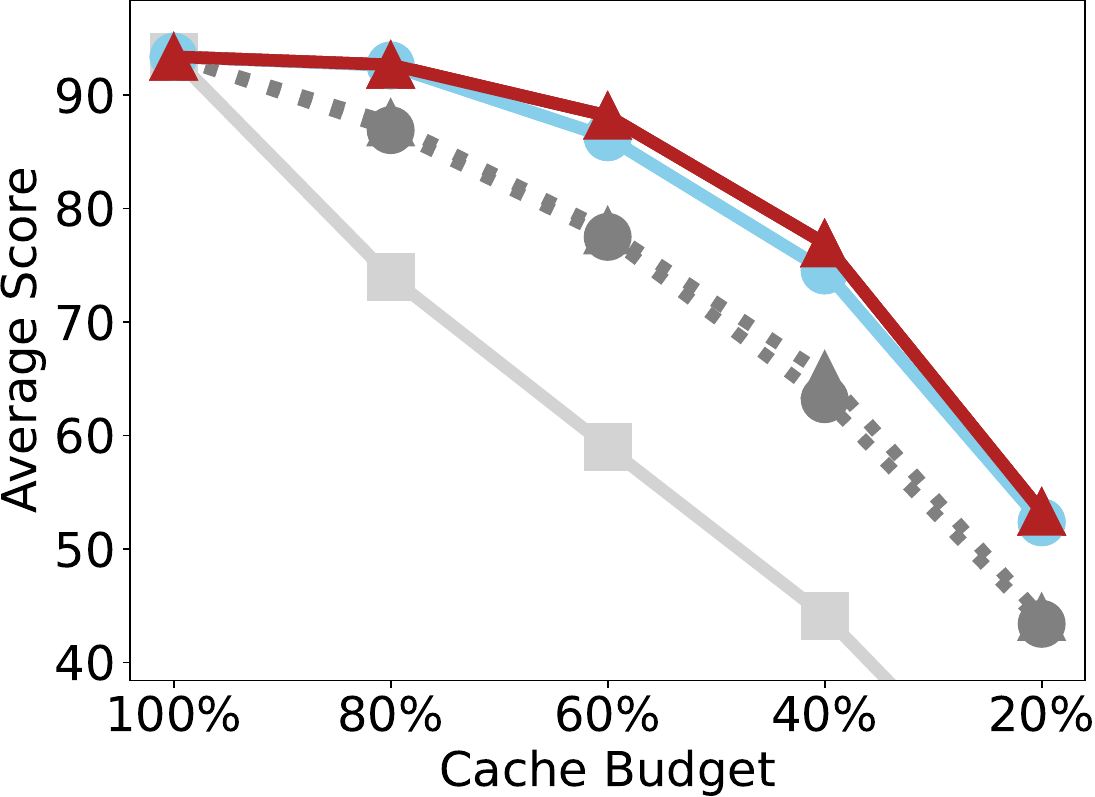}
		\vspace{-0.4cm}
		\caption{ \centering  Question-agnostic, \newline Llama-3.1-8B-Instruct}
		\label{fig:agnostic_llama}
	\end{subfigure}
	\begin{subfigure}[b]{0.24\linewidth}
		\centering
		\includegraphics[width=\linewidth]{./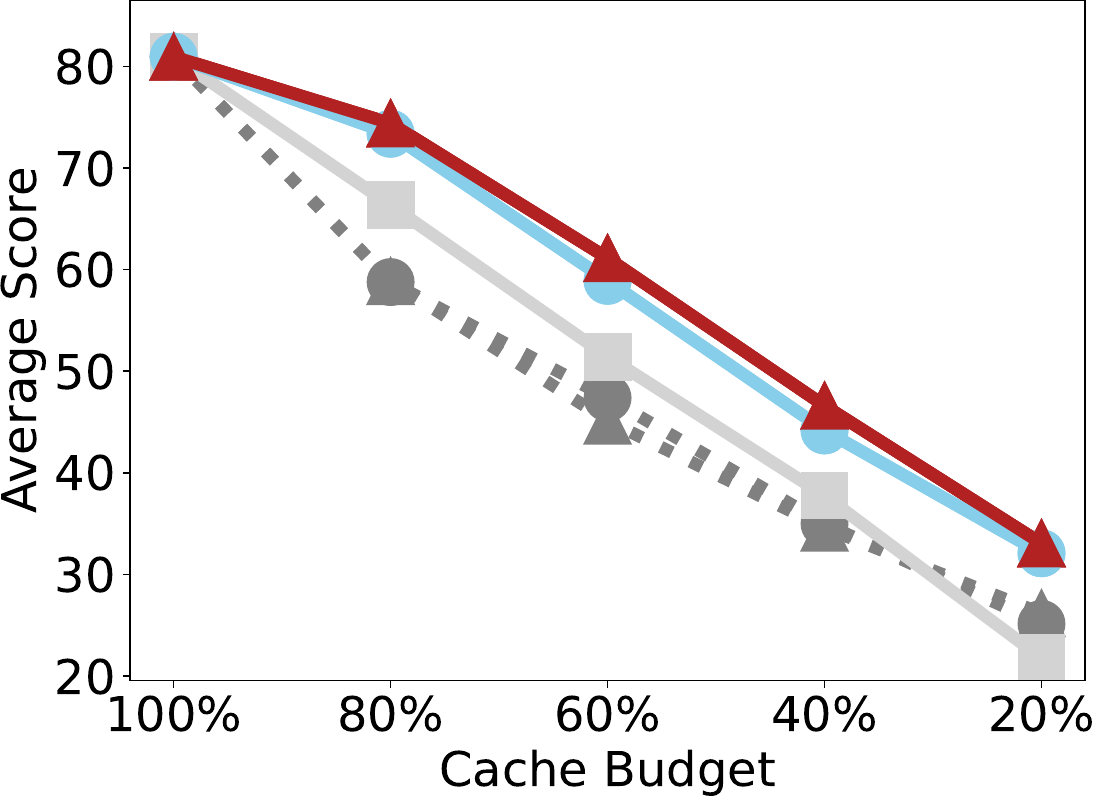}
		\vspace{-0.4cm}
		\caption{ \centering Question-agnostic, \newline Mistral-7B-Instruct-v0.2}
	\end{subfigure}
	\vspace{-0.1cm}
	\caption{Average Score on Ruler Among 13 Datasets.}
	\label{fig:ruler_ave}
	\vspace{-0.2cm}
\end{figure*}
	
\begin{figure*}[t!]
	\vspace{-0.1cm}
	\begin{minipage}{\linewidth}
	\centering
	\includegraphics[width=0.85\linewidth]{./Figures/ruler_per_dataset_group_by_wq/legend.pdf}
	\end{minipage}
	\begin{subfigure}[b]{0.16\linewidth}
	\centering
	\includegraphics[width=\linewidth]{./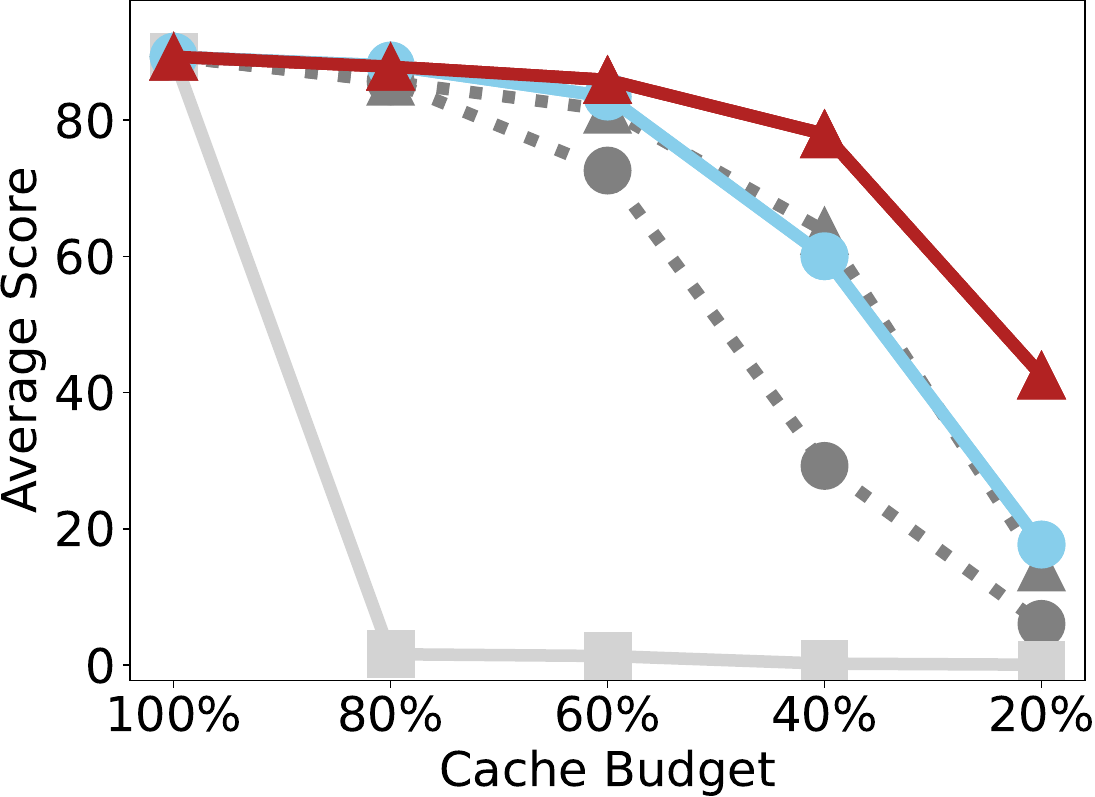}
    \vspace{-0.4cm}
	\caption{CWE}
	\end{subfigure}
	\begin{subfigure}[b]{0.16\linewidth}
	\centering
	\includegraphics[width=\linewidth]{./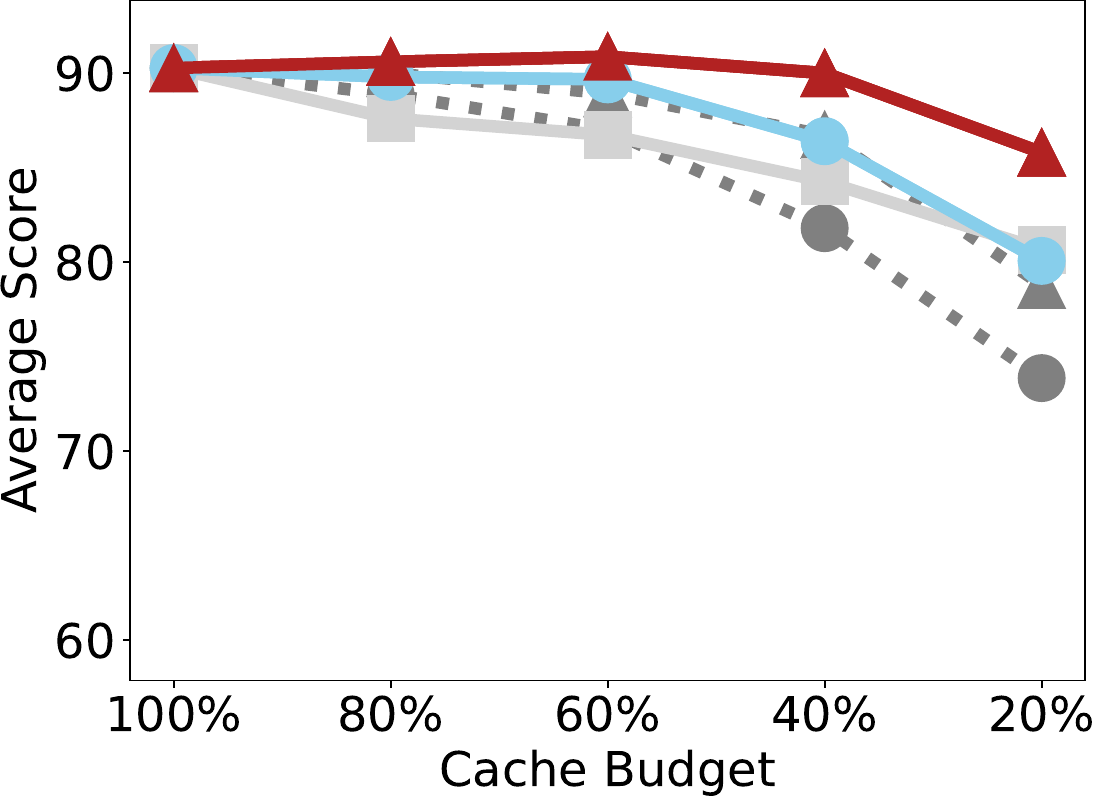}
	\vspace{-0.4cm}
	\caption{FWE}
	\end{subfigure}
	\begin{subfigure}[b]{0.16\linewidth}
	\centering
	\includegraphics[width=\linewidth]{./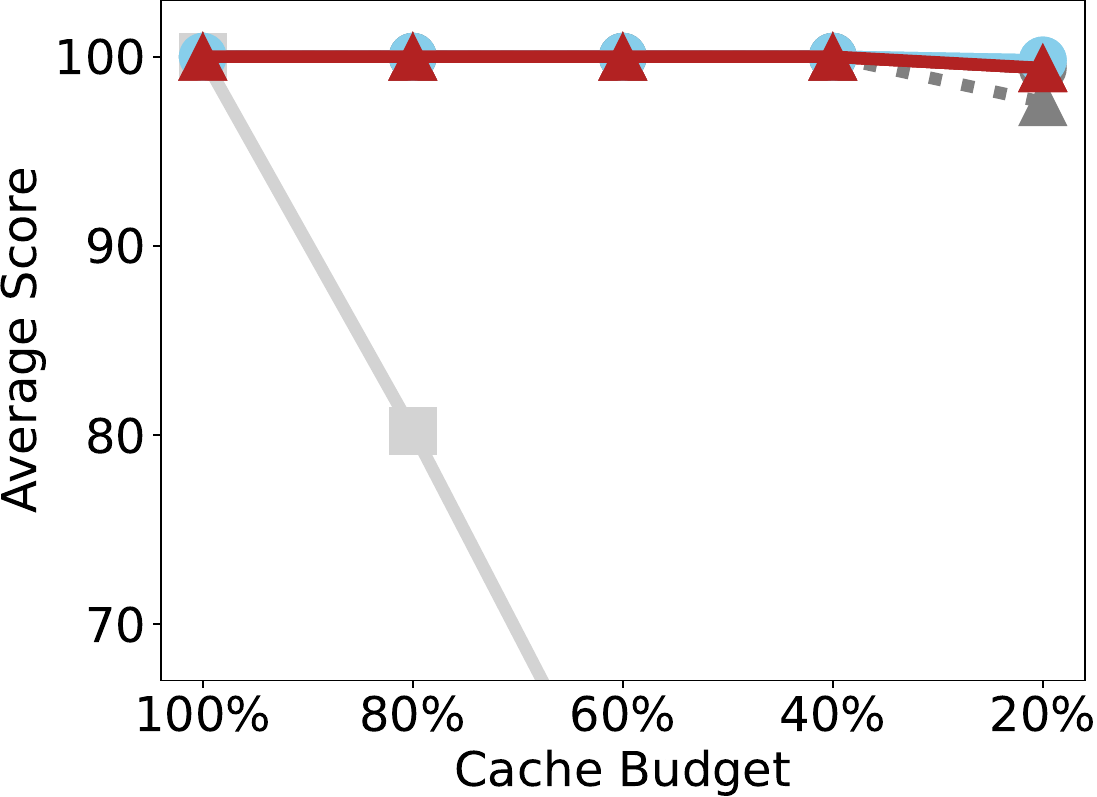}
	\vspace{-0.4cm}
	\caption{S-NIAH-1}
	\end{subfigure}
	\begin{subfigure}[b]{0.16\linewidth}
	\centering
	\includegraphics[width=\linewidth]{./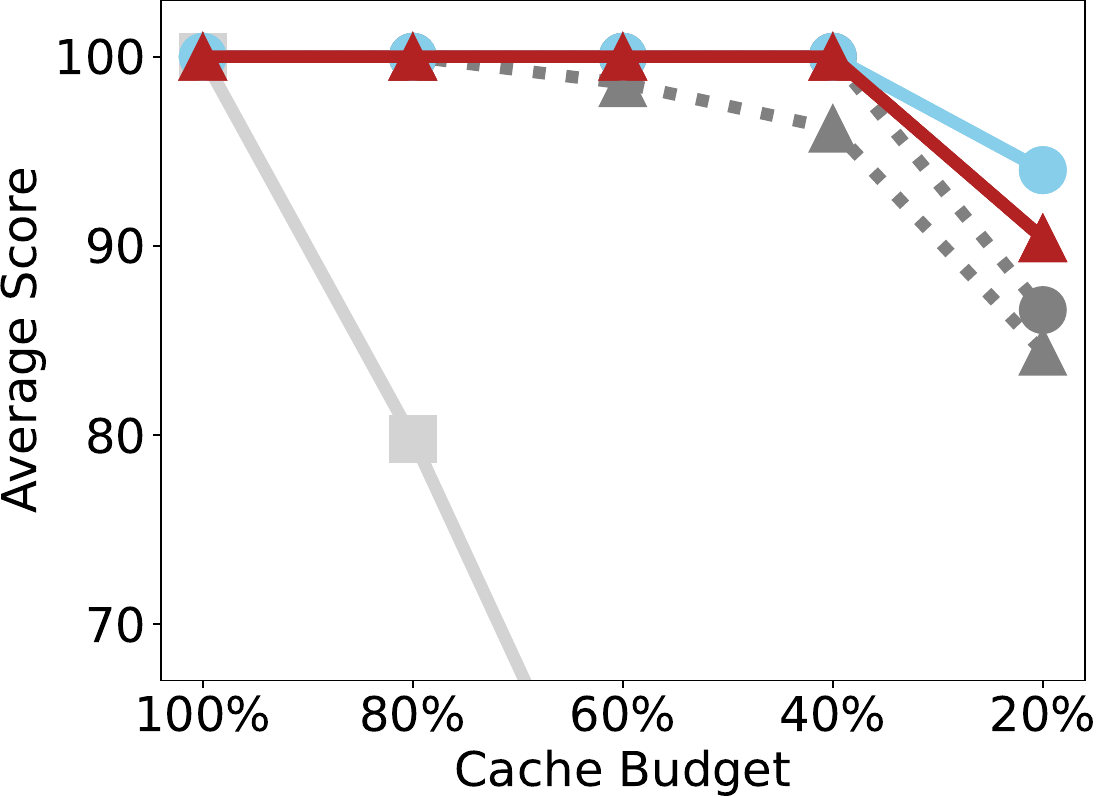}
	\vspace{-0.4cm}
	\caption{S-NIAH-2}
	\end{subfigure}
	\begin{subfigure}[b]{0.16\linewidth}
	\centering
	\includegraphics[width=\linewidth]{./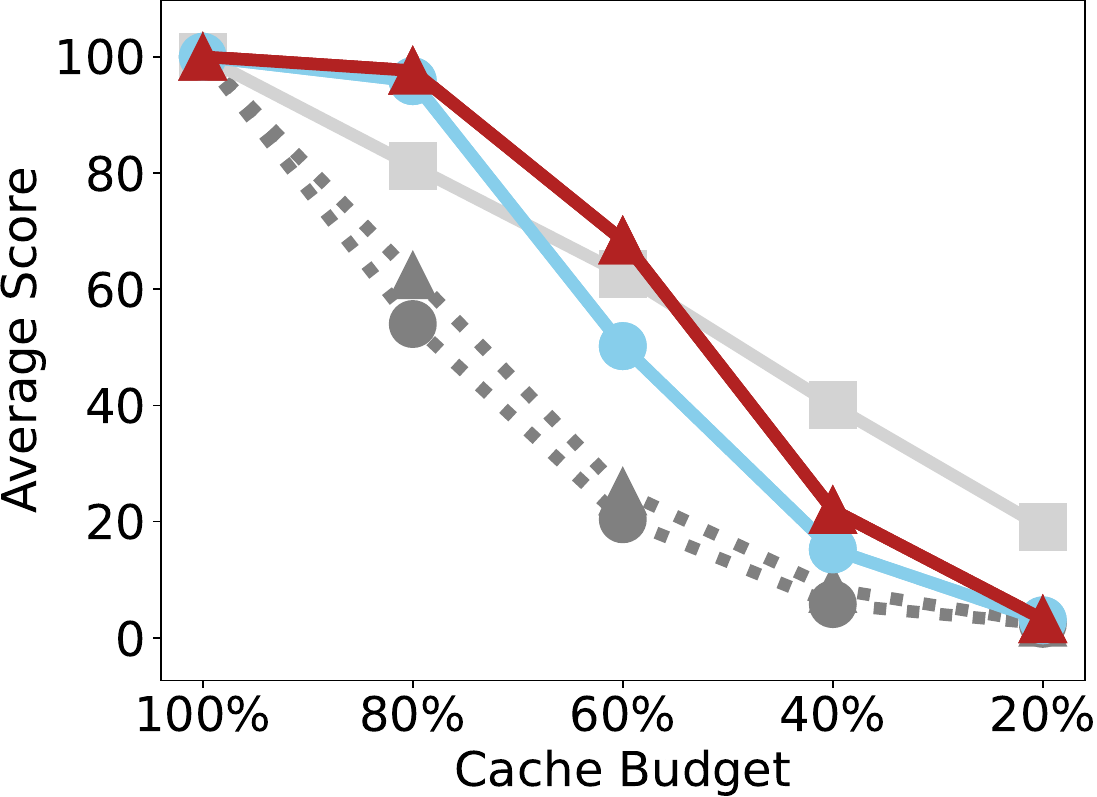}
	\vspace{-0.4cm}
	\caption{S-NIAH-3}
	\label{subfig:sniah3}
	\end{subfigure}
	\begin{subfigure}[b]{0.16\linewidth}
	\centering
	\includegraphics[width=\linewidth]{./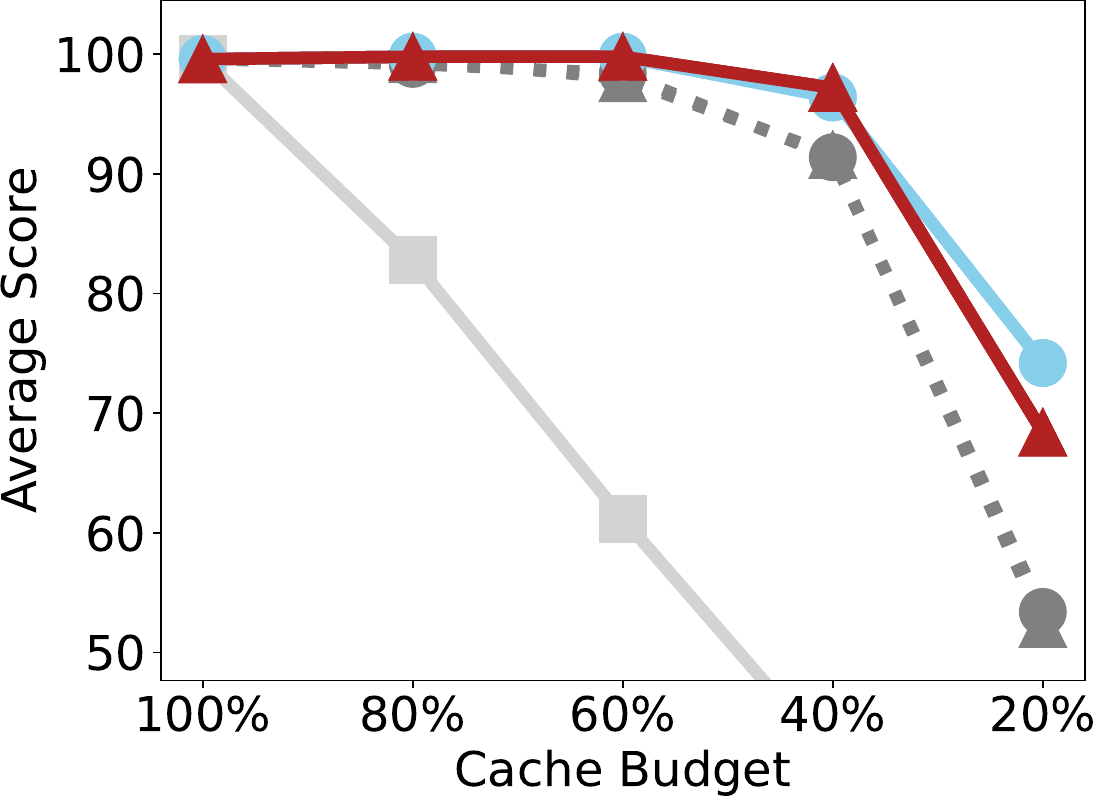}
	\vspace{-0.4cm}
	\caption{MK-NIAH-1}
	\end{subfigure}
	\begin{subfigure}[b]{0.16\linewidth}
	\centering
	\includegraphics[width=\linewidth]{./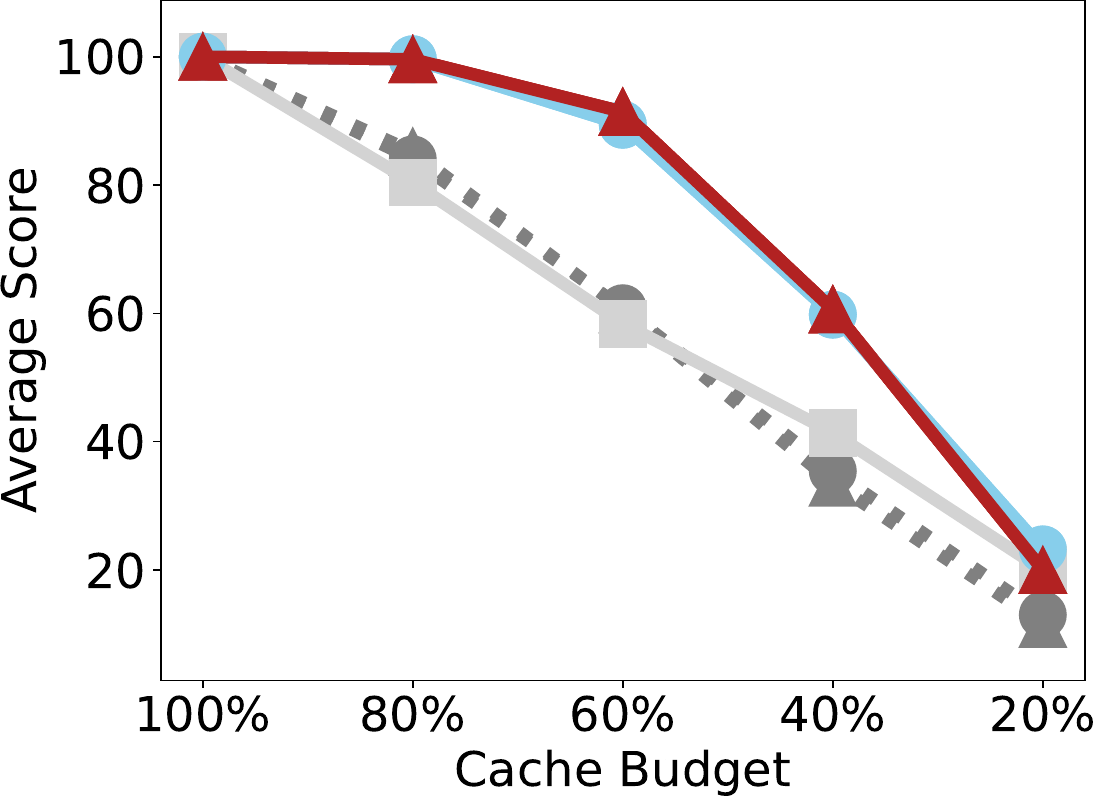}
	\vspace{-0.4cm}
	\caption{MK-NIAH-2}
	\label{subfig:mkniah2}
	\end{subfigure}
	\begin{subfigure}[b]{0.16\linewidth}
	\centering
	\includegraphics[width=\linewidth]{./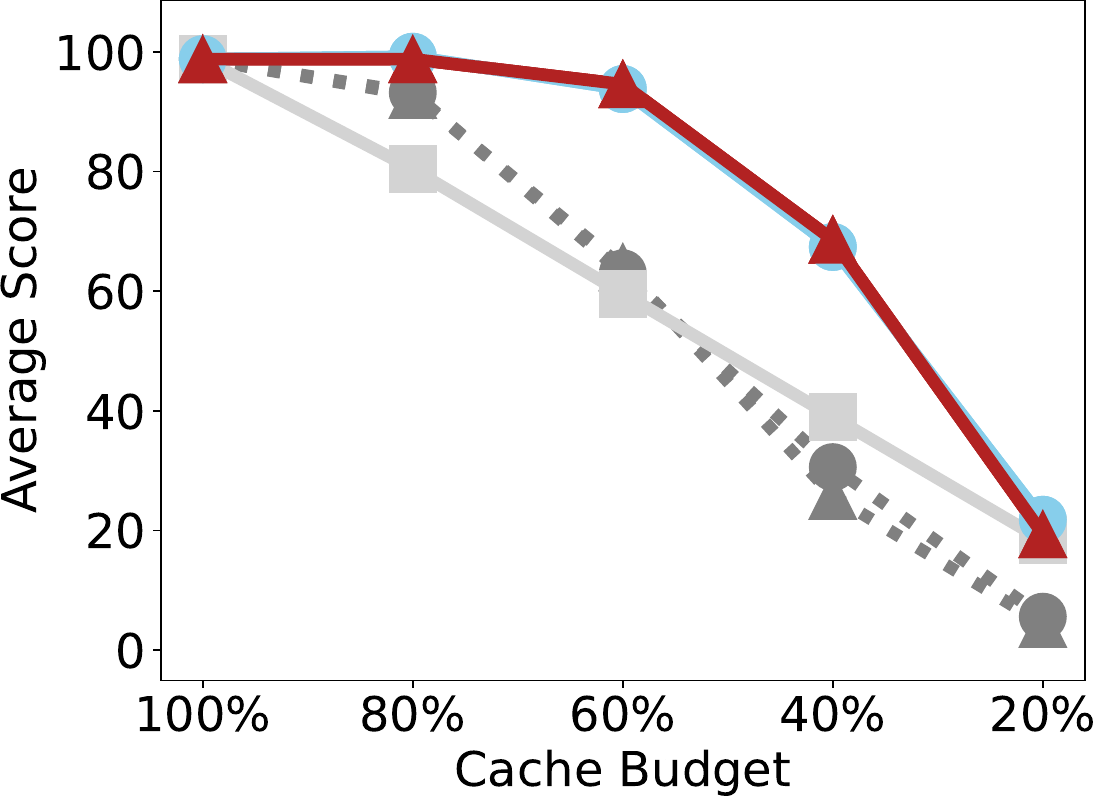}
	\vspace{-0.4cm}
	\caption{MK-NIAH-3}
	\end{subfigure}
	\begin{subfigure}[b]{0.16\linewidth}
	\centering
	\includegraphics[width=\linewidth]{./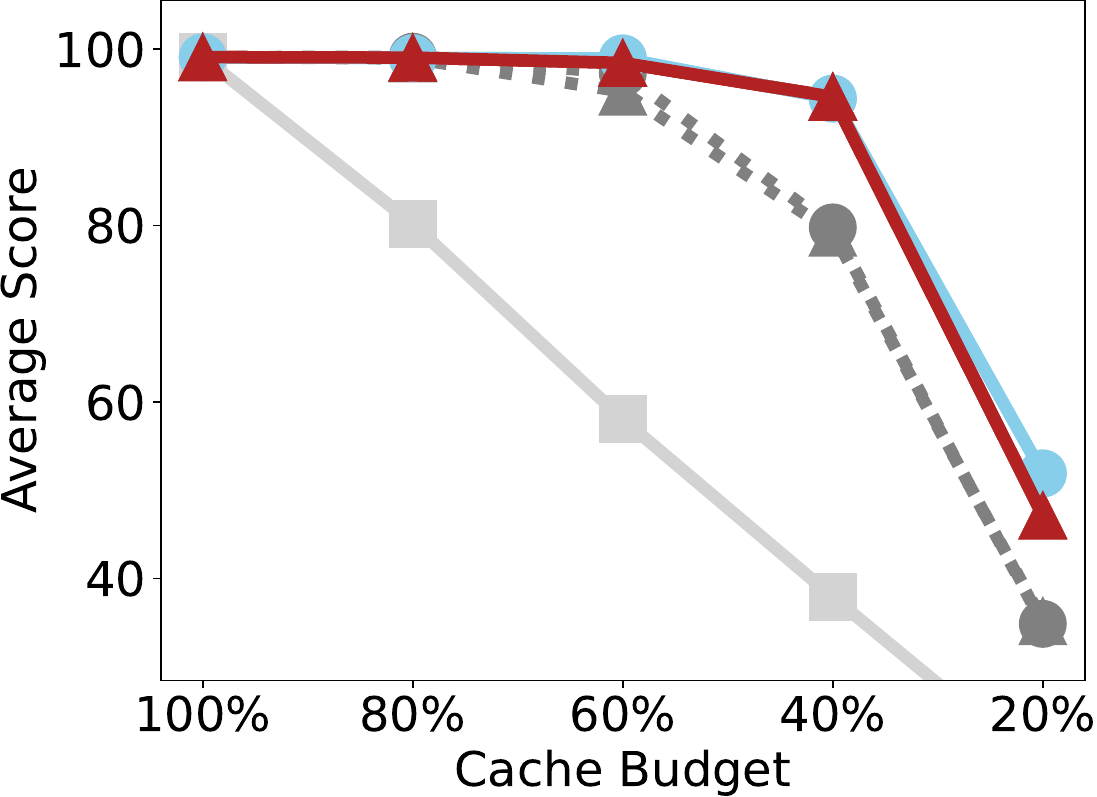}
	\vspace{-0.4cm}
	\caption{MV-NIAH}
	\end{subfigure}
	\begin{subfigure}[b]{0.16\linewidth}
	\centering
	\includegraphics[width=\linewidth]{./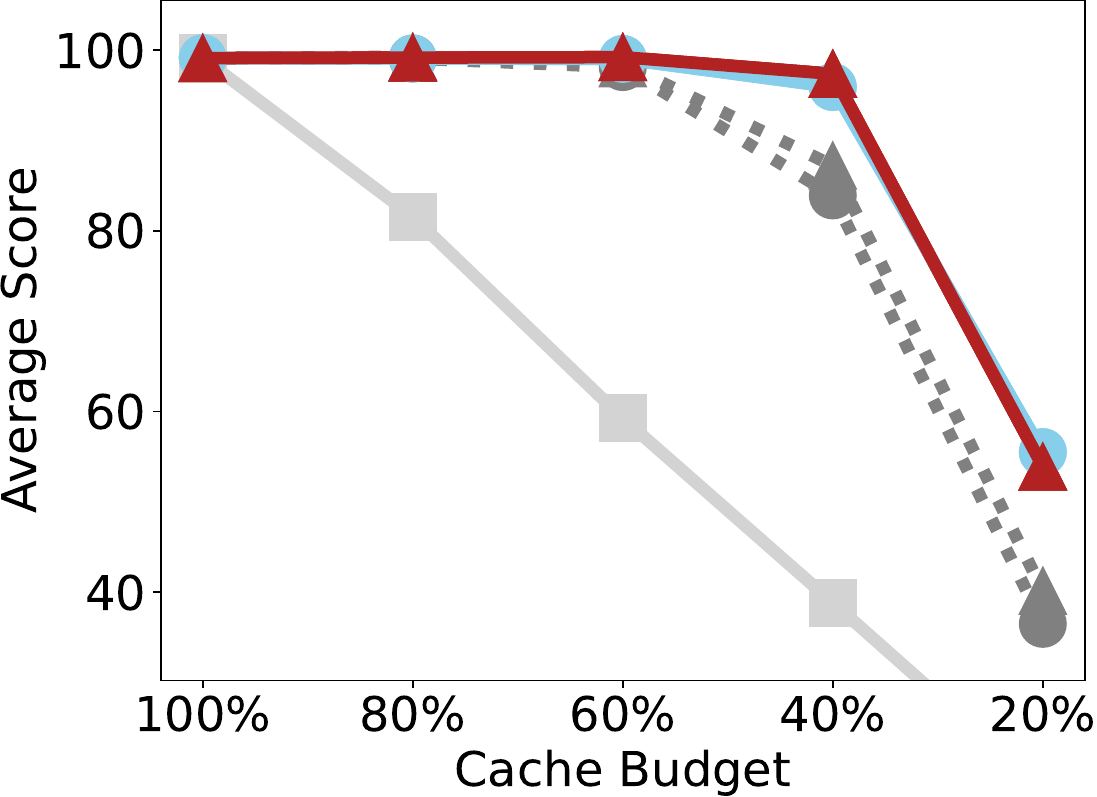}
	\vspace{-0.4cm}
	\caption{MQ-NIAH}
	\end{subfigure}
	\begin{subfigure}[b]{0.16\linewidth}
	\centering
	\includegraphics[width=\linewidth]{./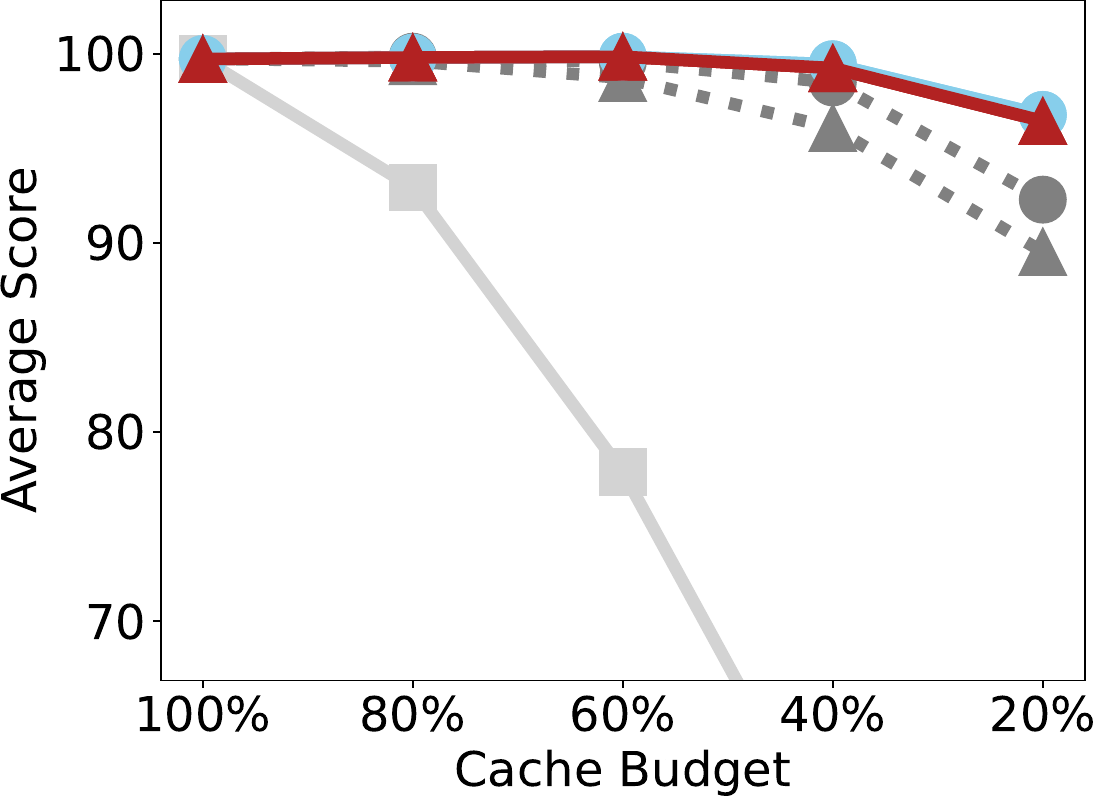}
	\vspace{-0.4cm}
	\caption{VT}
	\end{subfigure}
	\begin{subfigure}[b]{0.16\linewidth}
	\centering
	\includegraphics[width=\linewidth]{./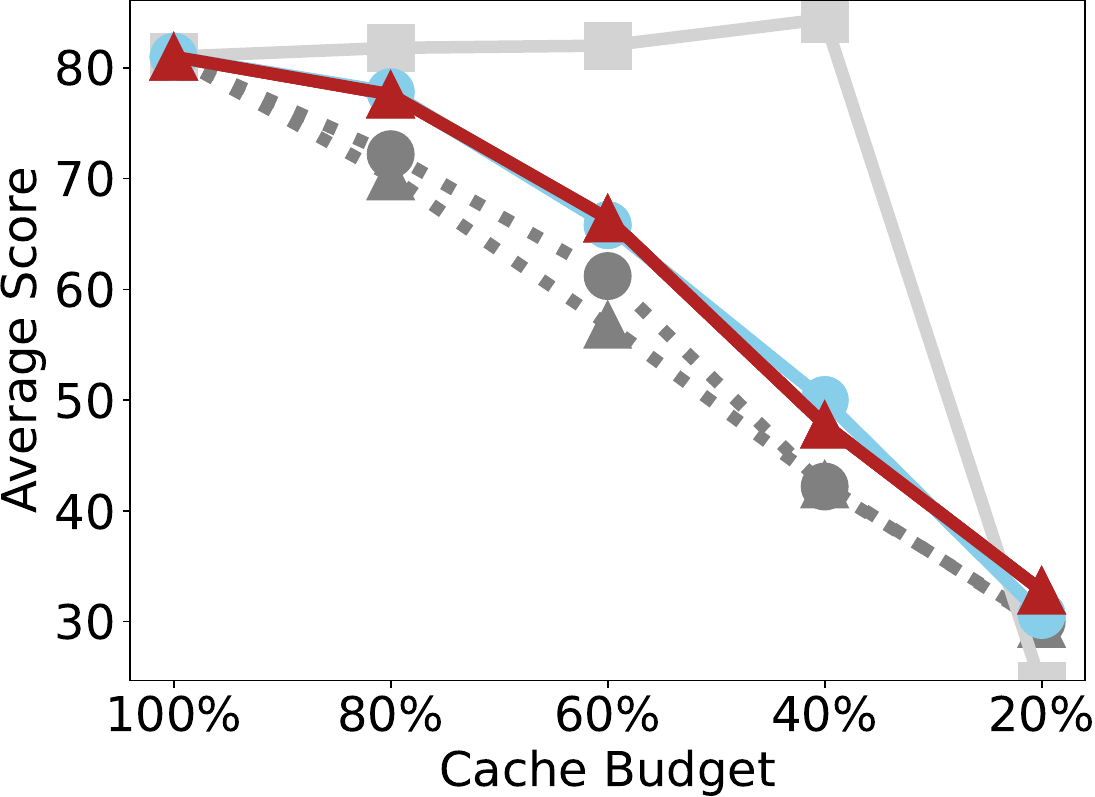}
	\vspace{-0.2cm}
	\caption{S-QA}
	\end{subfigure}
	\vspace{-0.1cm}
	\caption{Subtask Analysis on Ruler (Question-agnostic, Llama-3.1-8B-Instruct).}
	\label{fig:agnostic_llama_ruler_subtask}
	\vspace{-0.2cm}
\end{figure*}

\subsection{Ruler Benchmark}
We evaluate quality scores for each method using budgets set to 20\%, 40\%, 60\%, and 80\% of the original cache size. Beyond the commonly studied \textbf{question-aware} compression scenario—where questions are known in advance—we also consider the more challenging \textbf{question-agnostic} setting~\cite{kvpress}, in which questions are revealed only after compression.
This  mimics more challenging scenarios, such as prompt caching~\cite{gim2024prompt,zheng2024sglang}, where numerous question-independent prefix prompts need compression, or multi-turn dialogue~\cite{yi2024survey}, where compression occurs without future questions.

\textbf{Overall Results.}
Figure~\ref{fig:ruler_ave} presents the average scores across 13 datasets in the Ruler Benchmark.
Overall, existing Top-k eviction methods exhibit significant performance drops in the question-agnostic scenario.
In contrast, our Ada-SnapKV and Ada-Pyramid consistently outperform the original SnapKV and Pyramid across both question-aware and question-agnostic scenarios on the two LLMs.
Using the Llama-3.1-8B-Instruct model as an example: in the simpler question-aware scenario (Figure~\ref{fig:aware_llama}), both Ada-SnapKV and Ada-Pyramid significantly reduce quality loss under small compression budgets, such as 40\% and 20\% cache sizes, compared to the original SnapKV and Pyramid. In the more challenging question-agnostic scenario (Figure~\ref{fig:agnostic_llama}), Ada-SnapKV and Ada-Pyramid substantially reduce quality loss across all cache budget settings.
For instance, leveraging the adaptive allocation strategy, Ada-SnapKV improves SnapKV's scores at 80\% and 20\% cache sizes from 87.59 and 44.02 to 92.67 and 53.29, respectively.

\textbf{Subtask Analysis.}
Figure \ref{fig:agnostic_llama_ruler_subtask} presents the results for 12 datasets using Llama-3.1-8B-Instruct in the challenging question-agnostic scenario, highlighting the improvements brought by the adaptive budget allocation strategy. More results are provided in Appendix \ref{apdx:detail_ruler}, where our method also shows significant improvements.
Overall, our adaptive budget allocation strategy significantly enhances the performance of existing methods.
For example, at 80\% cache budget, the original SnapKV demonstrates noticeable degradation on many tasks, whereas Ada-SnapKV maintains near-lossless performance across most tasks.
Particularly in difficult Needle-in-a-Haystack tasks like S-NIAH-3 and MK-NIAN-2 with 80\% cache budget, Ada-SnapKV increases SnapKV's scores from 62.4 and 85.2 to 97.6 and 99.6, as shown in Figures~\ref{subfig:sniah3} and \ref{subfig:mkniah2}.
 These results emphasize the effectiveness of incorporating adaptive budget allocation for enhancement, especially in difficult tasks.

 \begin{figure*}[tp]
\vspace{-0.2cm}
	\begin{minipage}{\linewidth}
		\centering
		\includegraphics[width=0.85\linewidth]{./Figures/ruler_per_dataset_group_by_wq/legend.pdf}
	\end{minipage}
	\centering
	\begin{subfigure}[b]{0.24\linewidth}
		\centering
		\includegraphics[width=\linewidth]{./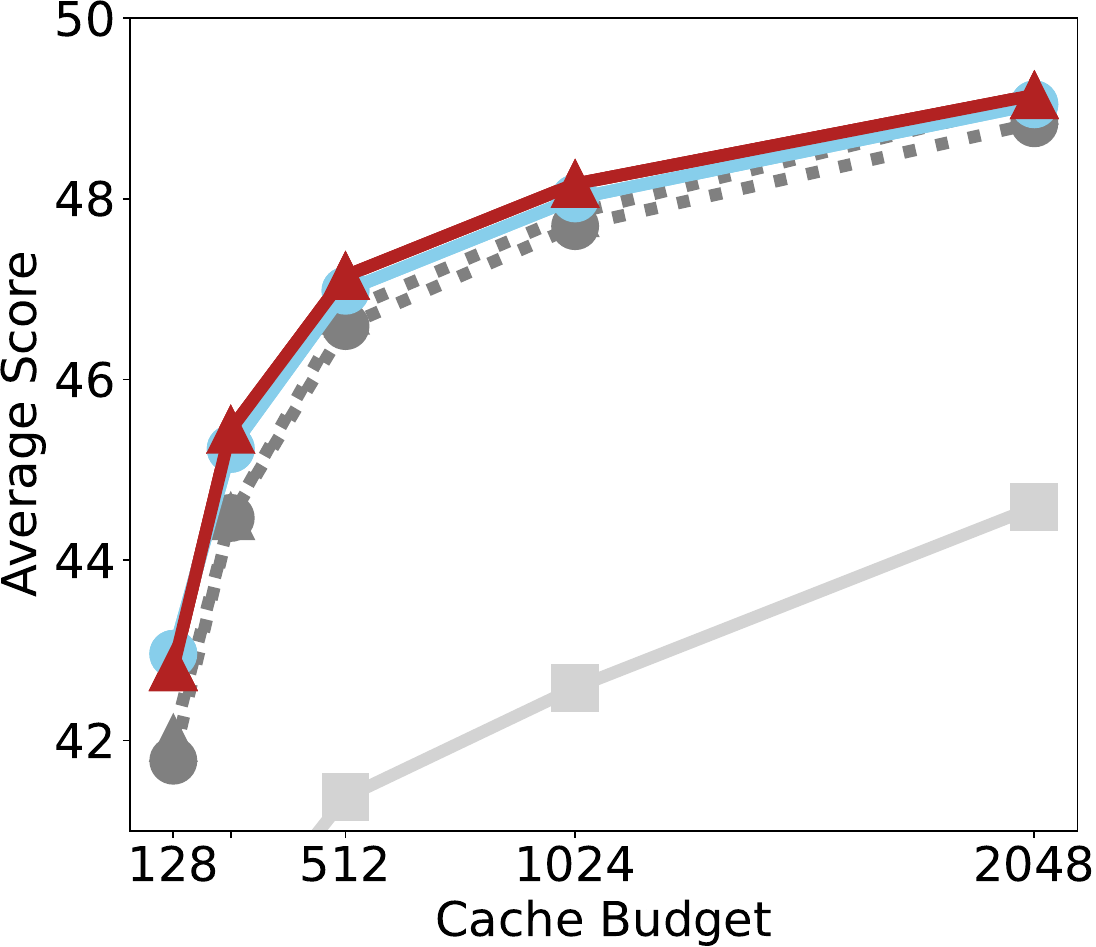}
		\vspace{-0.3cm}	
		\caption{\centering  Question-aware \newline Llama-3.1-8B-Instruct}
		\label{fig:quest_aware_llama_ave_quality_loss}
	\end{subfigure}
	\begin{subfigure}[b]{0.24\linewidth}
		\centering
		\includegraphics[width=\linewidth]{./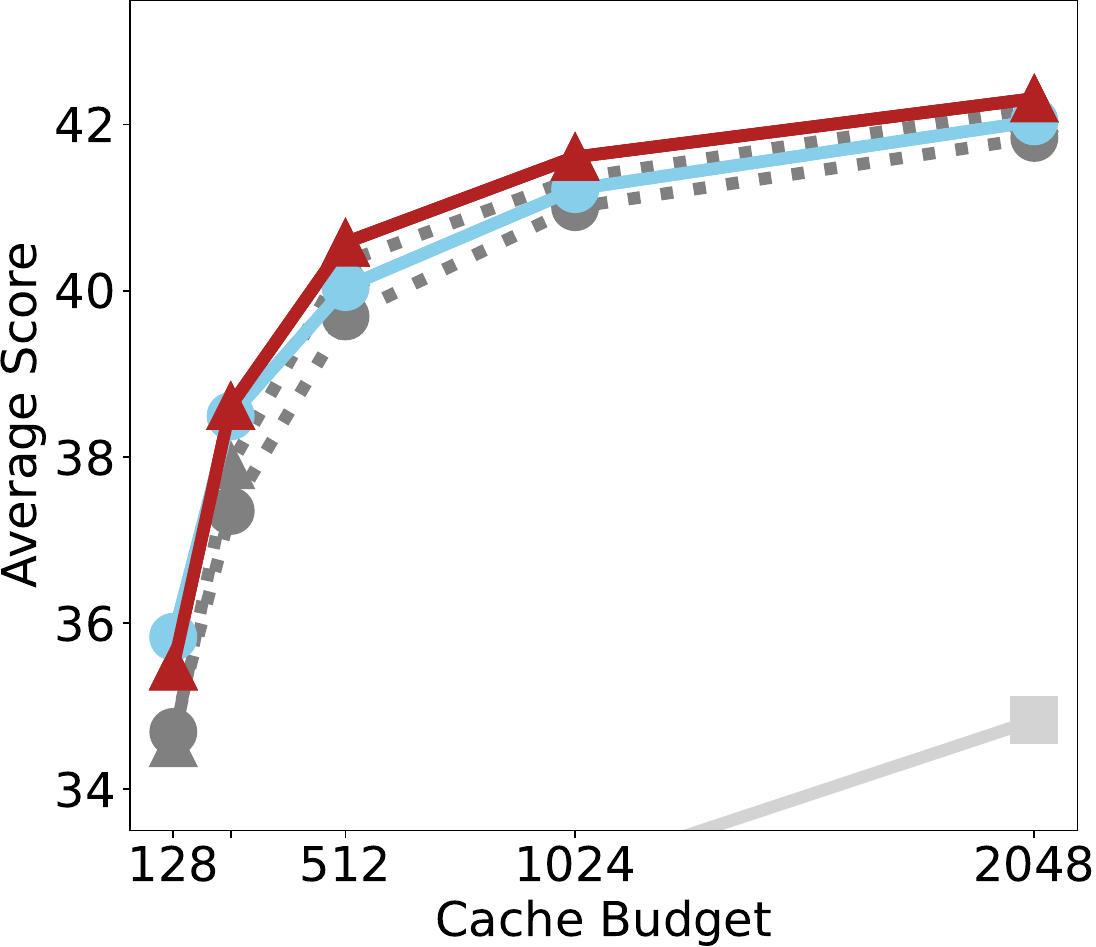}
		\vspace{-0.3cm}
		\caption{\centering Question-aware \newline Mistral-7B-Instruct-v0.2}
		\label{fig:quest_aware_mistral_ave_quality_loss}
	\end{subfigure}
	\begin{subfigure}[b]{0.24\linewidth}
		\centering
		\includegraphics[width=\linewidth]{./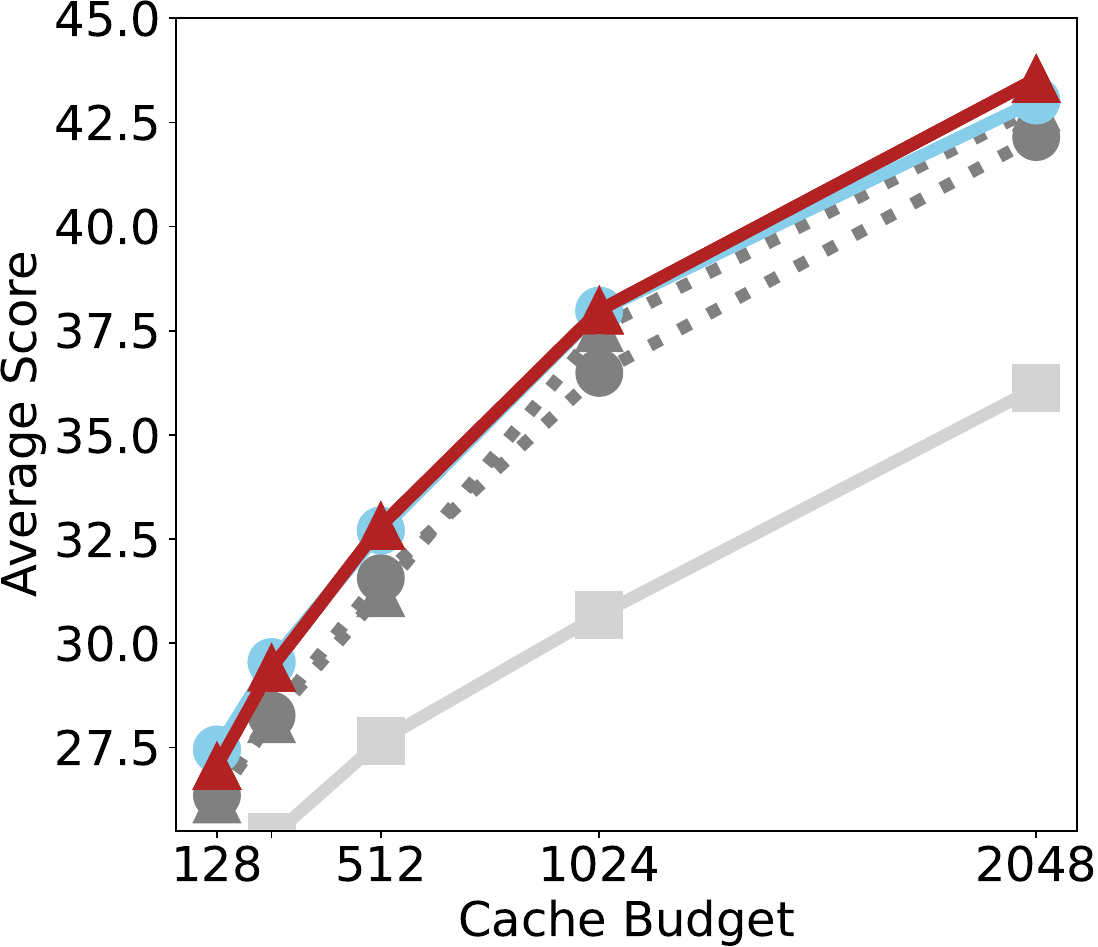}
		\vspace{-0.3cm}
		\caption{\centering Question-agnostic \newline Llama-3.1-8B-Instruct}
			\label{fig:quest_agnostic_llama_ave_quality_loss}
	\end{subfigure}
	\begin{subfigure}[b]{0.24\linewidth}
		\centering
		\includegraphics[width=\linewidth]{./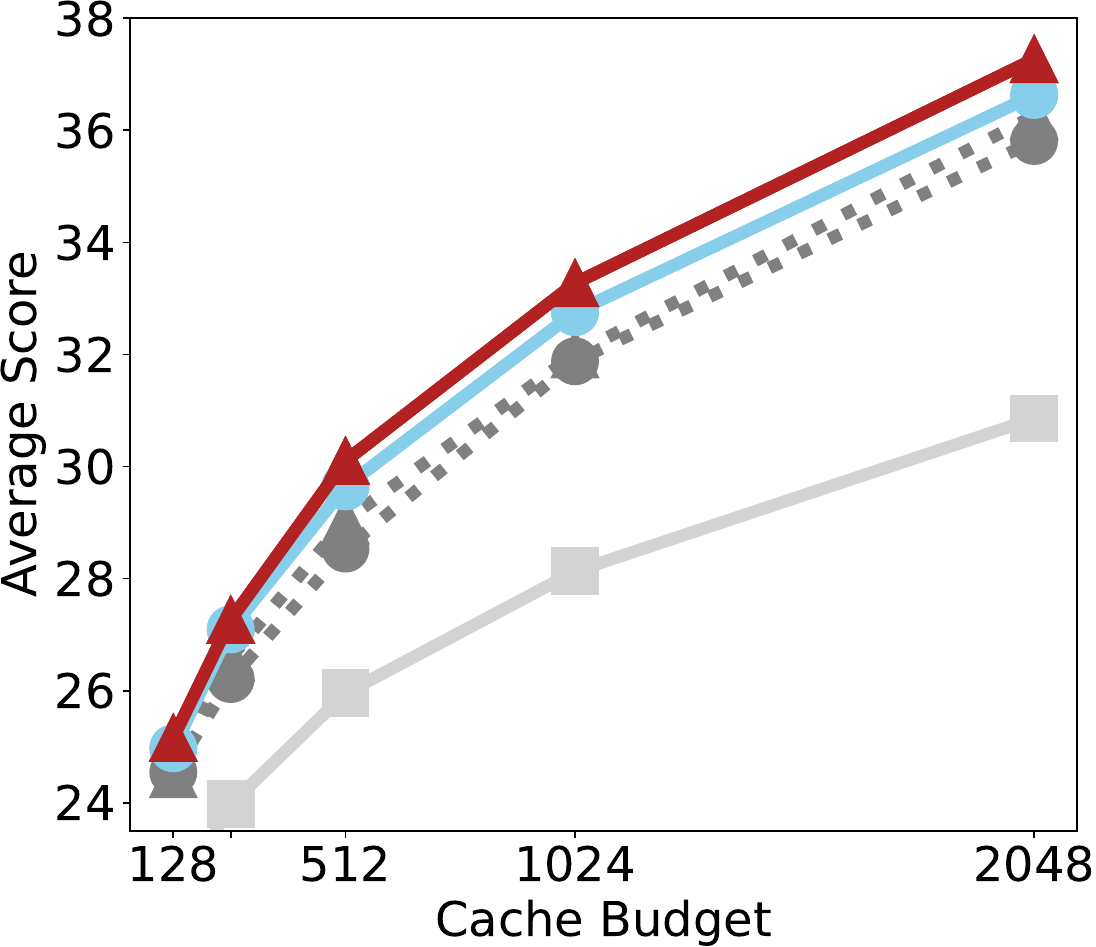}
		\vspace{-0.3cm}
		\caption{\centering Question-agnostic \newline Mistral-7B-Instruct-v0.2}
		\label{fig:quest_agnostic_mistral_ave_quality_loss}
	\end{subfigure}
	
	\vspace{-0.1cm}
	\caption{Average Score on LongBench among 16 Datasets under Fixed Budgets.}
	\label{fig:ave_quality_loss}
	\vspace{-0.3cm}
\end{figure*}

\begin{table}[t!]  
	\centering  
	\small
	\vspace{-0.2cm}
	\caption{Task Analysis for Llama-3.1-8B (Question-agnostic).}  
	\label{tab:comparison_ratio_budget}  
	\resizebox{\textwidth}{!}{ 
	\begin{tabular}{@{}l c ccc ccc@{}}  
		\toprule  
		\multirow{2}{*}{Task Domains} & \multirow{2}{*}{\makecell{Full\\Cache}} & \multicolumn{2}{c}{$b=10\%$} & \multicolumn{2}{c}{$b=20\%$} & \multicolumn{2}{c}{$b=40\%$} \\
		\cmidrule(lr){3-4}\cmidrule(lr){5-6}\cmidrule(lr){7-8}  
		& & SnapKV & Ada-SnapKV & SnapKV & Ada-SnapKV & SnapKV & Ada-SnapKV \\
		\midrule  
		Single-Doc. QA & 43.10 & 23.38 & \textbf{23.97} & 28.78 & \textbf{31.39} & 35.27 & \textbf{36.63} \\
		Multi-Doc. QA  & 46.49 & 26.61 & \textbf{29.06} & 33.51 & \textbf{34.90} & 40.50 & \textbf{41.36} \\
		Summarization  & 28.97 & 21.98 & \textbf{22.43} & 23.82 & \textbf{24.29} & 26.11 & \textbf{26.66} \\
		Few-shot       & 69.45 & 57.89 & \textbf{60.40} & 61.95 & \textbf{63.70} & 65.10 & \textbf{66.43} \\
		Synthetic      & 53.73 & \textbf{36.50} & 34.88 & 48.19 & \textbf{50.39} & \textbf{53.17} & 53.00 \\
		Code           & 57.86 & 59.72 & \textbf{60.91} & 60.05 & \textbf{61.14} & \textbf{60.49} & 60.30 \\
		\hline  
		Ave.           & 49.20 & 36.38 & \textbf{37.45} & 41.29 & \textbf{42.87} & 45.52 & \textbf{46.24} \\
		\bottomrule  
	\end{tabular}  
}
\vspace{-0.3cm}
\end{table}

\subsection{LongBench Benchmark}
To align with previous evaluations~\cite{SnapKV,PyramidKV}, we first set the average budget for each head to fixed values of $\{128 , 256 , 512 , 1024, 2048\}$.
We further extend from standard question-aware compression to question-agnostic scenarios by separating the questions in Longbench, thereby better reflecting real-world performance, as suggested by~\cite{kvpress}. 
For more details, please refer to Appendix \ref{apdx:longbench_info}.

\textbf{Question-aware Compression:} Figures~\ref{fig:quest_aware_llama_ave_quality_loss} and \ref{fig:quest_aware_mistral_ave_quality_loss} present the average scores across 16 datasets under fixed budget constraints with the question-aware compression.~\footnote{Detailed scores can be found in Appendix \ref{apdx:detail_LongBench}}. SnapKV and Pyramid, both utilizing Top-$k$ selection within an observation window, achieve closely matched performance and outperform the previous sliding window eviction method, StreamingLLM. Moreover, our Ada-SnapKV and Ada-Pyramid methods consistently improve generation quality across different budgets, alternately outperforming their respective base methods and, under a fixed budget of 2048, approaching lossless performance. These consistent gains highlight the effectiveness of adaptive budget allocation.

\textbf{From Question-aware to Question-agnostic Compression:} Figures~\ref{fig:quest_agnostic_llama_ave_quality_loss} and \ref{fig:quest_agnostic_llama_ave_quality_loss} further present the average scores with the question-agnostic compression. Our enhanced Ada-SnapKV and Ada-Pyramid methods continue to outperform the original SnapKV and Pyramid methods. However, a comparison between question-aware and question-agnostic settings reveals a notable performance drop across all methods. For example, with Llama and a 2048 budget, the average score of SnapKV decreases from 49.09 to 42.86 while transfer from question-aware to question-agnostic settings. This indicates that commonly used question-aware settings do not adequately capture the limitations of cache eviction methods in a more realistic compression scenario. We therefore suggest that future evaluations of cache eviction techniques should place greater emphasis on question-agnostic compression to improve robustness in practical applications.

\textbf{More Evaluation in Question-agnostic Compression:}  
To more accurately assess question-agnostic compression, we apply ratio-based cache budget compression, which better reflects real-world performance on Longbench’s highly imbalanced sample lengths. \footnote{As shown in Table~\ref{tab:information_dataset} (Appendix~\ref{apdx:longbench_info}), Longbench sample lengths are highly imbalanced—for example, the Code task Lcc averages 1,235 tokens, while the QA task NarrativeQA averages 18,409. Thus, applying a fixed budget, like 2048, across all datasets may introduce bias when evaluating cache eviction across tasks.} Table~\ref{tab:comparison_ratio_budget} provides a detailed breakdown of performance across task domains for Llama-3.1-8B. Overall, Code tasks are largely insensitive to cache compression, and in some cases, even see improved performance after compression. In contrast, other tasks experience significant degradation, particularly in Summarization and QA. Notably, our Ada-SnapKV effectively mitigates these losses and improves overall performance. Across 18 domain cases and three cache budgets, Ada-SnapKV delivers quality improvements in 15 domains, demonstrating its robust benefits. Table~\ref{tab:comparison_ratio_budget_70b} further reports results on the larger Llama-3.1-70B model, showing similarly strong gains from our approach. Except for the Code domain, which remains insensitive to compression, significant improvements are observed in all other tasks.

\begin{figure}[t]  
	\begin{minipage}[t]{0.685\textwidth}  
		
				\vspace{0pt}  
		\centering  
		\small  
		\setlength{\tabcolsep}{3pt}  
		\captionof{table}{Task Analysis for Llama-3.1-70B (Question-agnostic).}  
		\vspace{-0.2cm}
		\label{tab:comparison_ratio_budget_70b}  
	 	\resizebox{\textwidth}{!}{  

			\begin{tabular}{@{}l c c c c c@{}}  
				\toprule  
				\multirow{2}{*}{Domain} & \multirow{2}{*}{\makecell{Full Cache}} & \multicolumn{2}{c}{\small \makecell{$b =$ 20\%}} & \multicolumn{2}{c}{\small \makecell{$b =$ 40\%}} \\
				\cmidrule(lr){3-4}\cmidrule(lr){5-6}  
				& & SnapKV  & Ada-SnapKV & SnapKV  & Ada-SnapKV \\
				    \midrule  
				Single-Doc. QA & 47.15 & 32.54 & \textbf{36.05} & 39.35 & \textbf{42.21} \\
				Multi-Doc. QA & 60.07 & 46.98 & \textbf{48.45} & 54.07 & \textbf{55.03} \\
				Summarization & 28.69 & 24.40 & \textbf{24.81} & 26.21 & \textbf{26.57} \\
				Few-shot & 72.07 & 66.71 & \textbf{67.80} & 69.29 & \textbf{70.29} \\
				Synthetic & 58.25 & 51.25 & \textbf{51.50} & 56.00 & \textbf{56.75} \\
				Code & 47.82 & \textbf{52.44} & 51.51 & \textbf{51.90} & 49.23 \\
				\midrule  
				Ave. & 52.25 & 44.95 & \textbf{46.08} & 48.91 & \textbf{49.64} \\
				\bottomrule  
			\end{tabular}  }
	\end{minipage}%
	\begin{minipage}[t]{0.28\textwidth}  
		\vspace{0pt}  
		\centering  
		\begin{subfigure}{0.98\linewidth}  
			\centering  
			\includegraphics[width=0.99\linewidth]{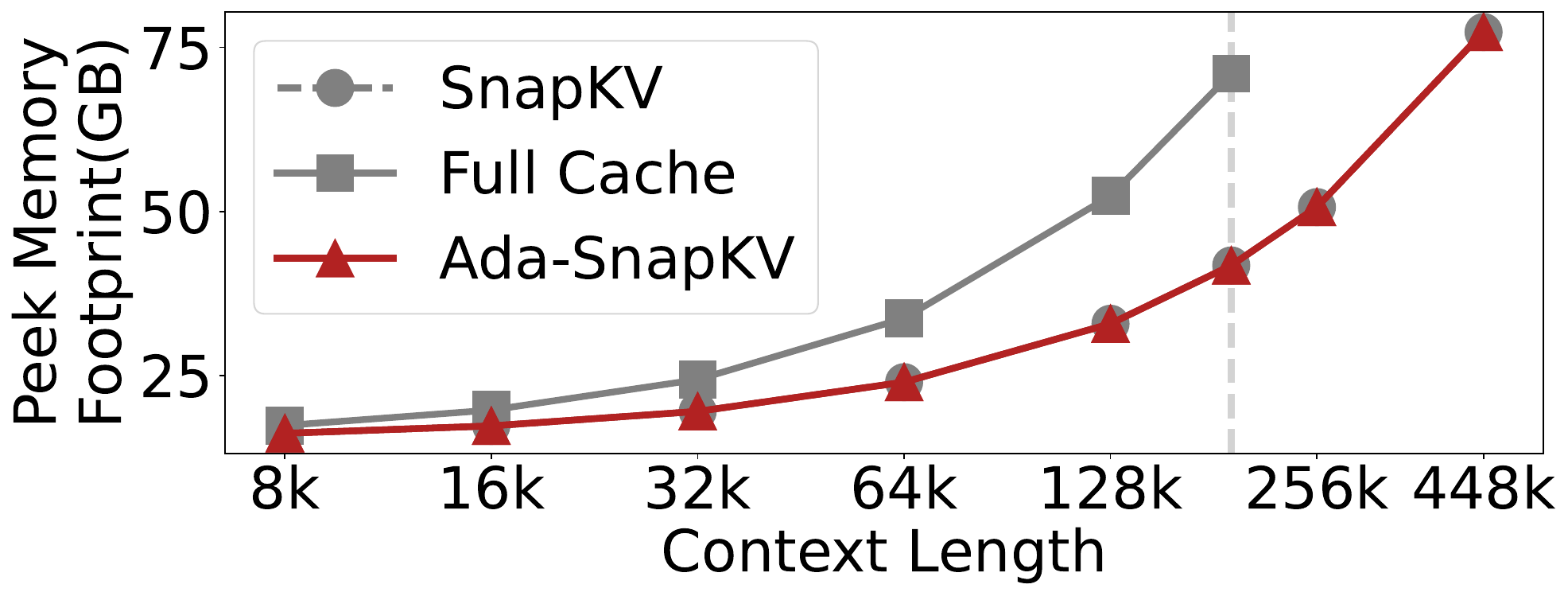}  
			\vspace{-0.6cm}  
			\captionsetup{labelformat=empty} 
			\caption{\small Memory}  
		\end{subfigure}  
		\begin{subfigure}{0.98\linewidth}  
			\centering  
			\includegraphics[width=0.99\linewidth]{./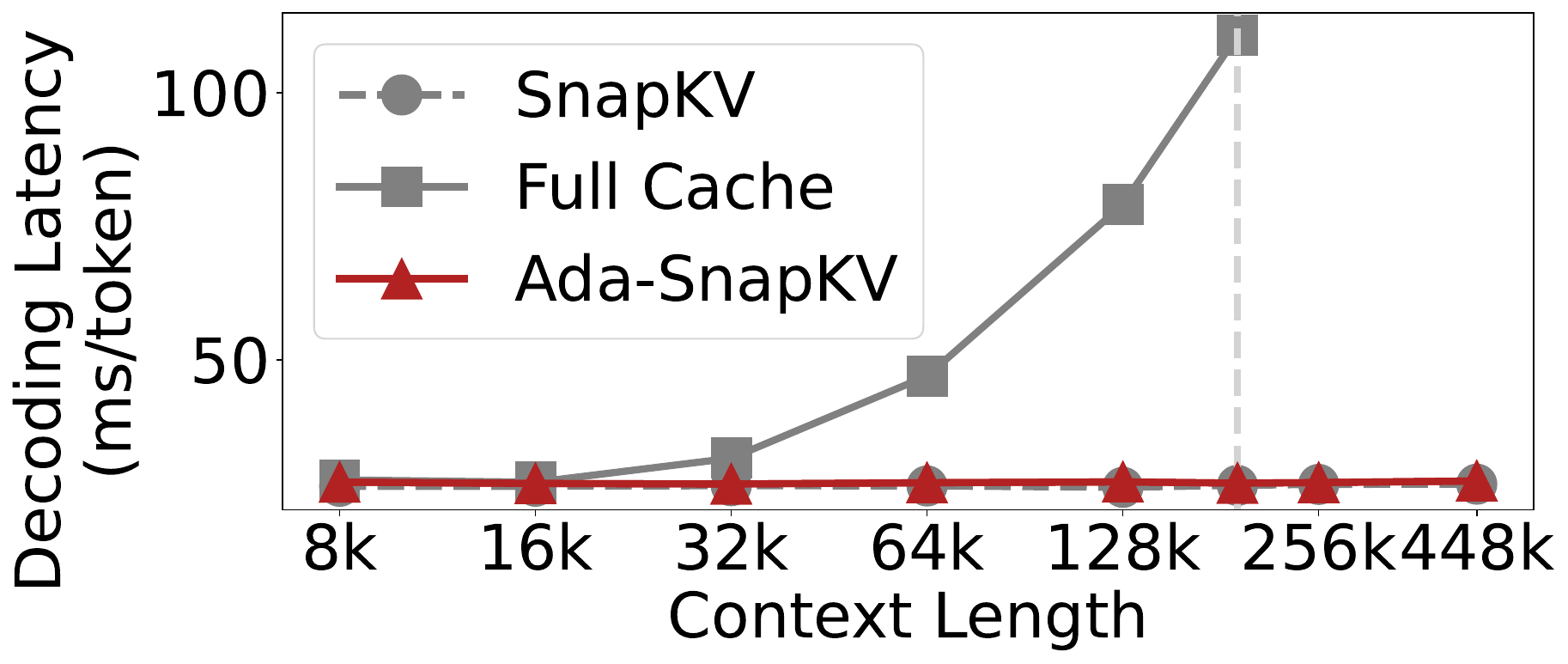}  
			\vspace{-0.6cm}  
			\captionsetup{labelformat=empty} 
			\caption{\small Runtime}  
		\end{subfigure}  
		\vspace{-0.2cm}  
		\caption{\centering Efficiency  {\small (All with FlashAttention-2)}}  
		\label{fig:efficiency}  
	\end{minipage}  
	\vspace{-0.5cm}
\end{figure}

\subsection{Computation Efficiency Under Adaptive Allocation}
\label{sc:exp_efficiency}
Cache eviction methods aim to enhance the computational (memory and runtime) efficiency by selectively evicting KV cache elements.
We demonstrate that, supported by our flattened cache layout and custom CUDA kernels, the computational efficiency of cache eviction under adaptive allocation, based on the variable-length FlashAttention technique, matches that of traditional uniform eviction methods within the same cache budget constraints.
As shown in Figure \ref{fig:efficiency}, with a fixed budget of 1024, Ada-SnapKV achieves peak memory usage and decoding latency comparable to the original SnapKV, and both  significantly outperform the full cache case.
These results demonstrate that Ada-KV strategy effectively enhance post-eviction generation quality while maintaining strong computational efficiency.

\section{Broad Benefits of the Adaptive Budget Allocation Strategy}
\label{sc:broad}
Due to its plug-and-play design, Ada-KV is broadly applicable beyond the two cases presented in this paper. As of publication, many follow-up works have adopted Ada-KV strategy, yielding a wide range of enhanced applications~\cite{kim2025kvzip, devoto2025expected,galim2025draft}, especially CriticalKV~\cite{feng2025identify} and DefensiveKV~\cite{feng2025tamingfragilitykvcache}.
Beyond direct integration, concurrent methods have explored budget allocation through training-based profiling. For instance, DuoAttention~\cite{xiao2024duoattention} categorizes heads into “full attention” and “streaming” types, while HeadKV~\cite{fu2024not} enables finer-grained allocation building on our open-source implementation. We present the results of these follow-up studies in Table~\ref{tab:follow_up} to demonstrate the broad benefits of adaptive budget allocation strategy. When combined with CriticalKV and DefensiveKV, Ada-KV consistently improves performance, which in turn scales with the strength of the base method. Notably, even the plug-and-play CriticalKV and DefensiveKV methods, when enhanced with the Ada-KV strategy, surpass the performance of training-based approaches like DuoAttention and HeadKV. These findings demonstrate that Ada-KV retains substantial value and offers significant enhancement potential, even alongside more recent and advanced methods.

 \begin{table}[t!]  
	\centering
	\small  
	\caption{Performance gains from applying the Ada-KV strategy to follow-up methods. All results are from the Llama-3.1-8B model on the Longbench benchmark under question-agnostic settings.}  
	\label{tab:follow_up}  
	\begin{tabular}{@{}lcccccc@{}}  
		\toprule  
		{Cache} & {\begin{tabular}[c]{@{}c@{}} DuoAttn\\ {\scriptsize \textit{Training-based}}\end{tabular}} & {\begin{tabular}[c]{@{}c@{}}HeadKV\\ {\scriptsize \textit{Training-based}}\end{tabular}} & {\begin{tabular}[c]{@{}c@{}}CriticalKV\\ {w/o. Ada-KV}\end{tabular}} & {\begin{tabular}[c]{@{}c@{}}CriticalKV\\ {w/. Ada-KV}\end{tabular}} & {\begin{tabular}[c]{@{}c@{}}DefensiveKV\\ {w/o. Ada-KV}\end{tabular}}  & {\begin{tabular}[c]{@{}c@{}}DefensiveKV\\ {w/. Ada-KV}\end{tabular}} \\
		\midrule  
		20\% & 39.52 & 42.64 & 42.99 & 43.77 & 43.78 & \textbf{46.68} \\
		40\% & 48.17 & 47.23 & 47.29 & 48.00 & 47.76 & \textbf{49.21} \\
		\bottomrule  
	\end{tabular}  
	\vspace{-0.2cm}
\end{table} 
\section{Conclusion}

In this study, we revisit cache eviction strategies for efficient LLM inference and uncover a key overlooked factor: adaptive budget allocation across attention heads. Guided by a theoretical analysis of the loss upper bound, we propose \textit{Ada-KV}, the first adaptive budget allocation strategy for optimizing KV cache eviction methods. 
To demonstrate its plug-and-play benefits, we seamlessly integrate Ada-KV into two existing SOTA methods, introducing Ada-SnapKV and Ada-Pyramid, two adaptive eviction methods.
Beyond the commonly studied question-aware compression, we also evaluate these methods in the more challenging and less explored question-agnostic compression scenario. Using the Ruler and LongBench benchmarks, we demonstrate the effectiveness of adaptive budget allocation in both settings. Our results not only expose the limitations of current cache eviction strategies but also highlight the potential of adaptive allocation to enhance cache eviction.

\section*{Acknowledgments}
We would like to thank the anonymous reviewers for their valuable feedback and constructive comments. 
This work was supported by the National Natural Science Foundation of China (NSFC) under Grants 62472400 and 62271465, the National Key R\&D Program of China under Grant 2025YFC3408300, and the Suzhou Basic Research Program under Grant SYG202338.

\bibliographystyle{unsrt}
\bibliography{icml25}

\newpage
\onecolumn
\appendix
\onecolumn
\newpage
\definecolor{question_color}{RGB}{0,100,0}
\appendix
\section{Appendix}
\subsection{Additional Related Works}
\label{apdx:additional_related}
Additional works also mitigate the challenges posed by massive KV Caches during long-sequence inference while not reducing the number of cache elements. These works are orthogonal to our work. For instance, in our implementation, we have integrated the Flash Attention~\cite{dao2022flashattention} technique to enhance efficient computation. Similar efforts, such as Paged Attention~\cite{kwon2023efficient}, employ efficient memory management strategies to reduce I/O latency without altering the size of the KV Cache.
KV cache quantization methods~\cite{yao2022zeroquant,liu2024kivi,dong2024qaq}, reduce the size of cache by lowering the precision of individual elements. The cache eviction techniques focused in this paper also be further combined and complemented with quantization in the future. 
Some studies also employ speculative decoding to accelerate long-sequence generation~\cite{sun2024triforce, yanglongspec, ji2025specvlm}, typically using model with a reduced KV cache to generate drafts. Future work could integrate more advanced cache eviction methods to further improve the efficiency of such approaches.

\subsection{Detail results of Ruler Evaluation}
\label{apdx:detail_ruler}
Figure~\ref{fig:ruler_llama_mqa} complements the missing result for the multi-hop QA subtask in Figure~\ref{fig:agnostic_llama_ruler_subtask} due to the space limitation. Figures~\ref{fig:agnostic_mistral_ruler_subtask}~\ref{fig:aware_llama_ruler_subtask}~\ref{fig:aware_mistral_ruler_subtask} further provide a comprehensive overview of the performance of Ada-SnapKV and Ada-Pyramid across all 16K Ruler subtasks on two LLMs in both question-agnostic and question-aware scenarios. The results demonstrate that regardless of the scenario (question-agnostic or question-aware), model (LLama or Mistral), or cache size budgets, Ada-SnapKV and Ada-Pyramid outperform the original SnapKV and Pyramid approaches in most tasks. This highlights the broad applicability and effectiveness of the adaptive allocation strategy.

\begin{figure}[h]
	\centering
	\includegraphics[width=0.25\linewidth]{./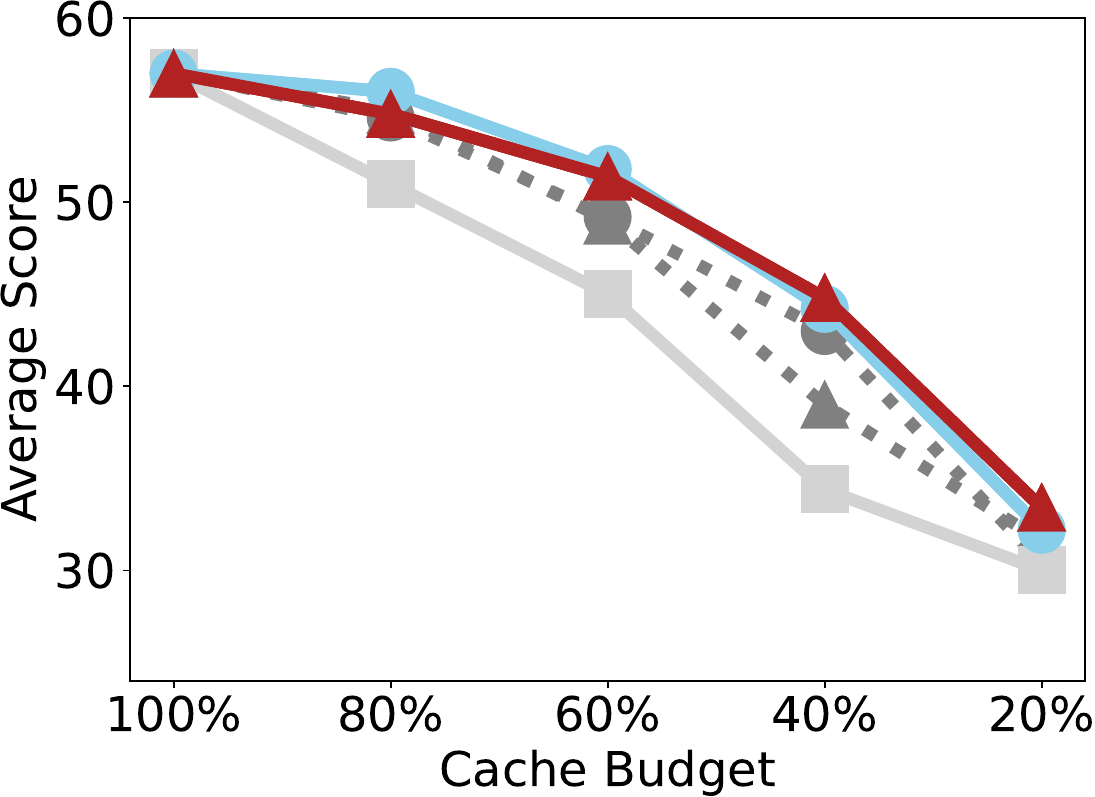}
	\caption{M-QA Subtask Analysis on Ruler for Question-agnostic Scenario (Llama3.1-8B-Instruct)}
	\label{fig:ruler_llama_mqa}
\end{figure}

\subsection{Robustness Analysis of Safeguard $\alpha$}

Rather than fine-tuning the safeguard parameter, $\alpha$, for each model or budget—a process that would introduce considerable complexity—we opted for a fixed value. As shown in Table [Table Number], we conducted a robustness analysis of $\alpha$ using Mistral-7B on the LongBench benchmark.
The results indicate that a smaller $\alpha$ allows for more aggressive budget allocation, improving performance under limited budgets. Conversely, a larger $\alpha$ performs slightly better with higher budgets. Based on this trade-off, we selected a balanced value for $\alpha=0.2$ in this work.

\begin{table}[h!]
	\centering
	\small
	\caption{Robustness Analysis of $\alpha$ (Question-aware)}
	
	\begin{tabular}{lcccc}
		\toprule Longbench, Mistral-7B
		& {Budget 128} & {Budget 256} & {Budget 512} & {Budget 1024} \\
		\midrule
		Ada-SnapKV $\alpha=0$ & \textbf{35.69} & \textbf{38.9} & 40.53 & 41.44 \\
		Ada-SnapKV $\alpha=0.2$ & 35.48 & 38.61 & 40.58 & 41.61 \\
		Ada-SnapKV $\alpha=0.5$ & 35.2 & 38.46 & \textbf{40.66} & \textbf{41.62} \\
		\bottomrule
	\end{tabular}
\end{table}

\begin{figure*}[h!]
	\begin{minipage}{\linewidth}
	\centering
	\includegraphics[width=0.7\linewidth]{./Figures/ruler_per_dataset_group_by_wq/legend.pdf} 
	\end{minipage}
	\begin{subfigure}[b]{0.16\linewidth}
		\centering
		\includegraphics[width=\linewidth]{./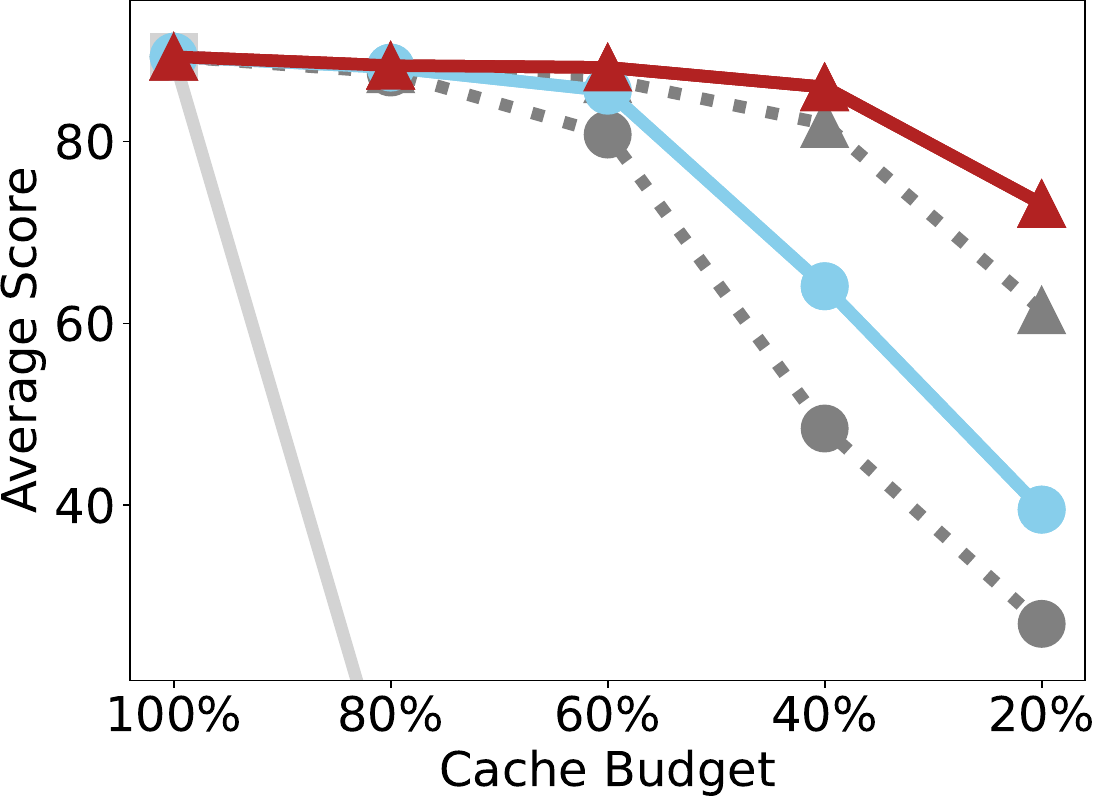} 
		\caption{CWE}
	\end{subfigure}
	\begin{subfigure}[b]{0.16\linewidth}
		\centering
		\includegraphics[width=\linewidth]{./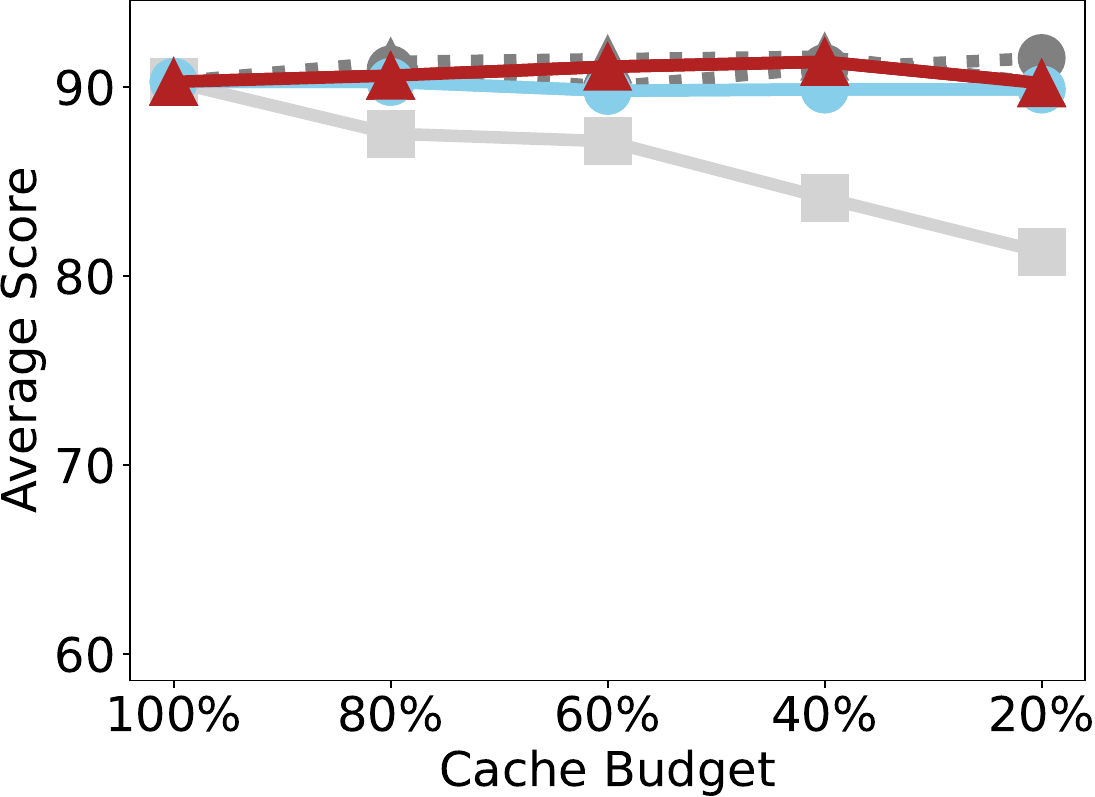}
		\caption{FWE}
	\end{subfigure}
	\begin{subfigure}[b]{0.16\linewidth}
		\centering
		\includegraphics[width=\linewidth]{./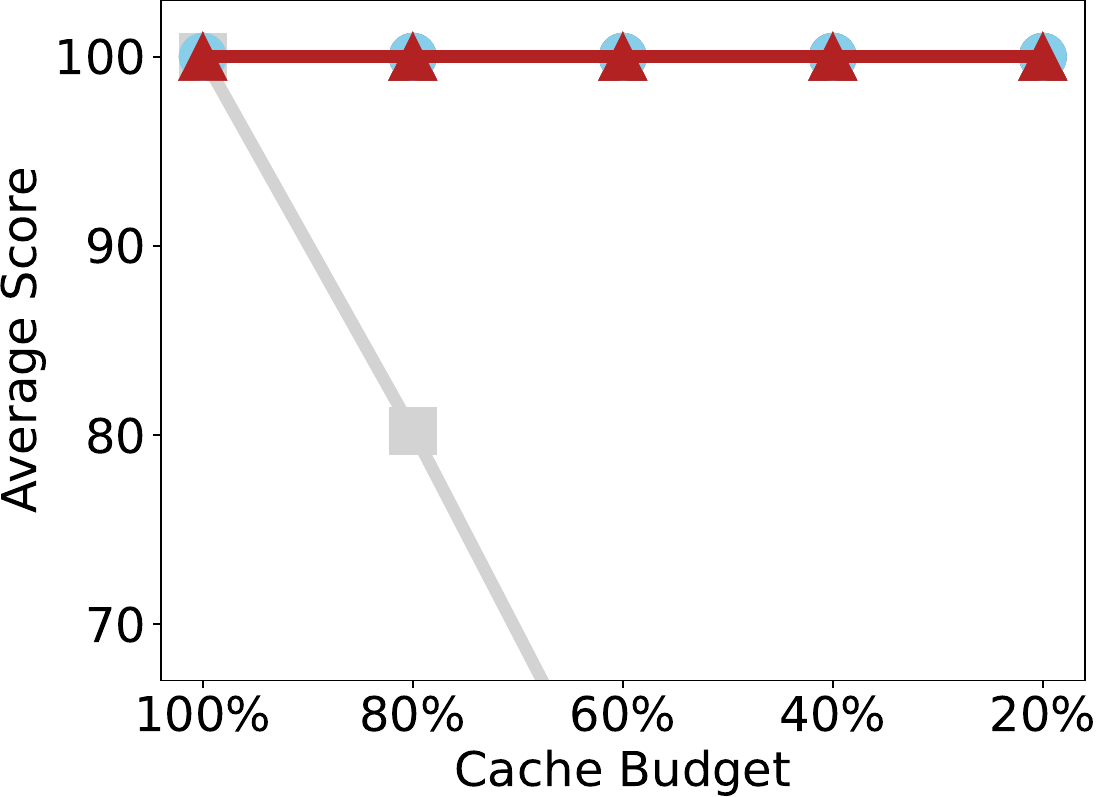} 
		\caption{S-NIAH-1}
	\end{subfigure}
	\begin{subfigure}[b]{0.16\linewidth}
		\centering
		\includegraphics[width=\linewidth]{./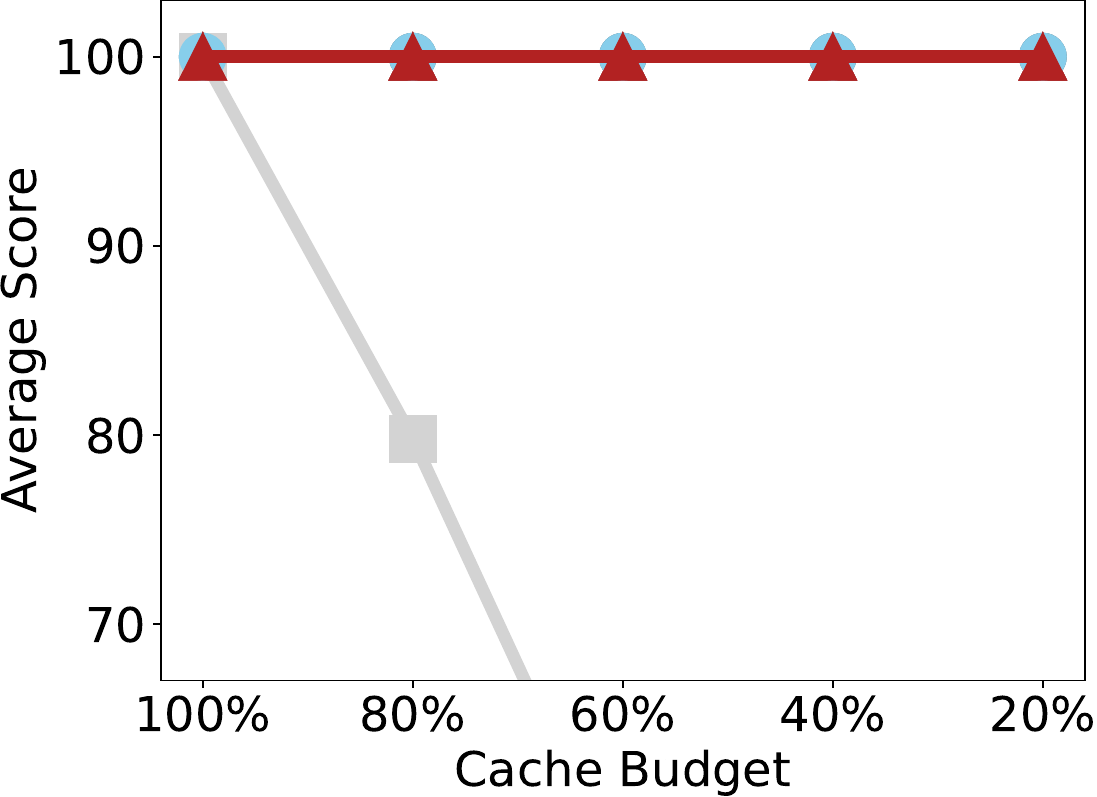}
		\caption{S-NIAH-2}
	\end{subfigure}
	\begin{subfigure}[b]{0.16\linewidth}
		\centering
		\includegraphics[width=\linewidth]{./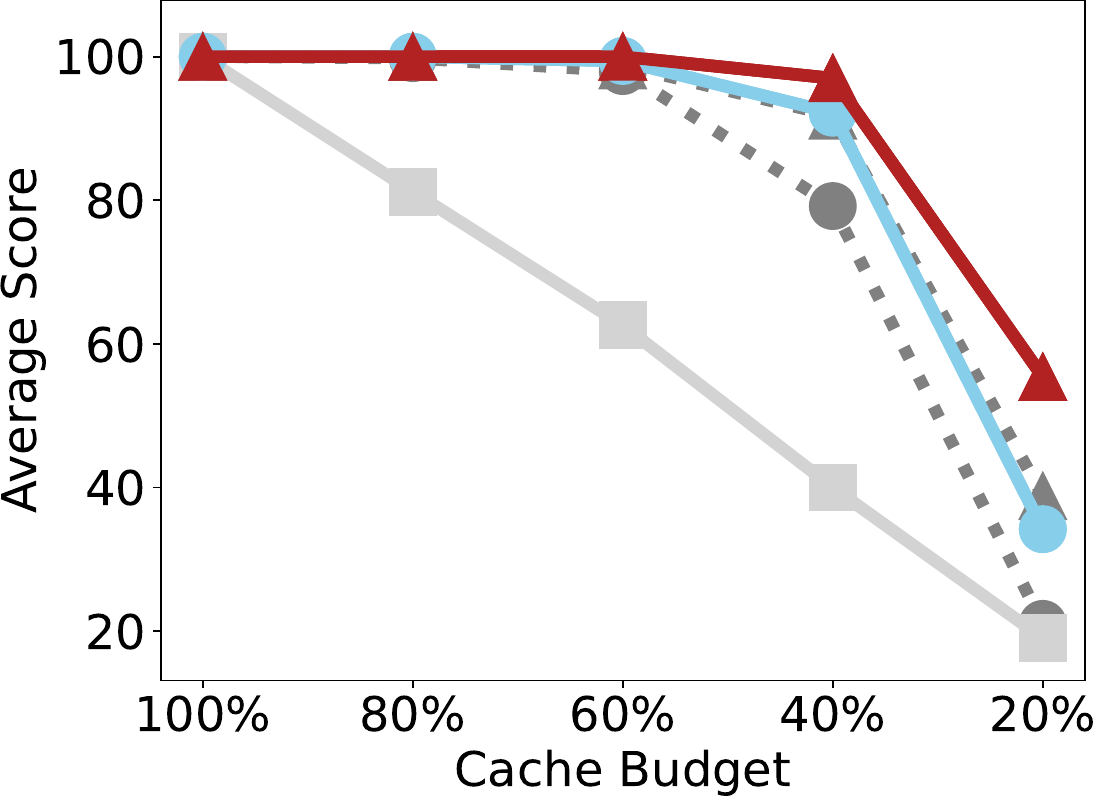} 
		\caption{S-NIAH-3}
	\end{subfigure}
	\begin{subfigure}[b]{0.16\linewidth}
		\centering
		\includegraphics[width=\linewidth]{./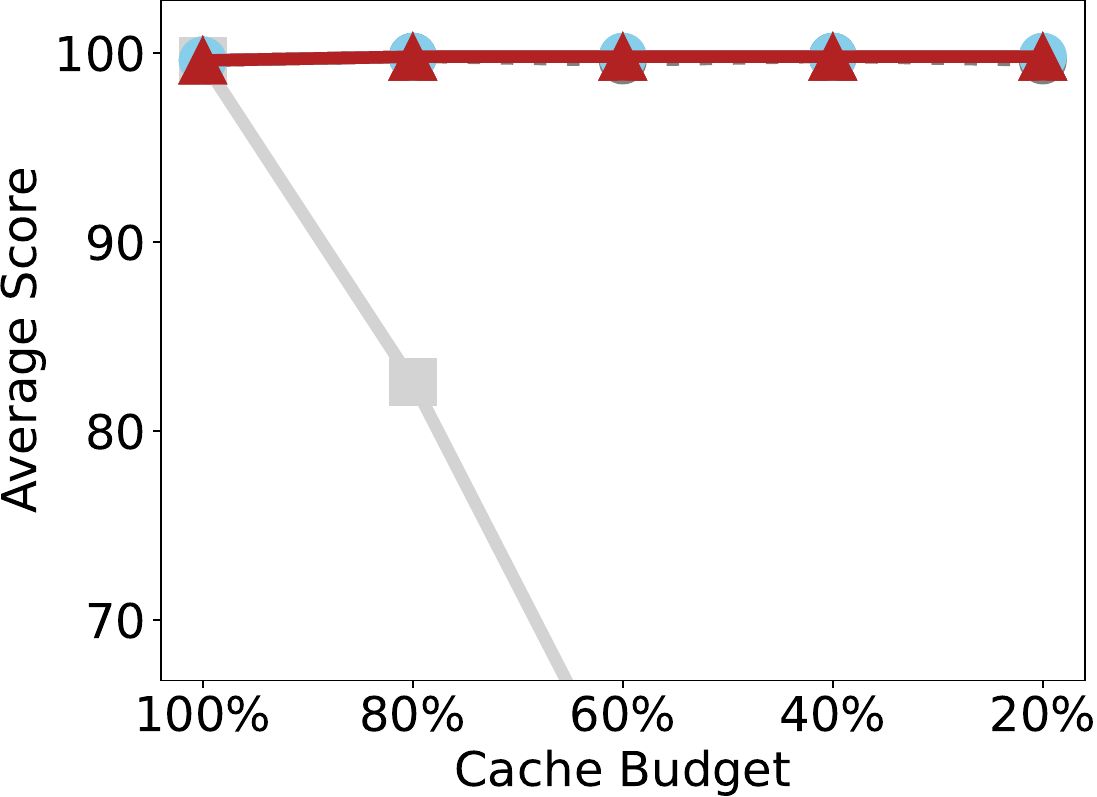}
		\caption{MK-NIAH-1}
	\end{subfigure}
	\begin{subfigure}[b]{0.16\linewidth}
		\centering
		\includegraphics[width=\linewidth]{./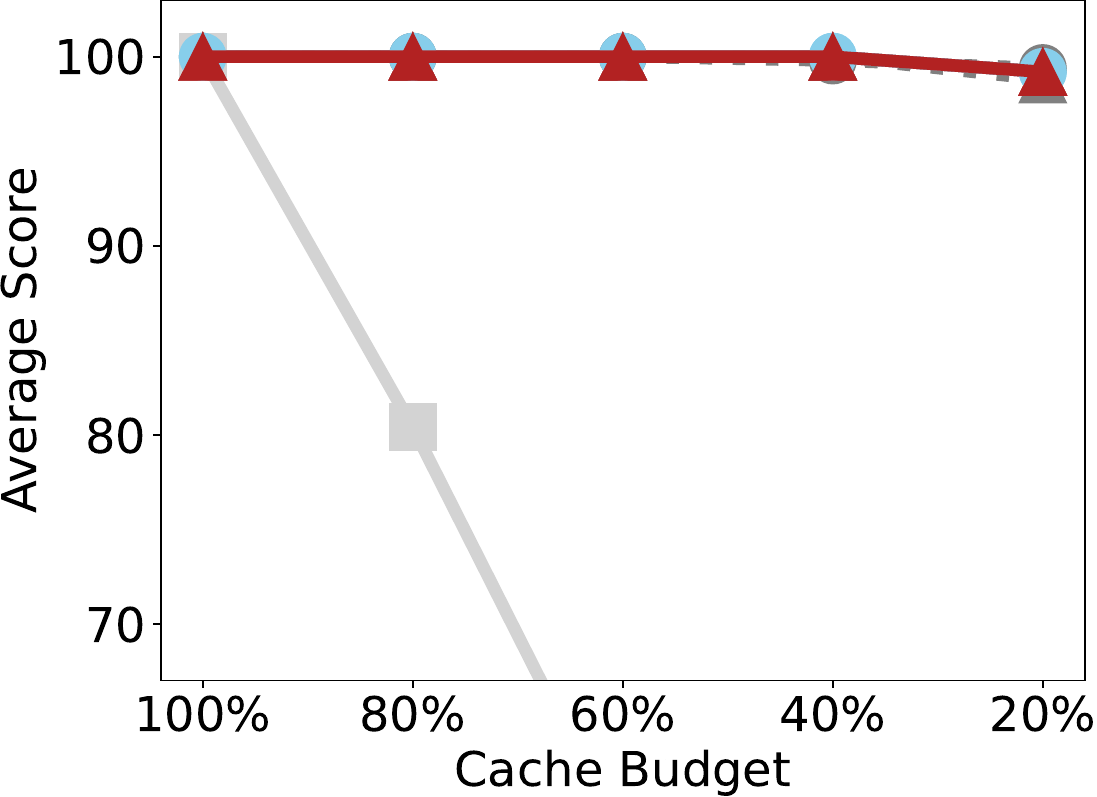} 
		\caption{MK-NIAH-2}
	\end{subfigure}
	\begin{subfigure}[b]{0.16\linewidth}
		\centering
		\includegraphics[width=\linewidth]{./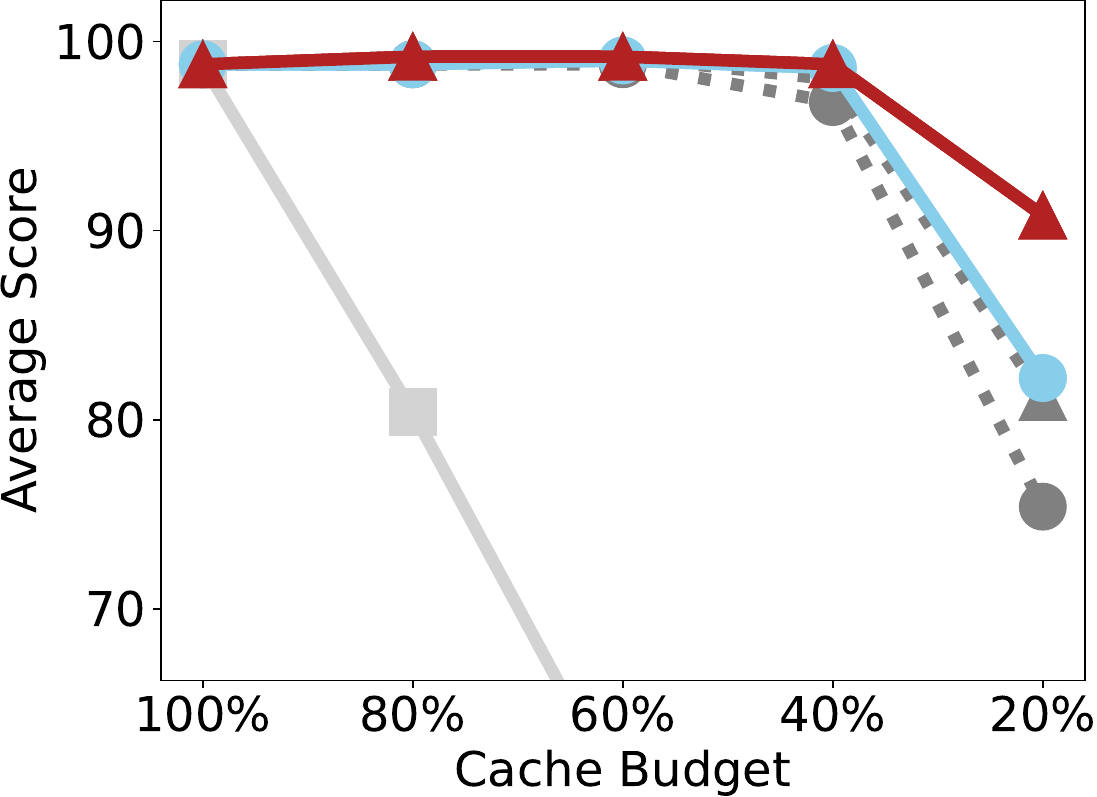}
		\caption{MK-NIAH-3}
	\end{subfigure}
	\begin{subfigure}[b]{0.16\linewidth}
		\centering
		\includegraphics[width=\linewidth]{./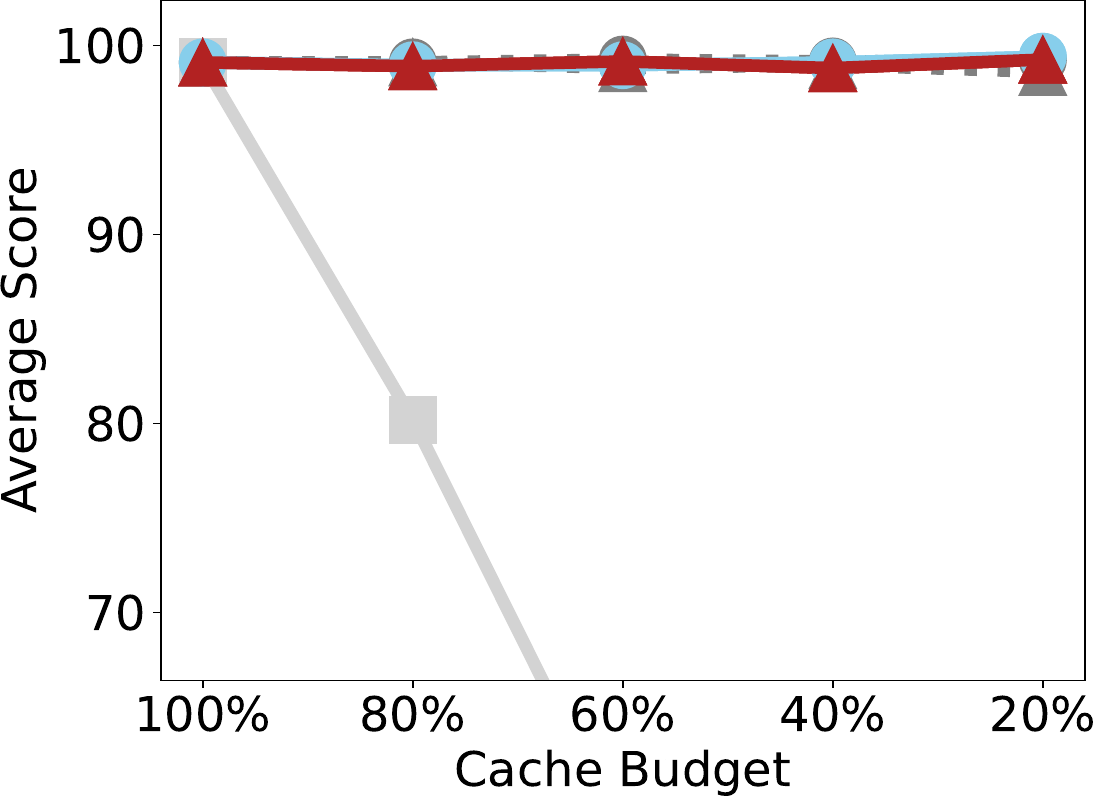}
		\caption{MV-NIAH}
	\end{subfigure}
	\begin{subfigure}[b]{0.16\linewidth}
		\centering
		\includegraphics[width=\linewidth]{./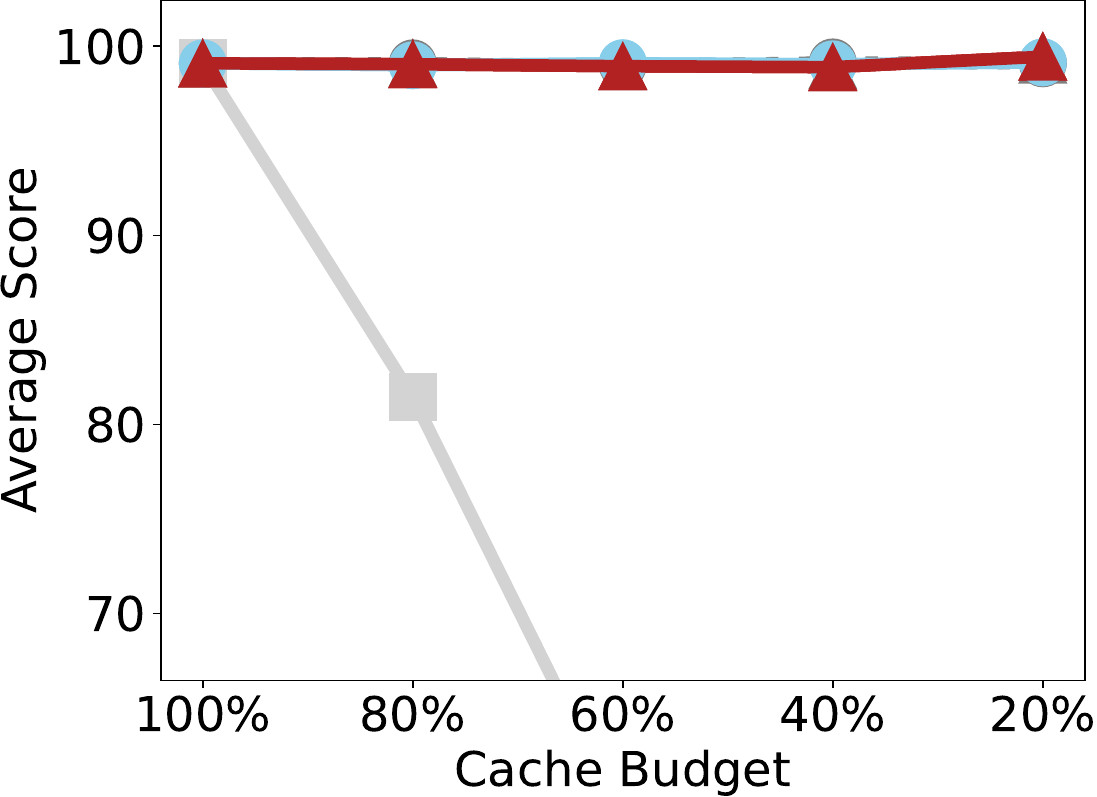}
		\caption{MQ-NIAH}
	\end{subfigure}
	\begin{subfigure}[b]{0.16\linewidth}
		\centering
		\includegraphics[width=\linewidth]{./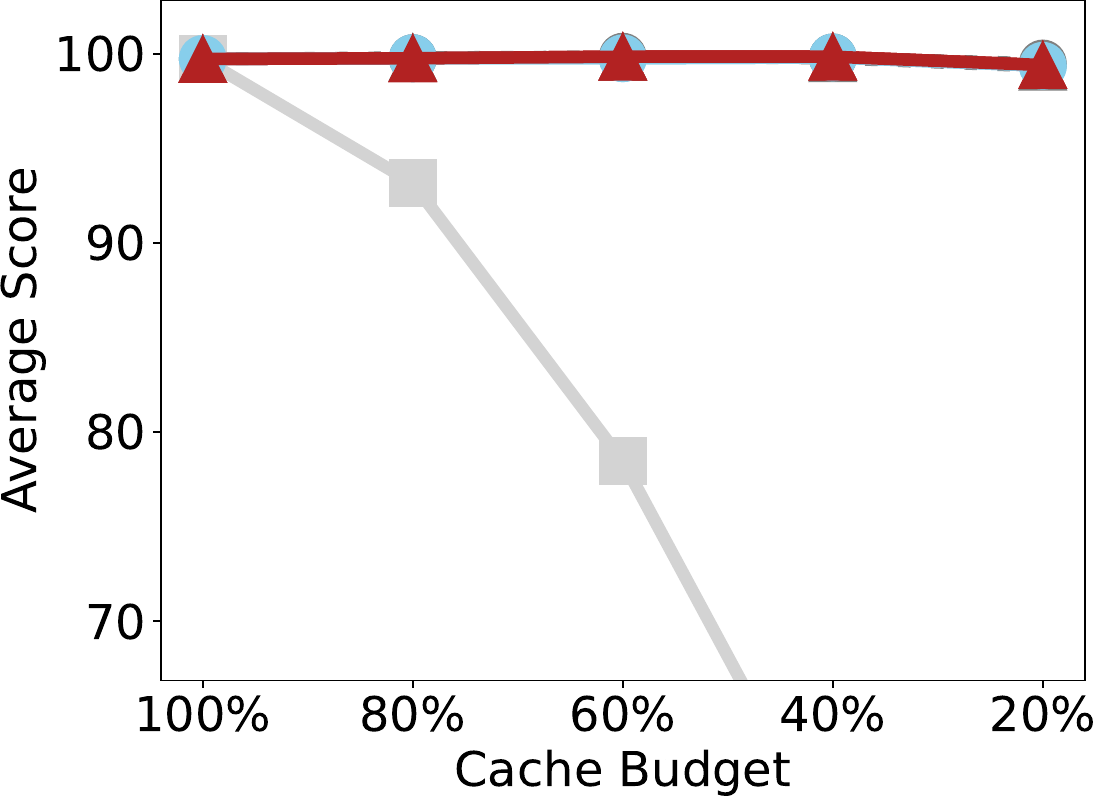} 
		\caption{VT}
	\end{subfigure}
	\begin{subfigure}[b]{0.16\linewidth}
		\centering
		\includegraphics[width=\linewidth]{./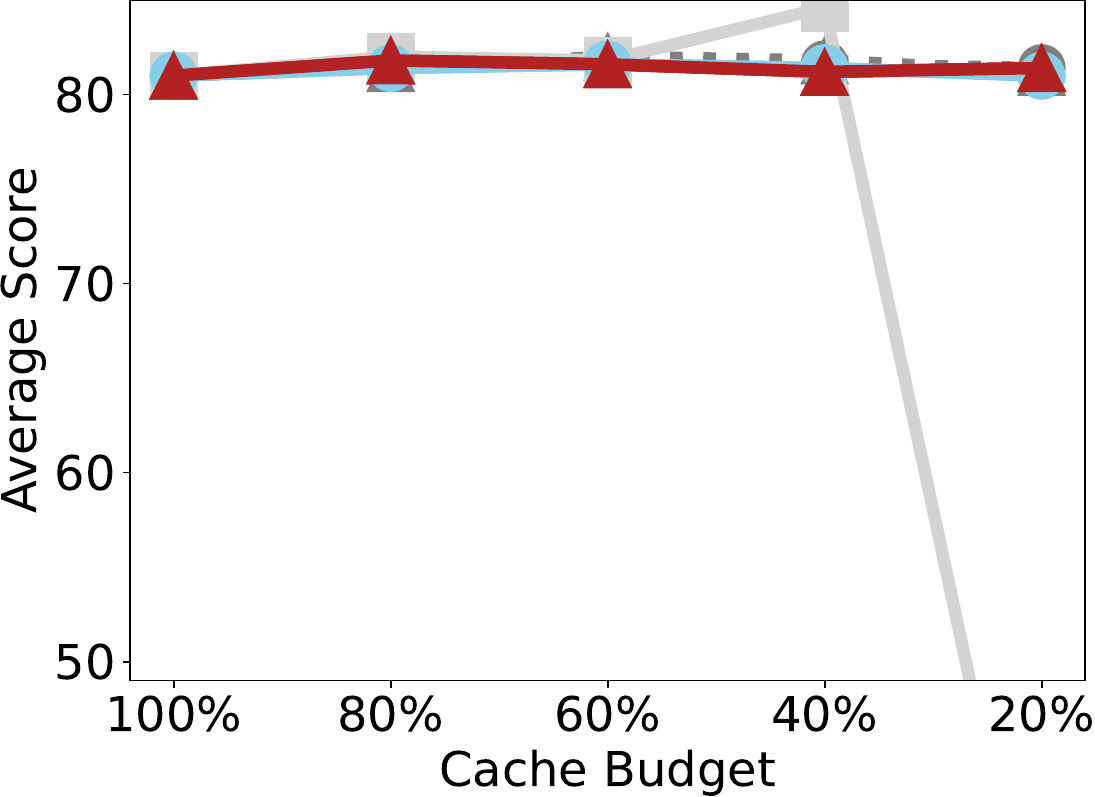}
		\caption{S-QA}
	\end{subfigure}
		\begin{subfigure}[b]{0.16\linewidth}
		\centering
		\includegraphics[width=\linewidth]{./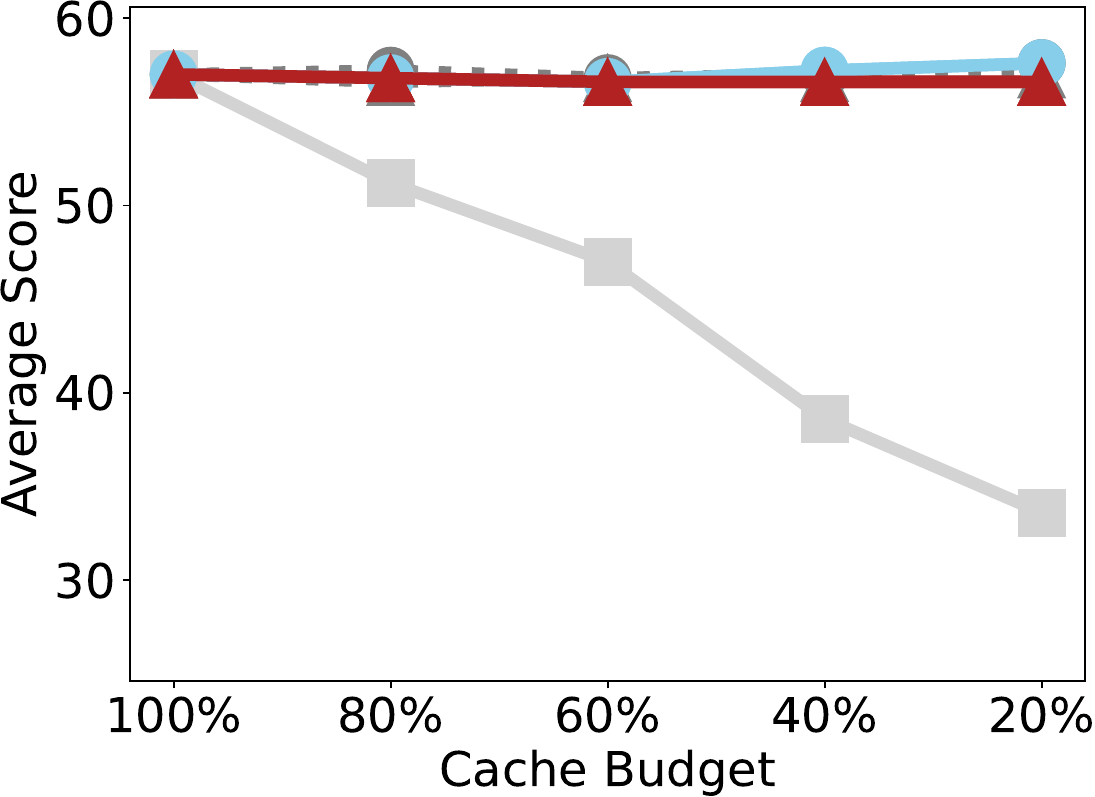}
		\caption{M-QA}
	\end{subfigure}
    \caption{Subtask Analysis on Ruler (Question-aware, Llama3.1-8B-Instruct).}
	\label{fig:aware_llama_ruler_subtask}
\end{figure*}

\begin{figure*}[h!]
	\begin{minipage}{\linewidth}
		\centering
		\includegraphics[width=0.7\linewidth]{./Figures/ruler_per_dataset_group_by_wq/legend.pdf} 
	\end{minipage}
	\begin{subfigure}[b]{0.16\linewidth}
		\centering
		\includegraphics[width=\linewidth]{./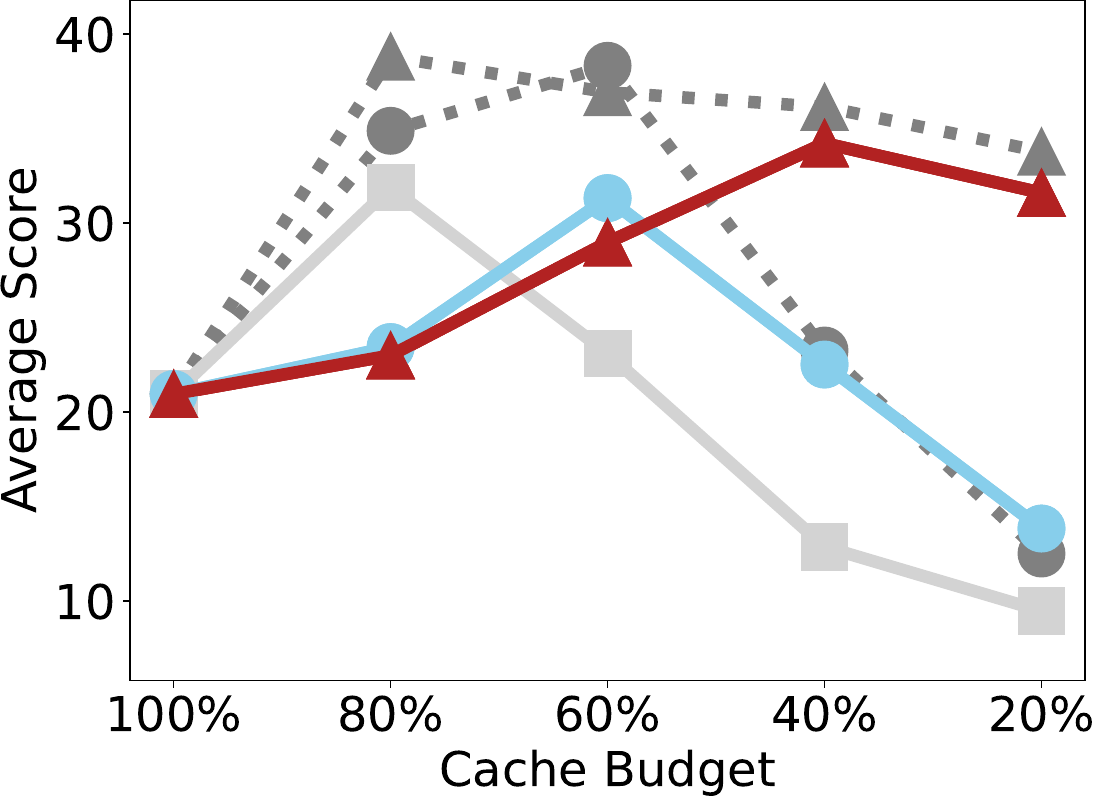} 
		\caption{CWE}
	\end{subfigure}
	\begin{subfigure}[b]{0.16\linewidth}
		\centering
		\includegraphics[width=\linewidth]{./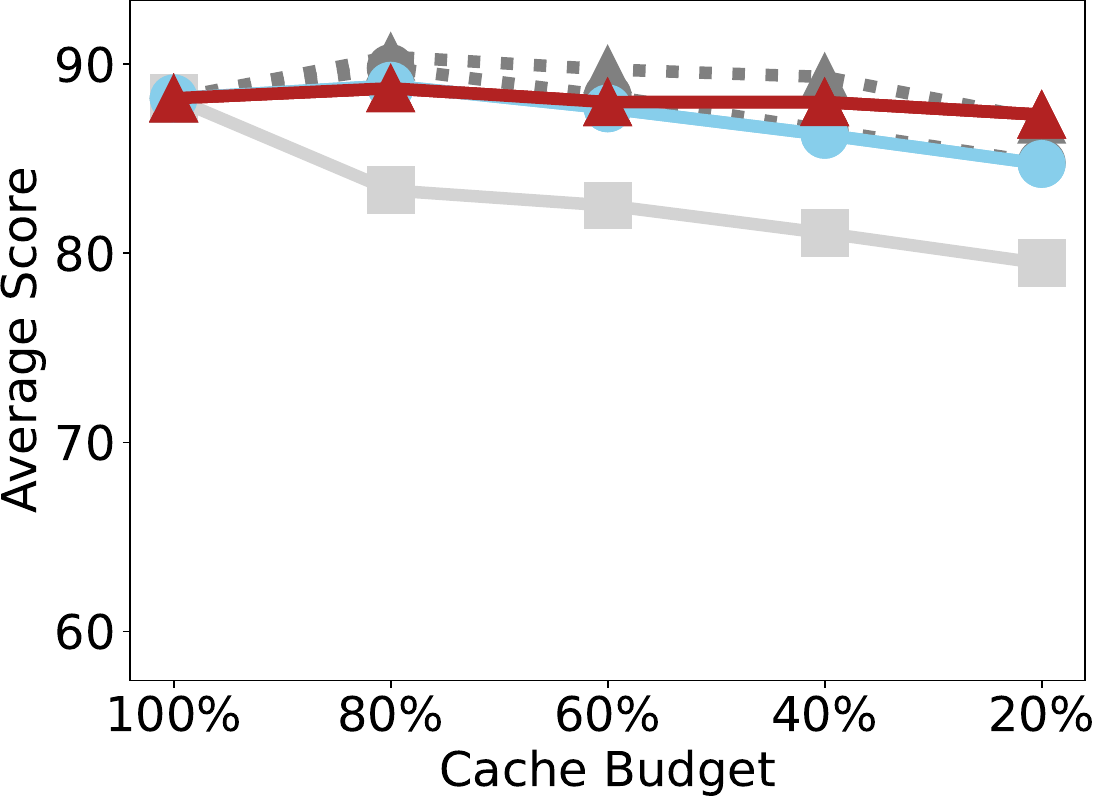}
		\caption{FWE}
	\end{subfigure}
	\begin{subfigure}[b]{0.16\linewidth}
		\centering
		\includegraphics[width=\linewidth]{./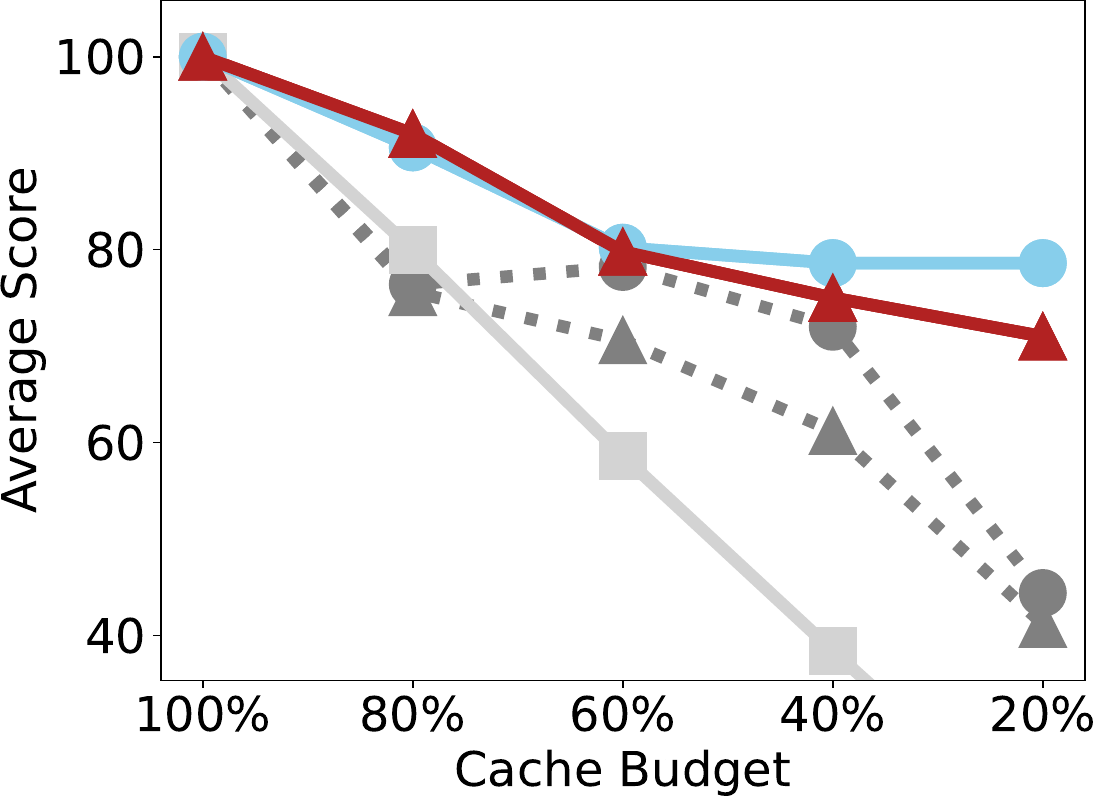} 
		\caption{S-NIAH-1}
	\end{subfigure}
	\begin{subfigure}[b]{0.16\linewidth}
		\centering
		\includegraphics[width=\linewidth]{./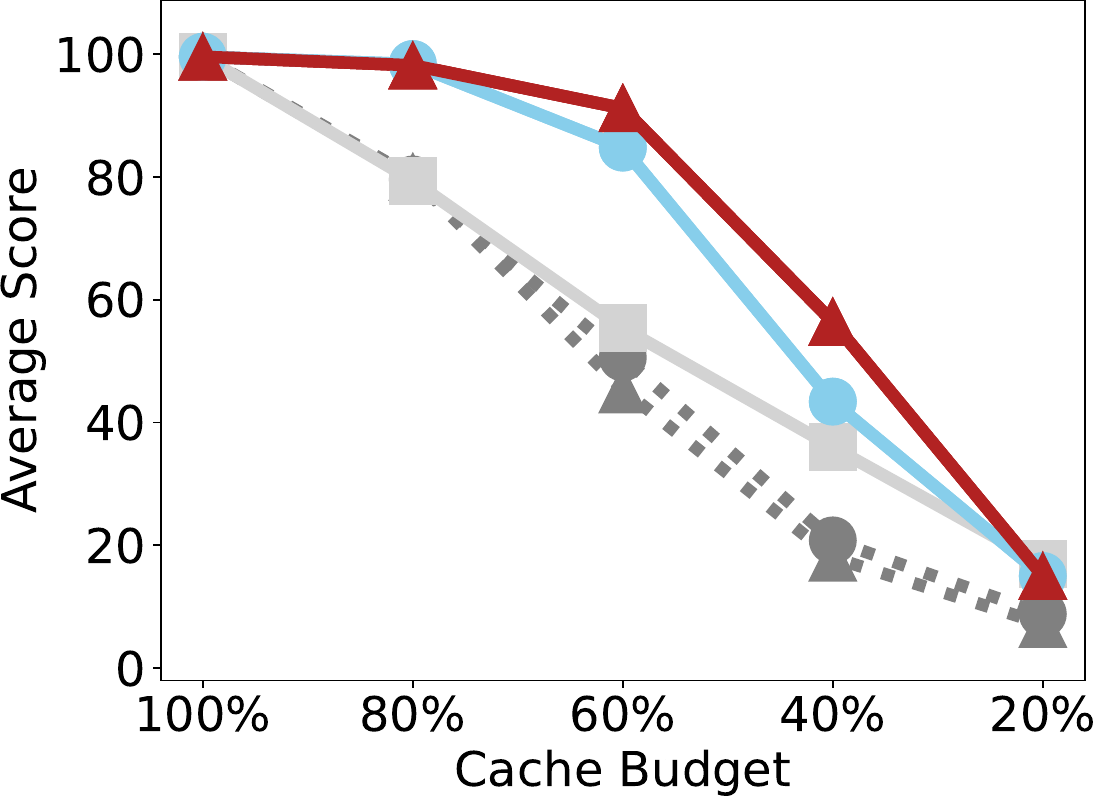}
		\caption{S-NIAH-2}
	\end{subfigure}
	\begin{subfigure}[b]{0.16\linewidth}
		\centering
		\includegraphics[width=\linewidth]{./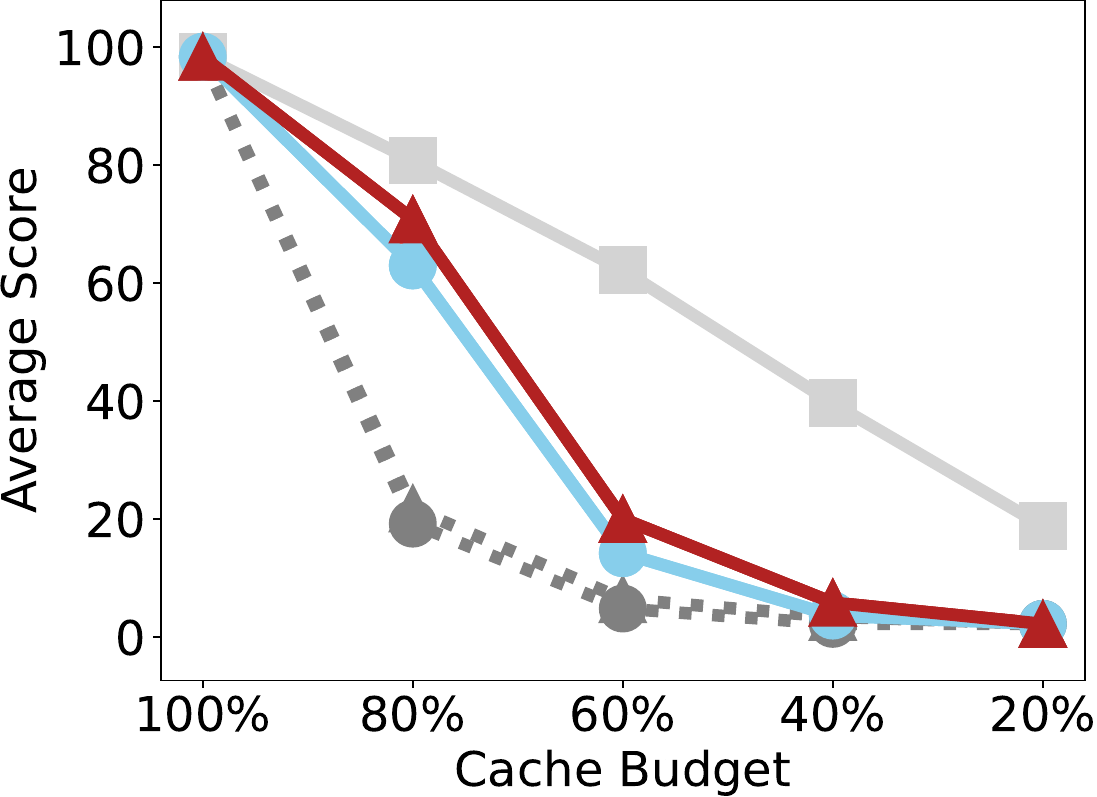} 
		\caption{S-NIAH-3}
	\end{subfigure}
	\begin{subfigure}[b]{0.16\linewidth}
		\centering
		\includegraphics[width=\linewidth]{./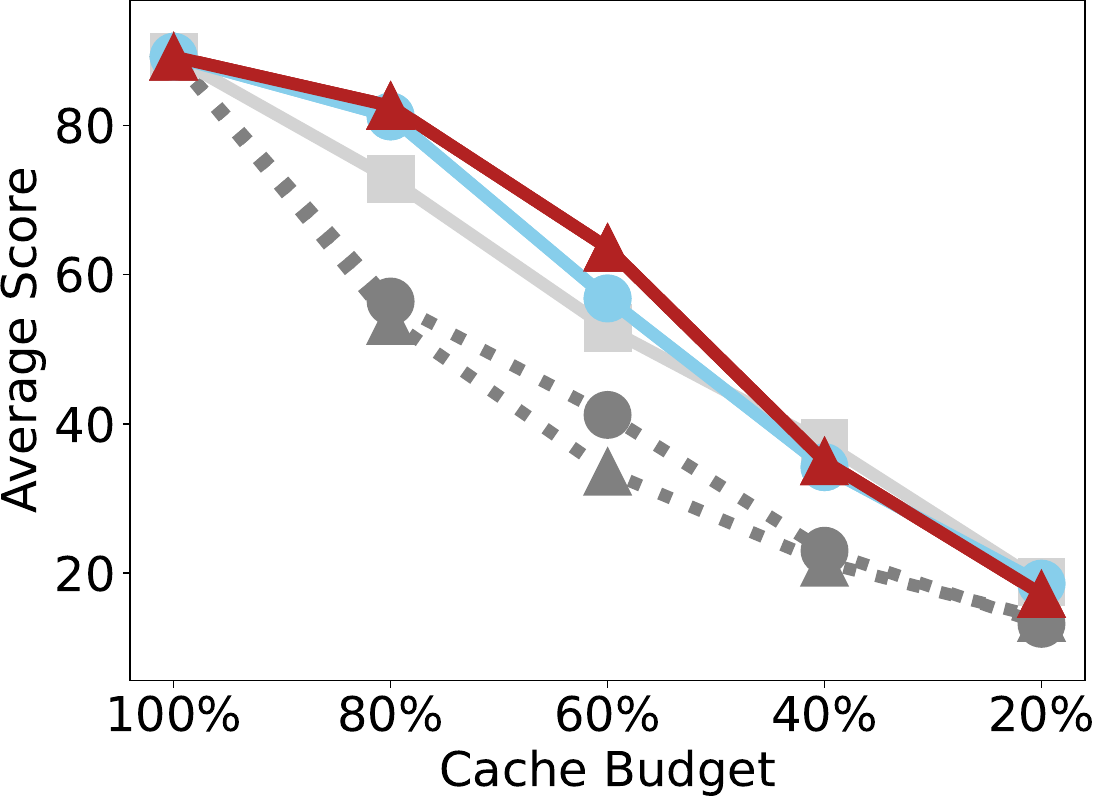}
		\caption{MK-NIAH-1}
	\end{subfigure}
	\begin{subfigure}[b]{0.16\linewidth}
		\centering
		\includegraphics[width=\linewidth]{./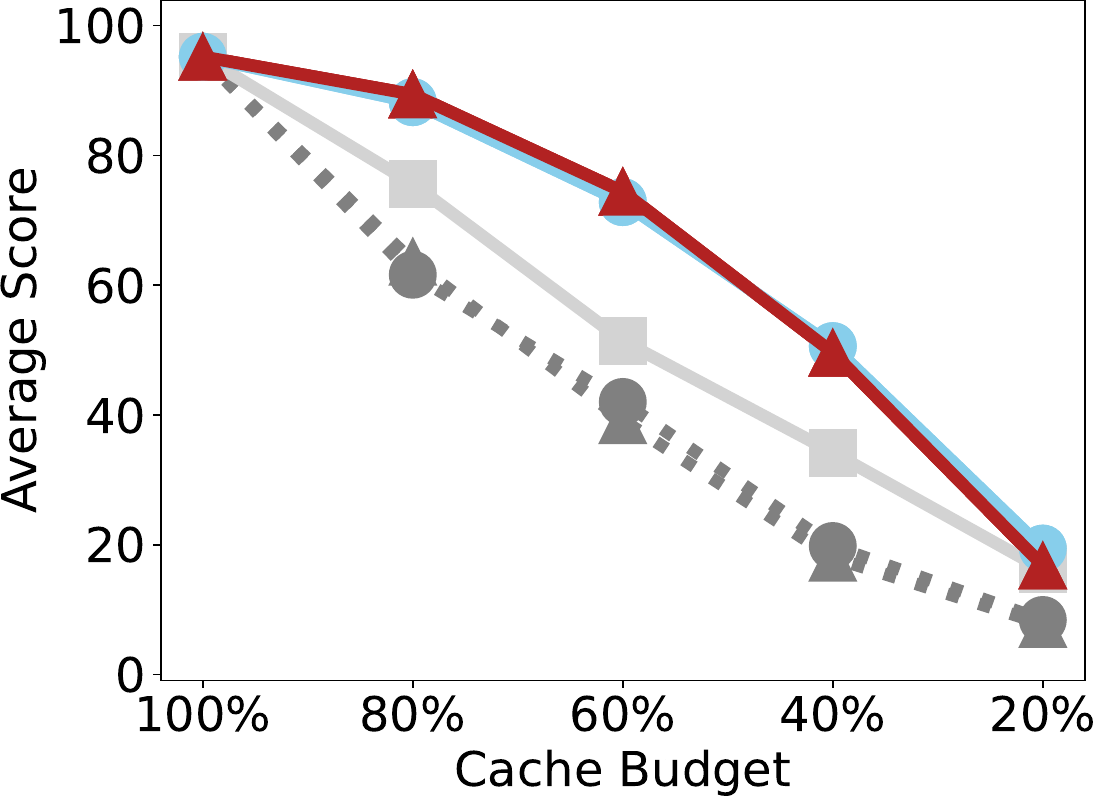} 
		\caption{MK-NIAH-2}
	\end{subfigure}
	\begin{subfigure}[b]{0.16\linewidth}
		\centering
		\includegraphics[width=\linewidth]{./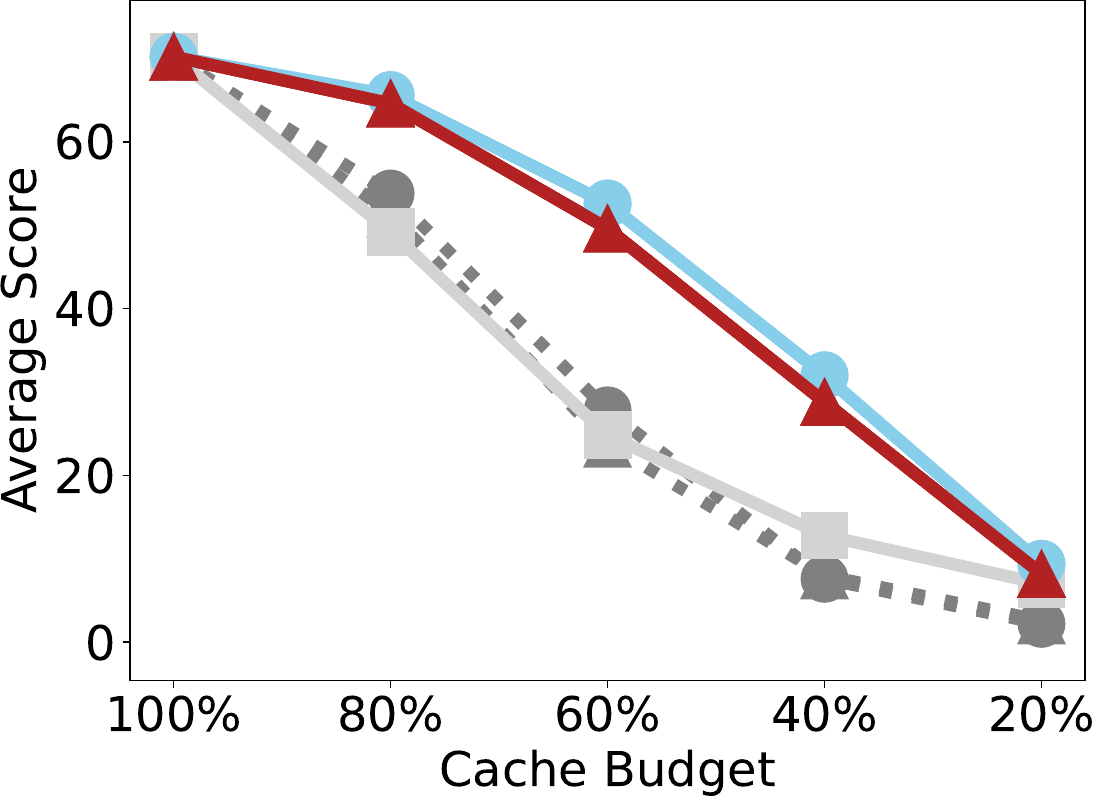}
		\caption{MK-NIAH-3}
	\end{subfigure}
	\begin{subfigure}[b]{0.16\linewidth}
		\centering
		\includegraphics[width=\linewidth]{./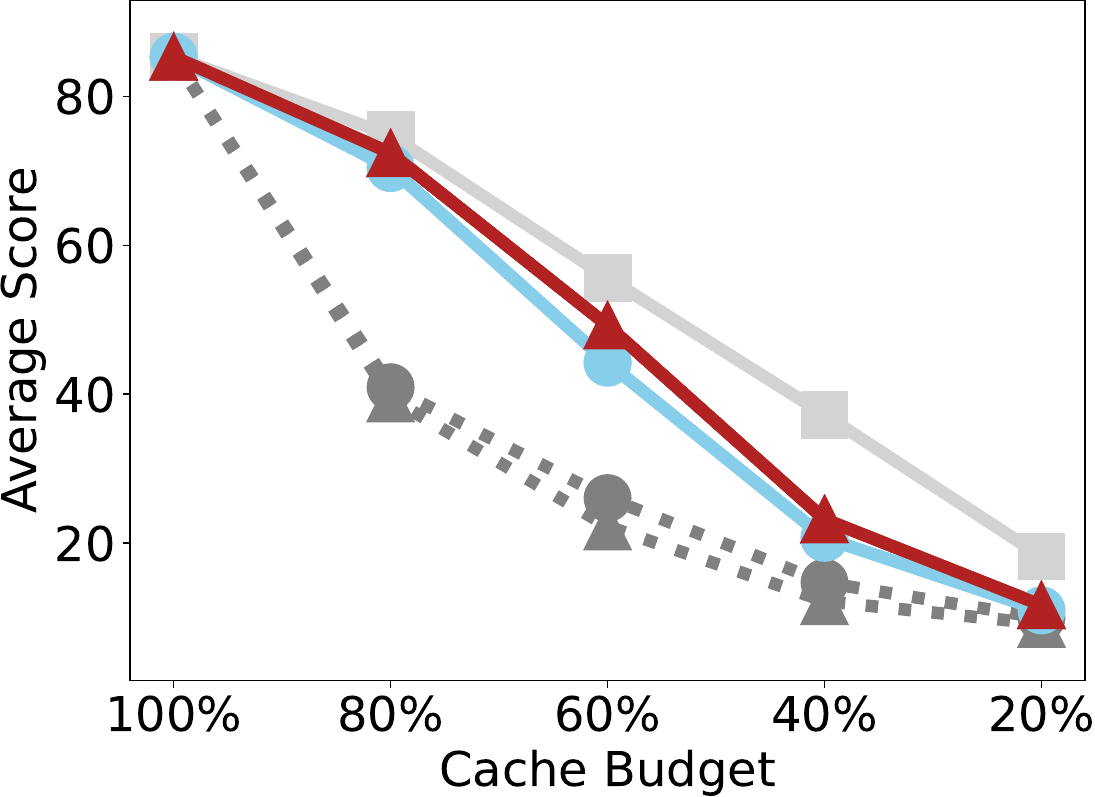}
		\caption{MV-NIAH}
	\end{subfigure}
	\begin{subfigure}[b]{0.16\linewidth}
		\centering
		\includegraphics[width=\linewidth]{./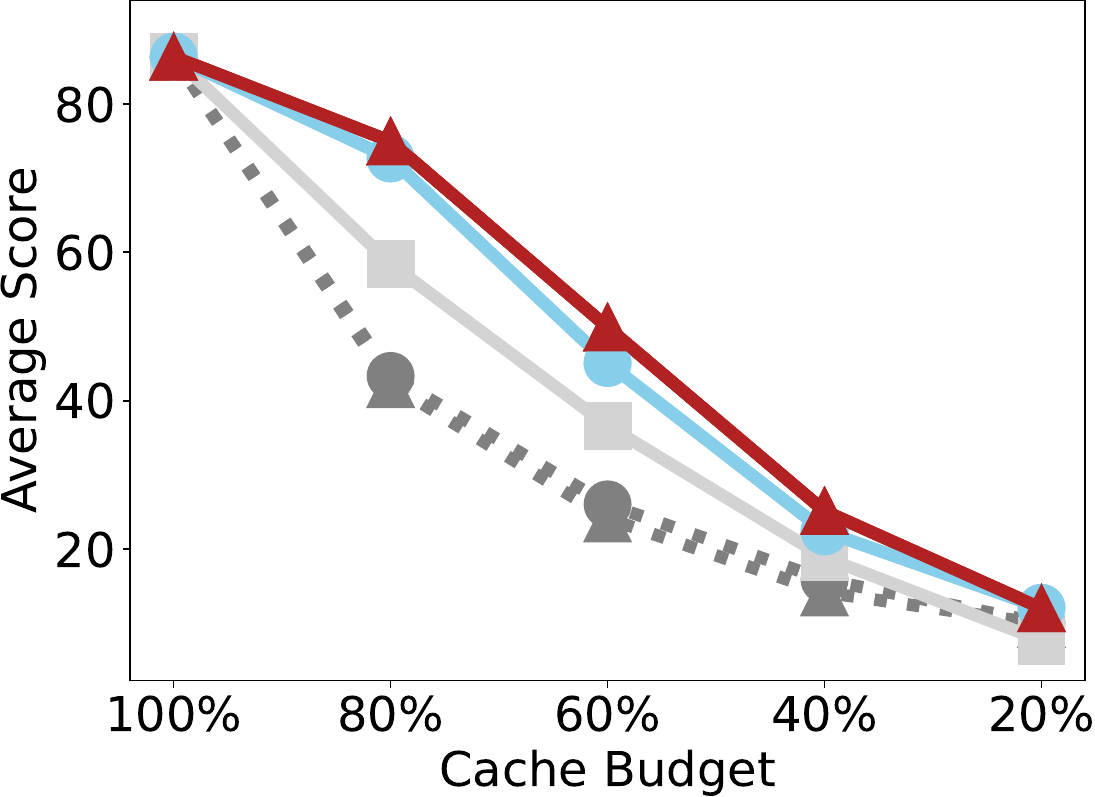}
		\caption{MQ-NIAH}
	\end{subfigure}
	\begin{subfigure}[b]{0.16\linewidth}
		\centering
		\includegraphics[width=\linewidth]{./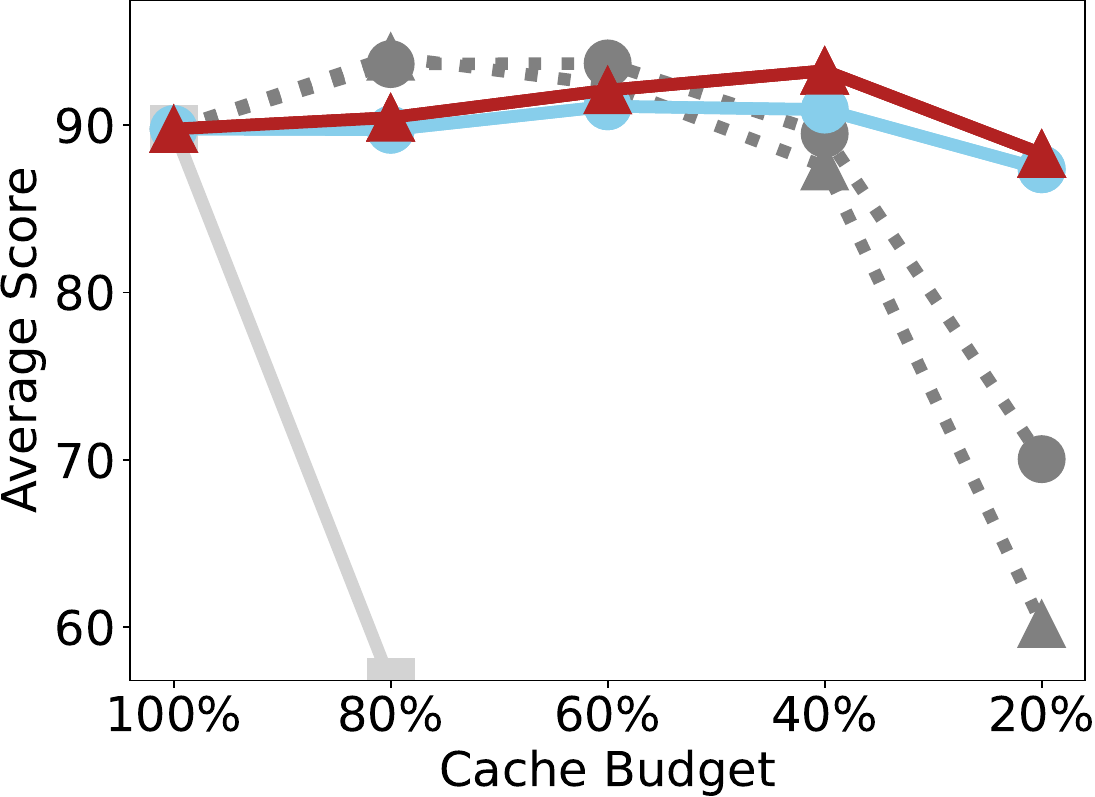} 
		\caption{VT}
	\end{subfigure}
	\begin{subfigure}[b]{0.16\linewidth}
		\centering
		\includegraphics[width=\linewidth]{./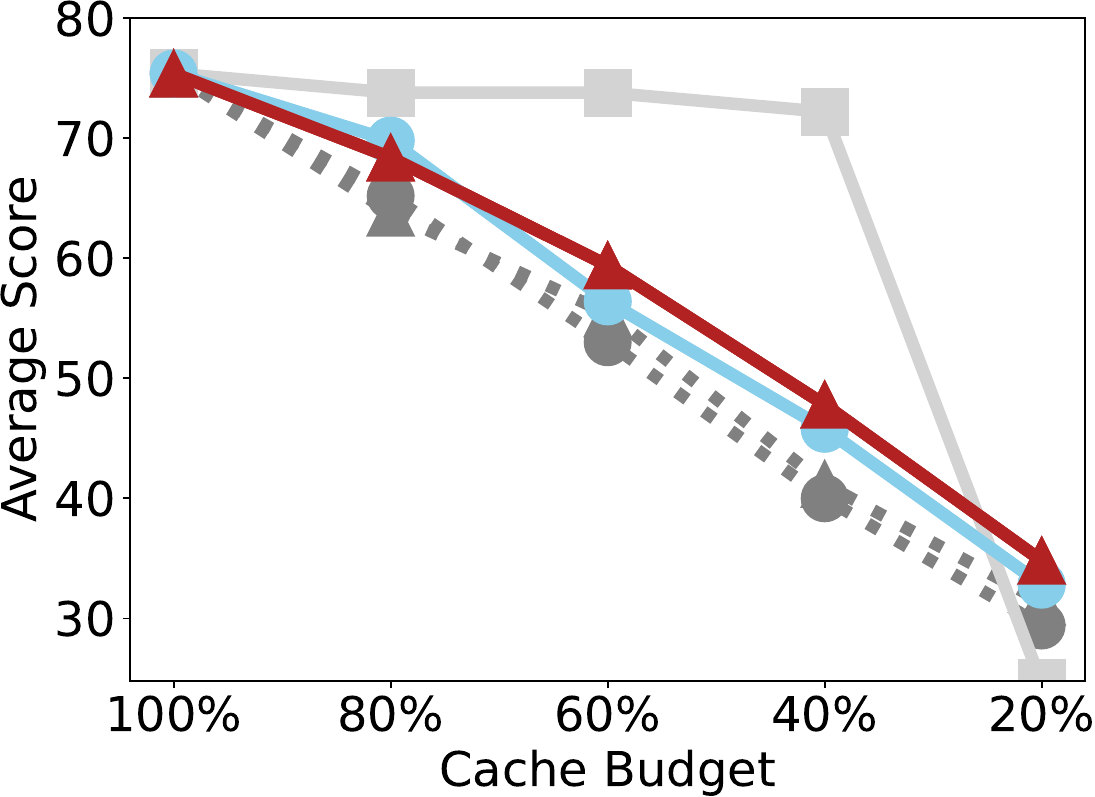}
		\caption{S-QA}
	\end{subfigure}
	\begin{subfigure}[b]{0.16\linewidth}
		\centering
		\includegraphics[width=\linewidth]{./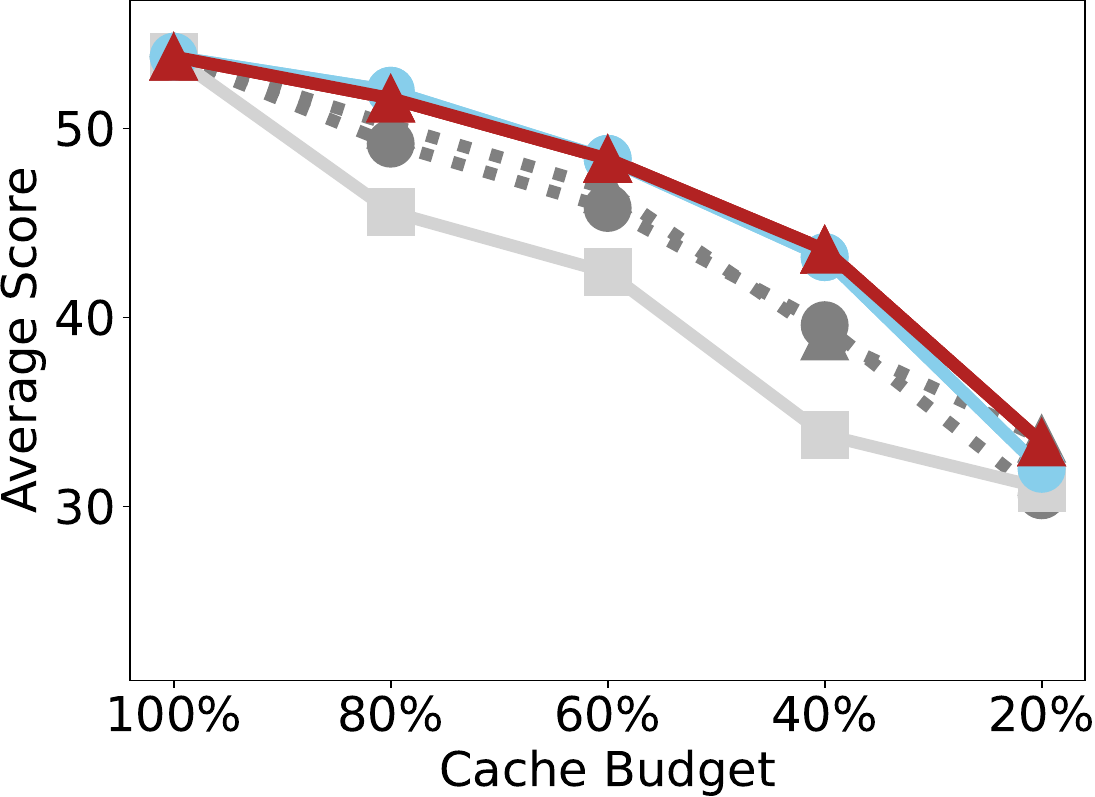}
		\caption{M-QA}
	\end{subfigure}
    \caption{Subtask Analysis on  Ruler (Question-agnostic, Mistral-7B-Instruct-v0.2).}
	\label{fig:agnostic_mistral_ruler_subtask}
\end{figure*}

\begin{figure*}[h!]
	\begin{minipage}{\linewidth}
	\centering
	\includegraphics[width=0.7\linewidth]{./Figures/ruler_per_dataset_group_by_wq/legend.pdf} 
	\end{minipage}
	\begin{subfigure}[b]{0.16\linewidth}
		\centering
		\includegraphics[width=\linewidth]{./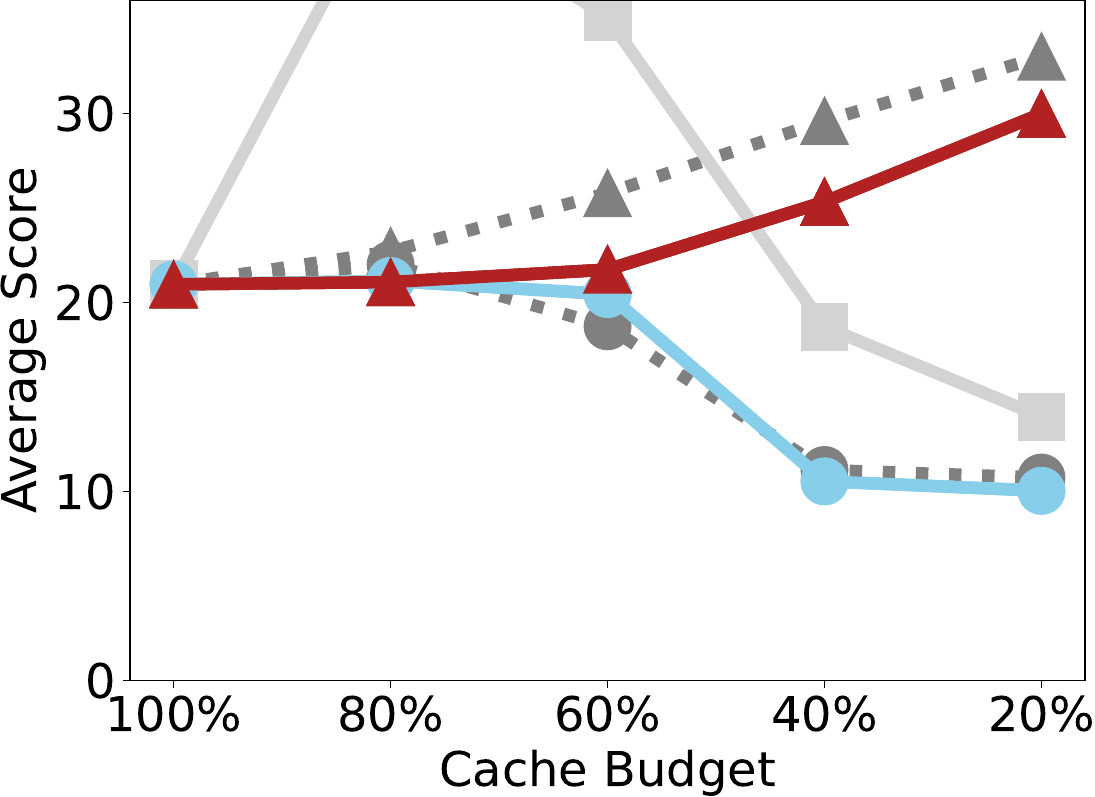} 
		\caption{CWE}
	\end{subfigure}
	\begin{subfigure}[b]{0.16\linewidth}
		\centering
		\includegraphics[width=\linewidth]{./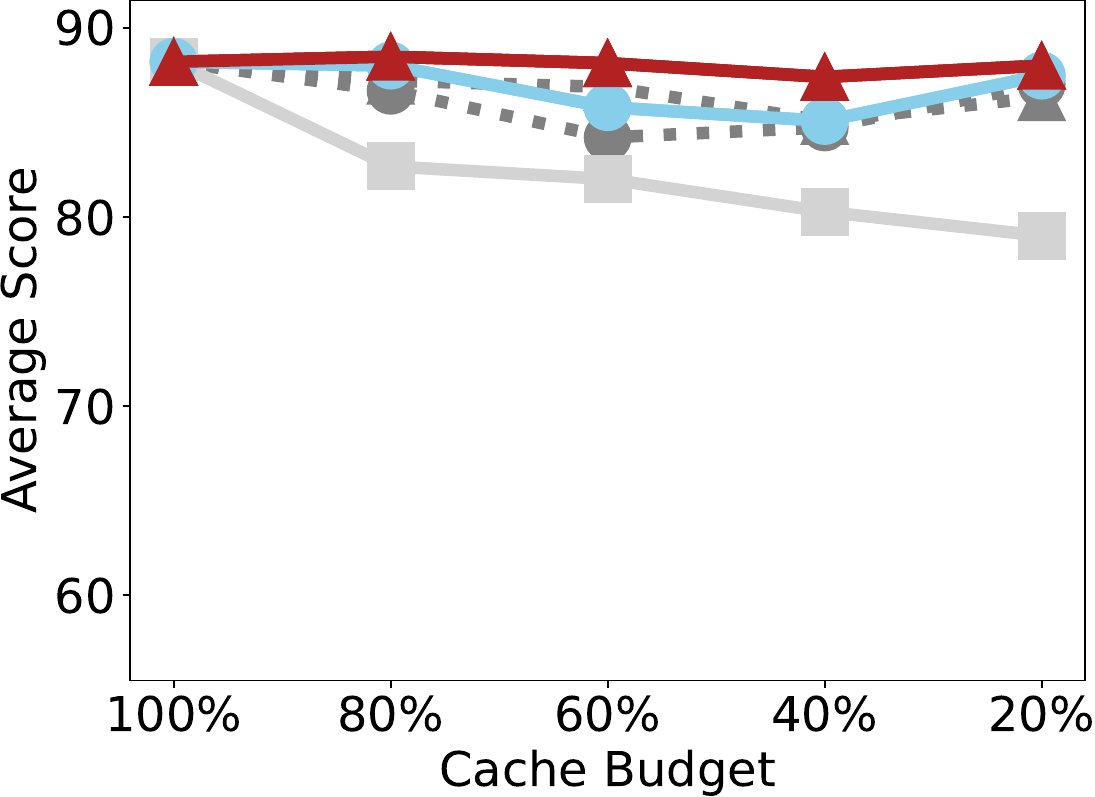}
		\caption{FWE}
	\end{subfigure}
	\begin{subfigure}[b]{0.16\linewidth}
		\centering
		\includegraphics[width=\linewidth]{./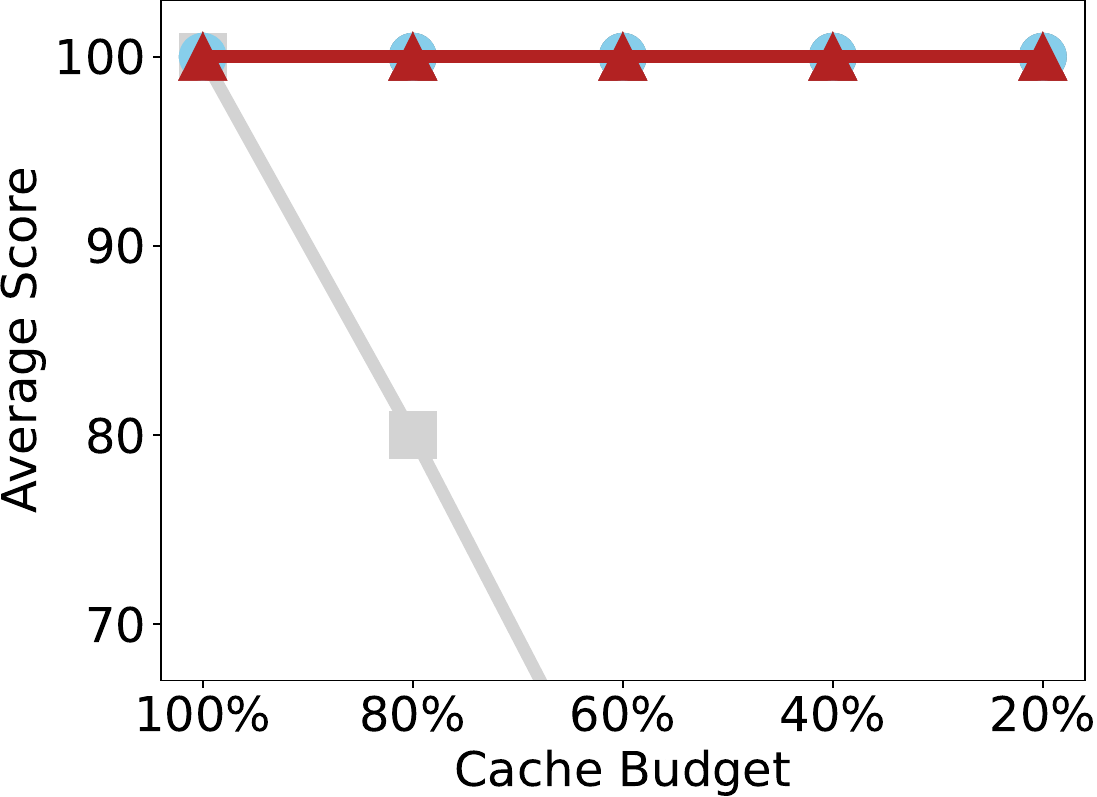} 
		\caption{S-NIAH-1}
	\end{subfigure}
	\begin{subfigure}[b]{0.16\linewidth}
		\centering
		\includegraphics[width=\linewidth]{./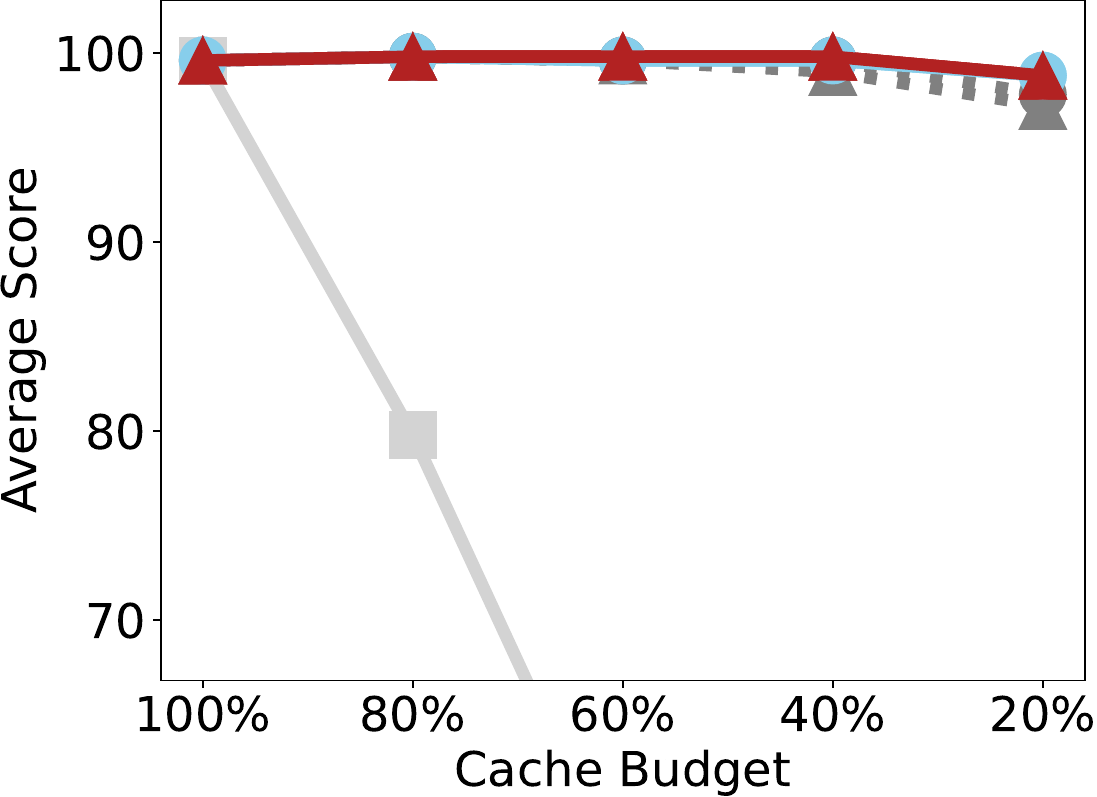}
		\caption{S-NIAH-2}
	\end{subfigure}
	\begin{subfigure}[b]{0.16\linewidth}
		\centering
		\includegraphics[width=\linewidth]{./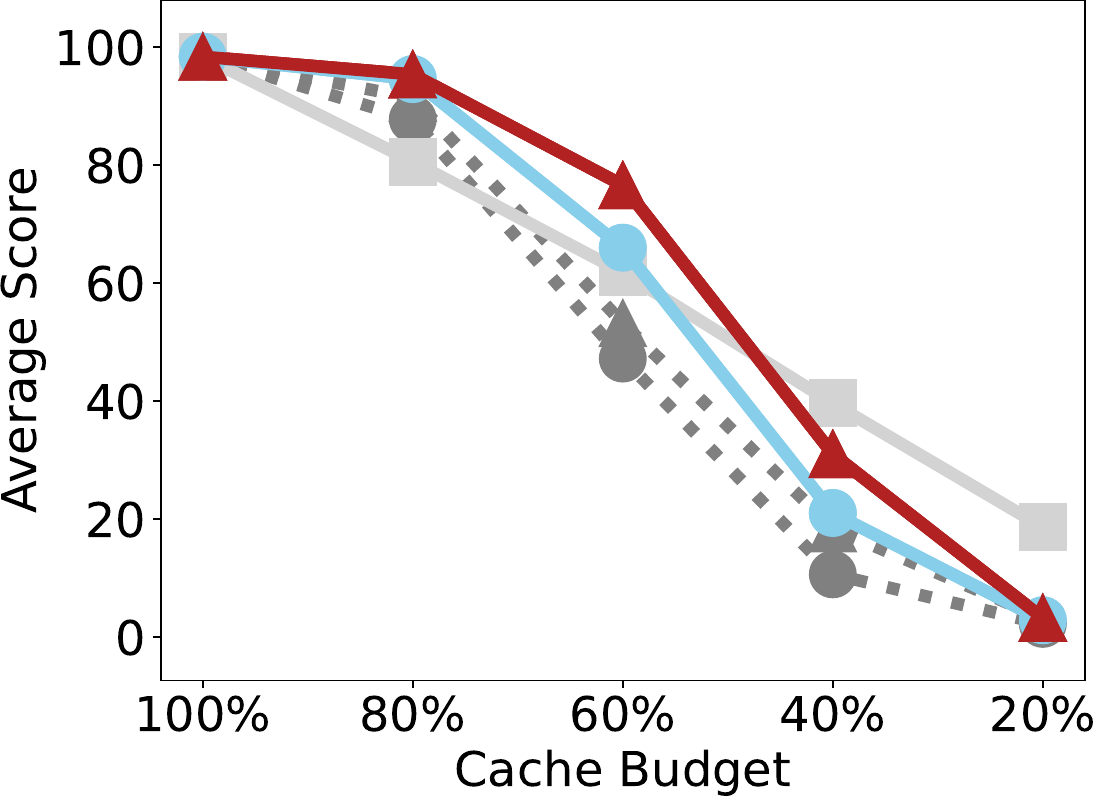} 
		\caption{S-NIAH-3}
	\end{subfigure}
	\begin{subfigure}[b]{0.16\linewidth}
		\centering
		\includegraphics[width=\linewidth]{./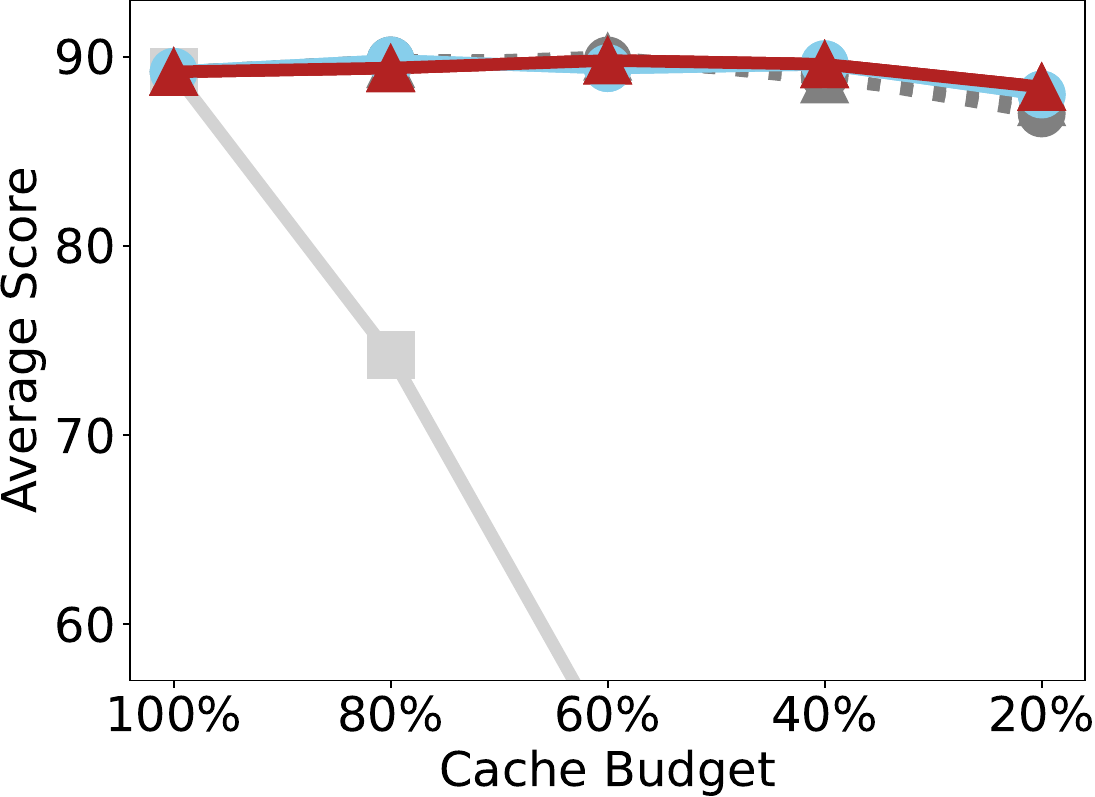}
		\caption{MK-NIAH-1}
	\end{subfigure}
	\begin{subfigure}[b]{0.16\linewidth}
		\centering
		\includegraphics[width=\linewidth]{./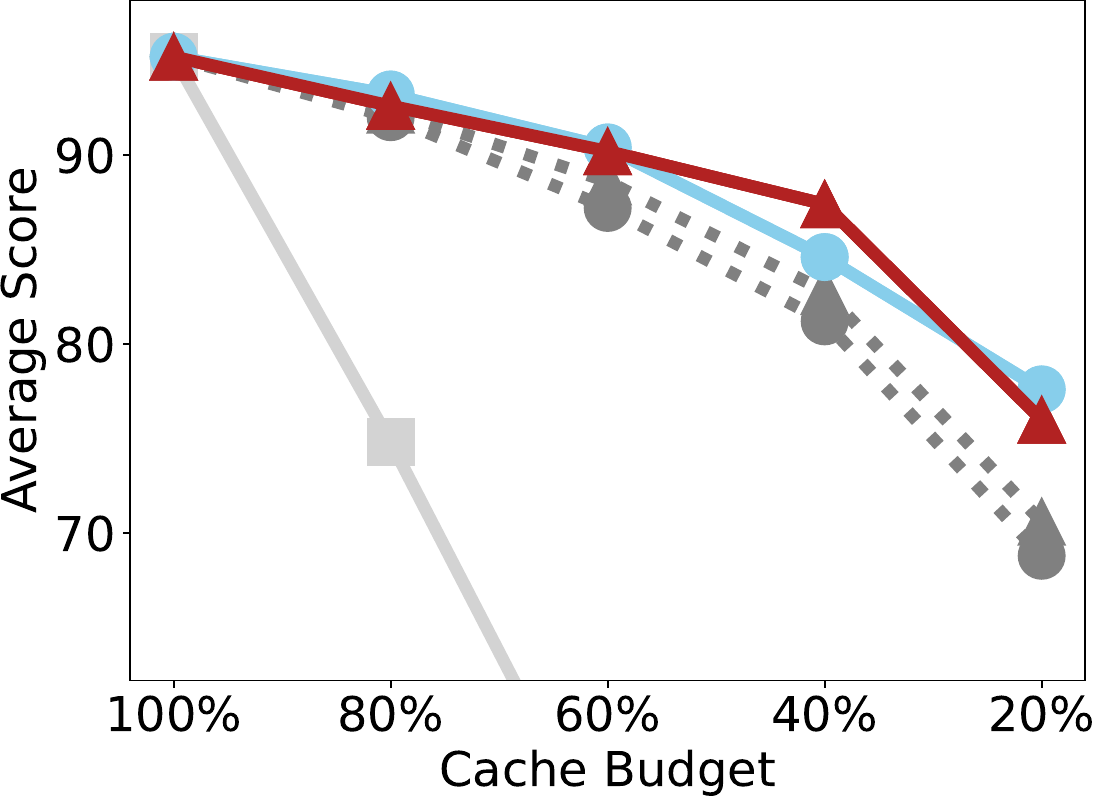} 
		\caption{MK-NIAH-2}
	\end{subfigure}
	\begin{subfigure}[b]{0.16\linewidth}
		\centering
		\includegraphics[width=\linewidth]{./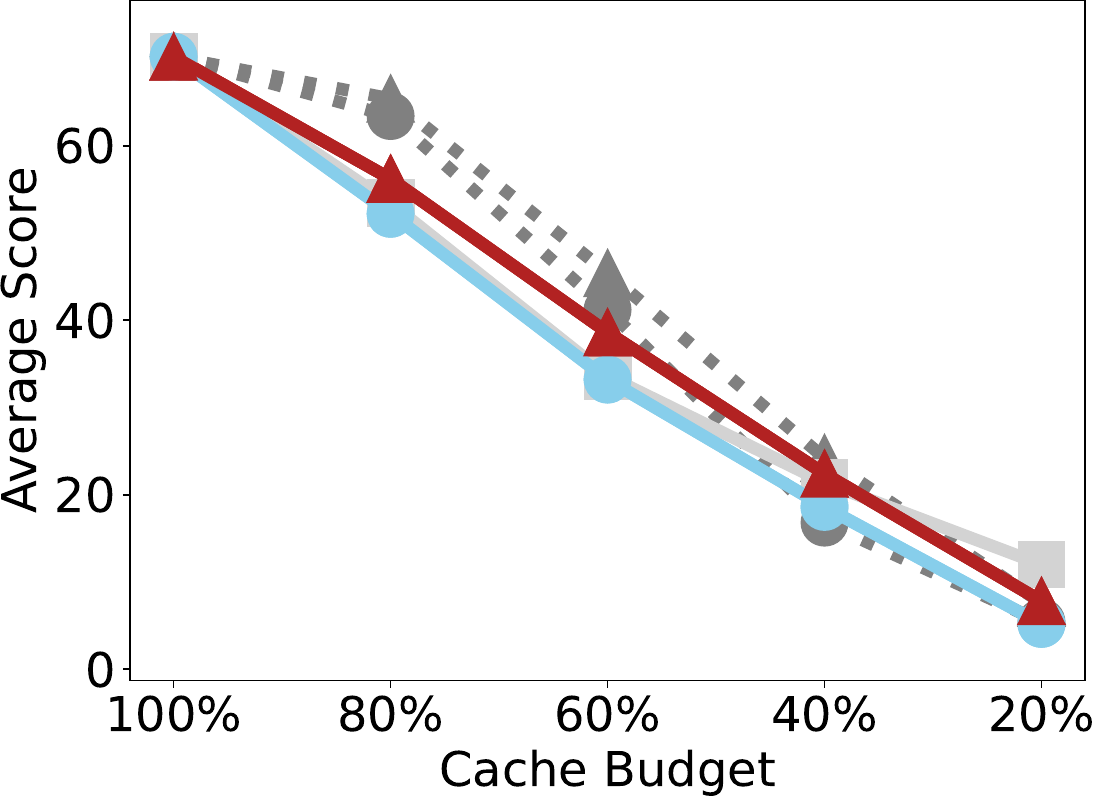}
		\caption{MK-NIAH-3}
	\end{subfigure}
	\begin{subfigure}[b]{0.16\linewidth}
		\centering
		\includegraphics[width=\linewidth]{./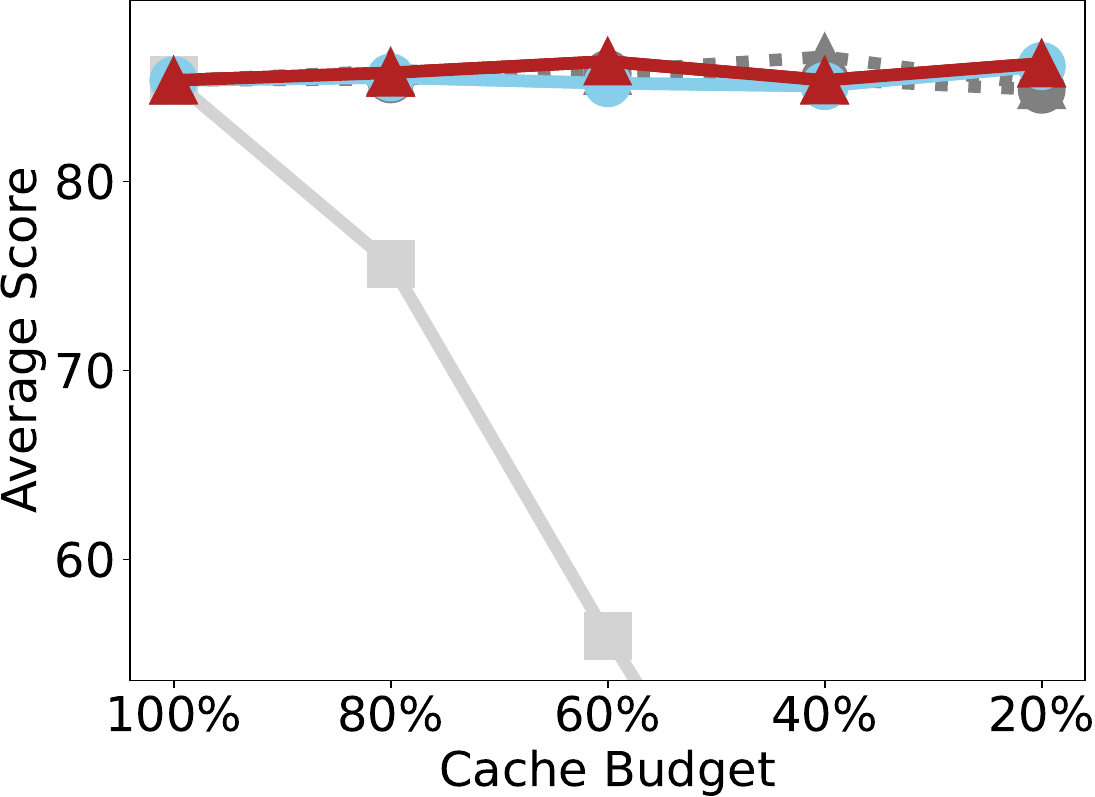}
		\caption{MV-NIAH}
	\end{subfigure}
	\begin{subfigure}[b]{0.16\linewidth}
		\centering
		\includegraphics[width=\linewidth]{./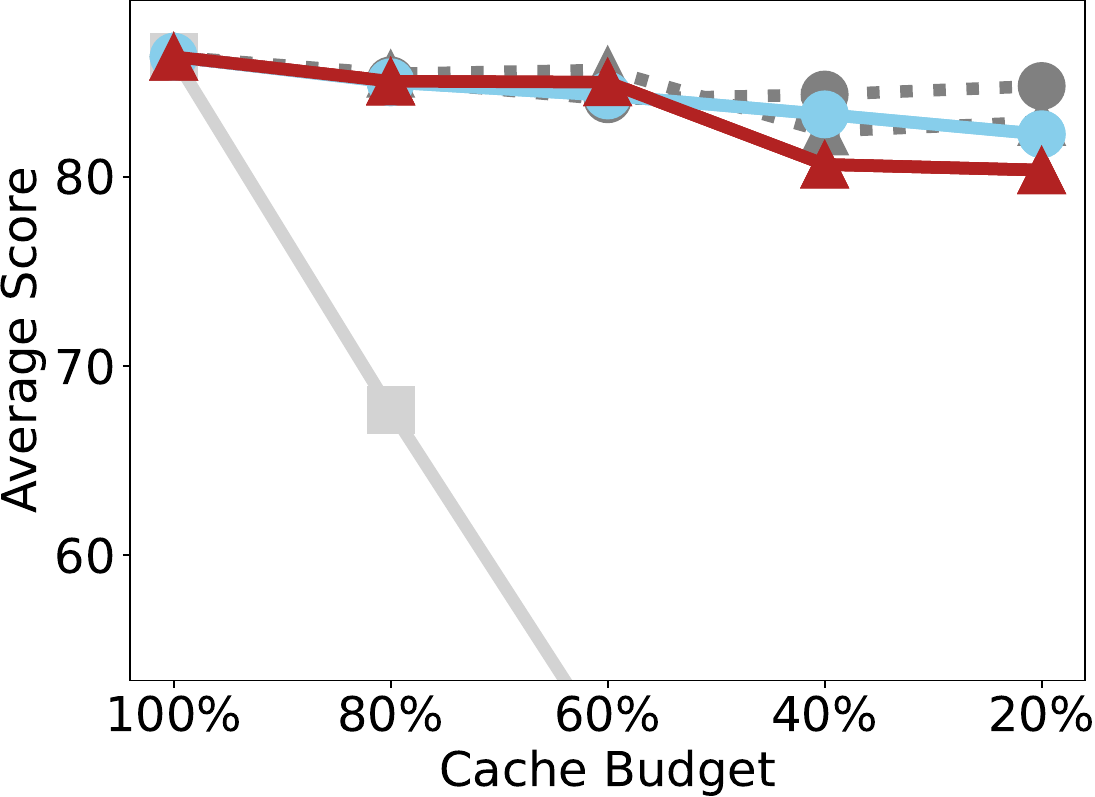}
		\caption{MQ-NIAH}
	\end{subfigure}
	\begin{subfigure}[b]{0.16\linewidth}
		\centering
		\includegraphics[width=\linewidth]{./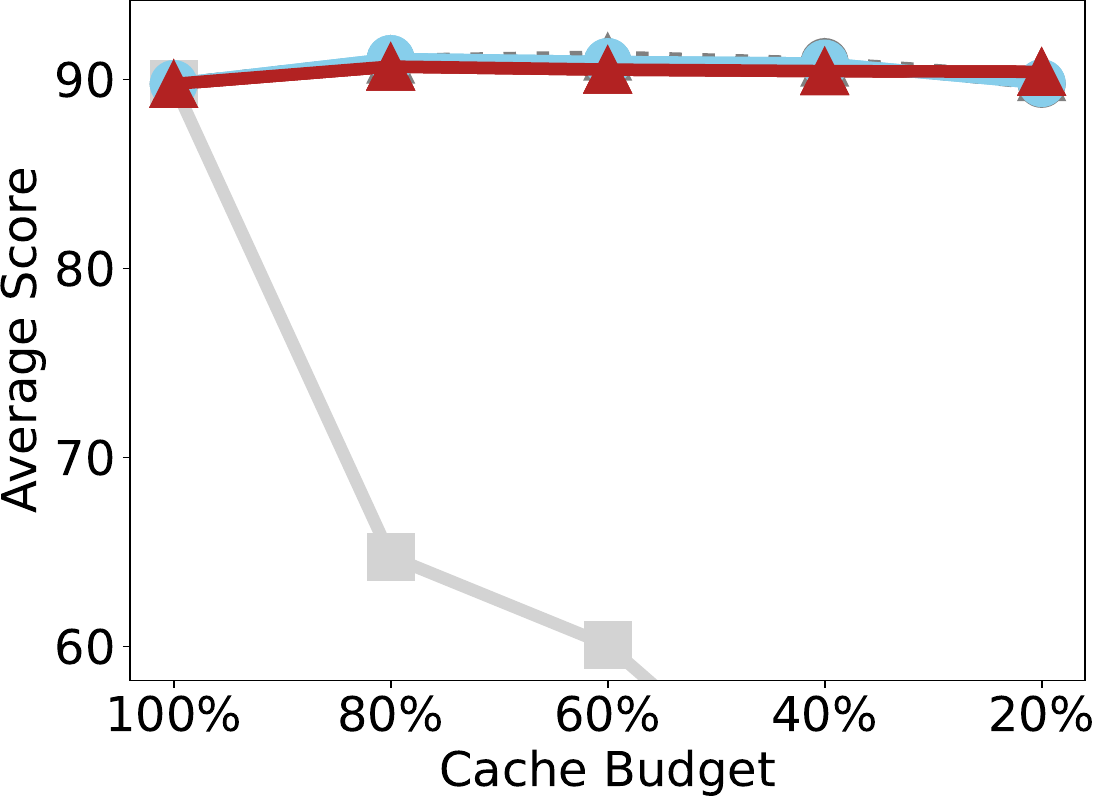} 
		\caption{VT}
	\end{subfigure}
	\begin{subfigure}[b]{0.16\linewidth}
		\centering
		\includegraphics[width=\linewidth]{./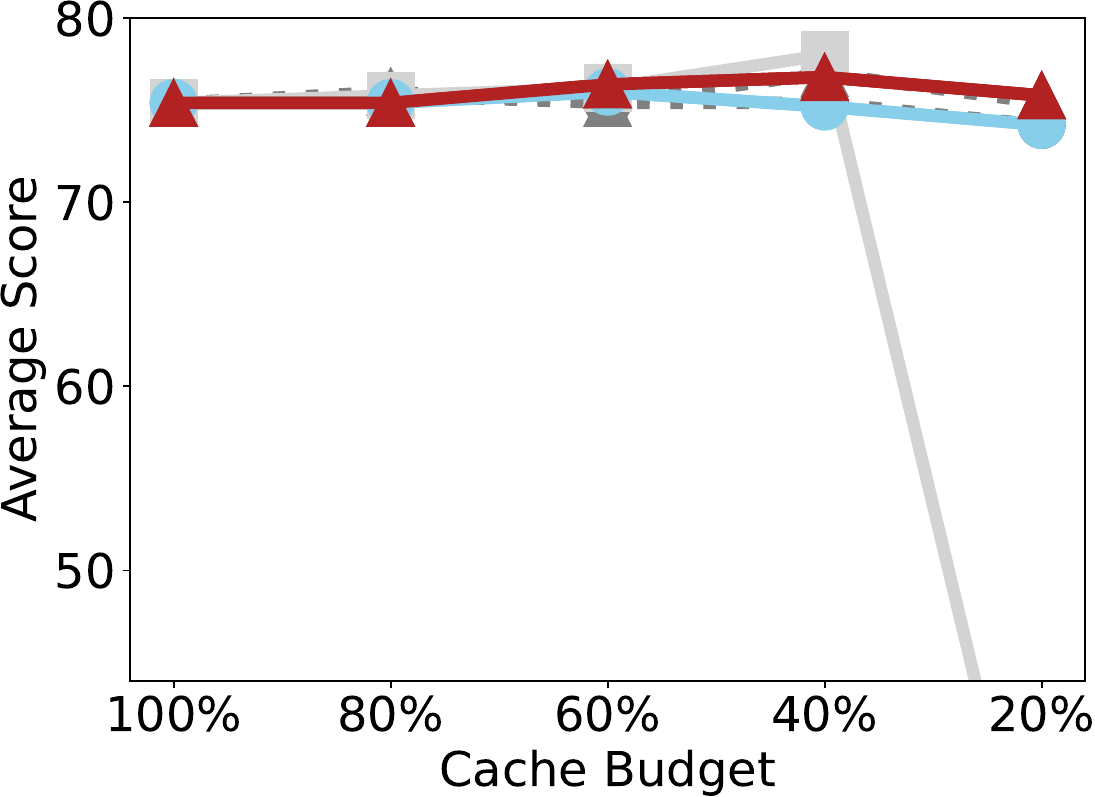}
		\caption{S-QA}
	\end{subfigure}
		\begin{subfigure}[b]{0.16\linewidth}
		\centering
		\includegraphics[width=\linewidth]{./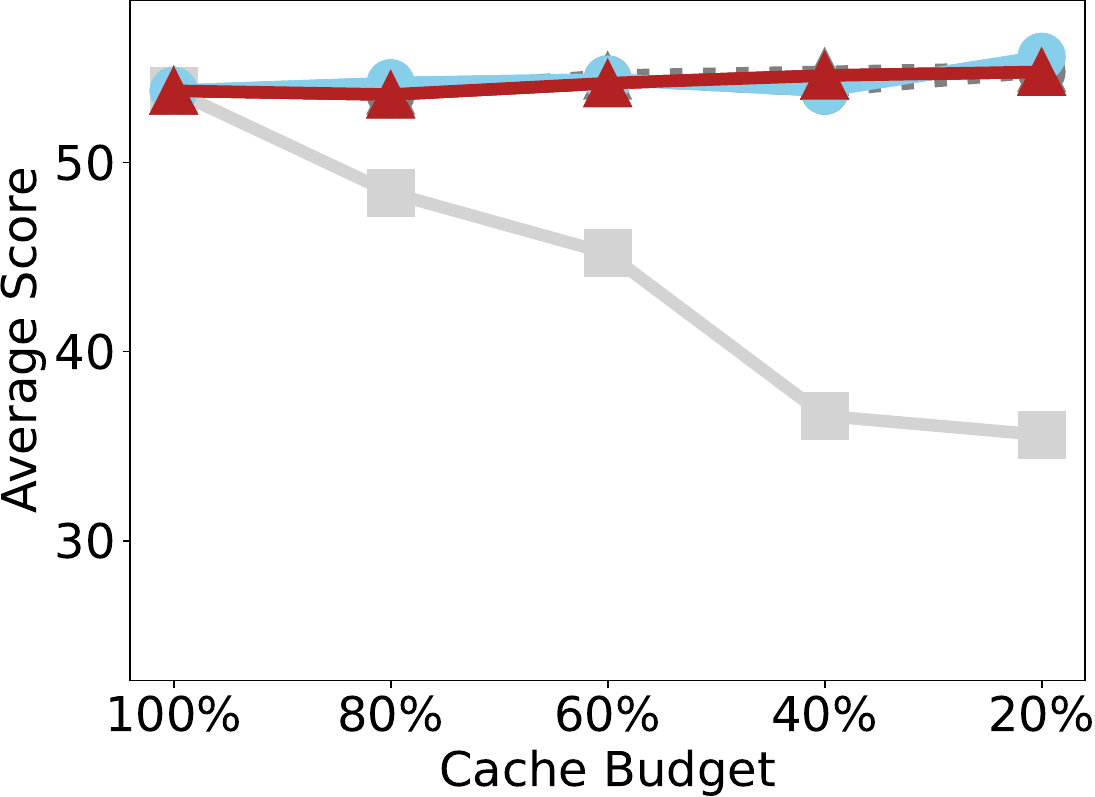}
		\caption{M-QA}
	\end{subfigure}
    \caption{Subtask Analysis on Ruler (Question-aware, Mistral-7B-Instruct-v0.2).}
	\label{fig:aware_mistral_ruler_subtask}
\end{figure*}

\subsection{Detail results of LongBench Evaluation}
\label{apdx:detail_LongBench}
Tables \ref{tbl:detailed_llama_longbench}, \ref{tbl:detailed_mistral_longbench}, \ref{tbl:detailed_mistral_longbench_question_agnostic} and \ref{tbl:detailed_llama_longbench_question_agnostic} present the detailed evaluation results of different methods across 16 datasets on LongBench for two models. As shown, the Ada-SnapKV and Ada-Pyramid methods, with our adaptive allocation enhancement, achieve consistent improvements in overall generation quality across different budgets and models.

\subsection{Detailed Information of Ruler Benchmark}
\label{apdx:ruler_info}

Below is a brief overview of the different tasks in Ruler. For detailed examples, please refer to Tables \ref{tab:ruler_task_template1} \ref{tab:ruler_task_template2} and \ref{tab:ruler_task_template3}.   We adhere to the input prompt format from KVPress \cite{kvpress}, dividing the input into context and question segments. The question segment is highlighted in green, while other colors represent the context segment. In question-aware compression, both the context and question segments are input into the model and compressed. For question-agnostic compression, where future questions are unpredictable, only the context segment is input for compression. After compression, the question segment is input for answer generation.

\begin{itemize}
	\itemsep0em
	
	\item \textbf{Single NIAH (S-NIAH): } A basic information retrieval task. In this scenario, a keyword sentence, referred to as the "needle," is embedded within a lengthy text, called the "haystack." The goal is to retrieve the "needle" from the context. The "haystack" may consist of repetitive, noisy sentences or essays by Paul Graham~\cite{needle}.

	\item \textbf{Multi-keys NIAH (MK-NIAH): } Multiple "needles" are inserted into the "haystack," but the task requires retrieving only one of them.

	\item \textbf{Multi-values NIAH (MV-NIAH): } Multiple "needles" in the form of key-value pairs are placed within the "haystack." The task is to retrieve all values associated with a given key.

	\item \textbf{Multi-queries NIAH (MQ-NIAH): } Multiple "needle" in the form of key-value pairs are inserted into the "haystack". All "needles" corresponding to the keys specified in the queries need to be retrieved. ~\citet{mqar}

	\item \textbf{Variable Tracking (VT): } A series of variable binding statements (e.g., $X2 = X1, X3 = X2, ...$) are inserted at various positions within a long text to simulate the recognition of entities and relationships. The task objective is to return all variable names that point to the same value $V$.
	
	\item \textbf{Common words extraction task (CWE): } The input sequence is constructed by sampling words from a discrete uniform distribution within a pre-defined (synthetic) word list. Words that appear most frequently are termed "common words". The model is required to identify all common words. The number of common words remains constant while the number of uncommon words increases with the length of the sequence.

	\item \textbf{Frequent words extraction task (FWE): } The input sequence is constructed by sampling words from a Zeta distribution within a pre-defined (synthetic) word list. The model need to return the top-K frequent words in the context.
	
	\item \textbf{Single-Hop QA (S-QA)} Answer questions based on passages. The context is extended to simulate long-context input by adding distracting information(golden paragraphs). And answers come from a single piece of evidence.
	
	\item \textbf{Multi-Hop QA (M-QA)} Answer questions based on passages. For Multi-hop QA, which requires integrating information from multiple sources.
	
\end{itemize}

\subsection{Limitations}
\label{apdx:lim}
This paper, for the first time, introduces a head-wise allocation strategy for KV cache compression, supported by both theoretical analysis and empirical validation. However, this head-wise allocation is still limited to within single layer. In future work, we plan to extend the head-wise allocation mechanism across layers and further develop corresponding theoretical analysis to demonstrate advantages.

\subsection{Detailed Information of LongBench Benchmark}
\label{apdx:longbench_info}
LongBench~\cite{bai2023longbench} spans multiple task domains with an average length of 6711, including single-document QA~\cite{kovcisky2018narrativeqa,dasigi2021dataset}, multi-document QA~\cite{yang2018hotpotqa,ho-etal-2020-constructing,trivedi2022musique}, summarization~\cite{huang2021efficient,zhong2021qmsum,fabbri2019multi}, few-shot learning~\cite{joshi2017triviaqalargescaledistantly,gliwa2019samsum,li2002learning}, synthetic tasks~\cite{bai2023longbench}, and code generation~\cite{guo2023longcoderlongrangepretrainedlanguage,liu2023repobenchbenchmarkingrepositorylevelcode}.

\begin{itemize}
	\item \textbf{Single-Doc QA:} Single-document QA requires models to obtain the answer from a single piece of source. The test samples are derived from diverse datasets including NarrativeQA ~\citep{kovcisky2018narrativeqa}, qasper ~\citep{dasigi2021dataset}, and MultiFieldQA~\citep{liu2023lost}, encompassing a range of documents such as legal files, governmental reports, encyclopedia, and academic papers.

	

	\item \textbf{Multi-Doc QA:} Multi-document QA requires models to extract and combine information from several documents to obtain the answer. The test samples are built from three Wikipedia-based multi-hop QA datasets: HotpotQA ~\citep{yang2018hotpotqa}, 2WikiMultihopQA ~\citep{ho2020constructing} and MuSiQue ~\citep{trivedi2022musique}.

	
	\item \textbf{Summarization:} Summarization task requires a comprehensive understanding of the context. The test samples are drawn from the GovReport dataset ~\citep{huang2021efficient}, and QMSum ~\citep{zhong2021qmsum}. Additionally, the MultiNews dataset originates from a multi-document summarization corpus detailed in ~\citep{fabbri2019multi}


	\item \textbf{Few-shot Learning:} Few-shot Learning task have integrated classification, summarization, and reading comprehension tasks to maintain task diversity. For classification, the TREC dataset \citep{li2002learning} is included. The SAMSum dataset \citep{gliwa2019samsum}, which comprises messenger-style conversations with human-annotated summaries, has been incorporated for summarization tasks. Additionally, TriviaQA \citep{joshi2017triviaqa}, a dataset of question-answer pairs, is utilized for reading comprehension tasks.

	

	\item \textbf{Synthetic Task:} Synthetic Tasks are designed to assess the model's ability in handling specific scenarios and patterns. Two synthetic tasks, PassageRetrieval-en and PassageCount, are included in LongBench. PassageRetrieval-en is derived from English Wikipedia. 30 passages is randomly selected and use GPT-3.5-Turbo to summarize one of them. The task challenges the model to identify the original paragraph that matches the generated summary. PassageCount is designed to be more challenging, this task requires the model to leverage the entire context to solve it. Several passages are randomly pickedfrom English Wikipedia, randomly duplicate each paragraph multiple times, and then shuffle the paragraphs. The task is to determine the number of unique passages within the provided set.

	
	
	\item \textbf{Code Completion:} Code Completion task is to assist users by completing code based on previous code input and context. The LCC dataset, derived from the original Long Code Completion dataset ~\citep{guo2023longcoder}. The RepoBench-P dataset ~\citep{liu2023repobench} is designed to aggregating information from code across files. Model is required to predict the next line of code.

	
	
\end{itemize}

Table \ref{tab:information_dataset} provides a comprehensive description of information pertaining to 16 datasets in LongBench. In the question-agnostic scenarios, we separate the questions from the context, with the question portions highlighted in Tables \ref{tab:longbenc_template_1}, \ref{tab:longbenc_template_2}, \ref{tab:longbenc_template_3}, \ref{tab:longbenc_template_4}, \ref{tab:longbenc_template_5}, and \ref{tab:longbenc_template_6}.

\begin{table*}[t!]
	\centering
	\small
	\caption{Detailed results of Llama-3.1-8B-Instruct on LongBench.(Question-aware)}
		\label{tbl:detailed_llama_longbench}
	\addtolength{\tabcolsep}{-4.1pt}
	 \resizebox{\textwidth}{!}{%
		\renewcommand{\arraystretch}{0.7}
		\begin{tabular}{
				l@{}   c@{\hspace{-0.1ex}}c@{\hspace{0.2ex}}  c@{\hspace{0.2ex}}c@{\hspace{-2.2 ex}}c@{\hspace{-2ex}}c   @{\hspace{-0.5ex}}c@{\hspace{-1.2ex}}c@{\hspace{-1.2ex}}c@{\hspace{0.8ex}}   c@{\hspace{-0.2ex}}c@{\hspace{-1.2ex}} c@{\hspace{0.4ex}}c@{\hspace{0ex}}c@{\hspace{1.7ex}}c@{\hspace{0.3ex}}c@{\hspace{1.5ex}}c@{}
			}
			\toprule
			& \multicolumn{3}{c}{ Single-Doc. QA}                                                                                                                                   & \multicolumn{3}{c}{ Multi-Doc. QA}                                                                                                                                          & \multicolumn{3}{c}{ Summarization}                                                                                                                                             & \multicolumn{3}{c}{ Few-shotLearning}                                                                                                                                     & \multicolumn{2}{c}{ Synthetic}                                                                                & \multicolumn{2}{c}{ Code}                                                                                   & \multicolumn{1}{c}{}                                  \\ \cmidrule(lr){2-4}\cmidrule(lr){5-7}\cmidrule(lr){8-10}\cmidrule(lr){11-13}\cmidrule(lr){14-15}\cmidrule(lr){16-17}
			& \small \rotatebox[origin=c]{-45}{NrtvQA} & \small \rotatebox[origin=c]{-45}{Qasper} & \small \rotatebox[origin=c]{-45}{MF-en} & \small \rotatebox[origin=c]{-45}{HotpotQA} & \small \rotatebox[origin=c]{-45}{2WikiMQA} & \small \rotatebox[origin=c]{-45}{Musique} & \small \rotatebox[origin=c]{-45}{GovReport} & \small \rotatebox[origin=c]{-45}{QMSum} & \small \rotatebox[origin=c]{-45}{MultiNews} & \small \rotatebox[origin=c]{-45}{TREC} & \small \rotatebox[origin=c]{-45}{TriviaQA} & \small \rotatebox[origin=c]{-45}{SAMSum} & \small \rotatebox[origin=c]{-45}{PCount} & \small \rotatebox[origin=c]{-45}{PRe} & \small \rotatebox[origin=c]{-45}{Lcc} & \multicolumn{1}{l}{\small \rotatebox[origin=c]{-45}{RB-P}} & \small \rotatebox[origin=c]{0}{\makecell{Ave. \\ Score}}  \\ \midrule
			Full Cache & 30.22          & 45.37          & 55.80          & 55.97          & 45.00          & 31.26          & 35.12          & 25.38          & 27.20          & 72.50          & 91.64          & 43.57          & 9.41          & 99.50          & 62.88          & 56.43          & \multicolumn{1}{|l}{$\:${49.20}}          \\

			\hline\multicolumn{18}{c}{\small   B=128} \\
\textcolor{gray!80}{SLM}  & \textcolor{gray!80}{22.24} & \textcolor{gray!80}{20.87} & \textcolor{gray!80}{31.72} & \textcolor{gray!80}{44.02} & \textcolor{gray!80}{37.55} & \textcolor{gray!80}{24.54} & \textcolor{gray!80}{18.76} & \textcolor{gray!80}{21.09} & \textcolor{gray!80}{18.48} & \textcolor{gray!80}{40.50} & \textcolor{gray!80}{84.41} & \textcolor{gray!80}{38.82} & \textcolor{gray!80}{8.00} & \textcolor{gray!80}{99.50} & \textcolor{gray!80}{57.02} & \textcolor{gray!80}{47.29} & \multicolumn{1}{|l}{$\:${\textcolor{gray!80}{38.43}}}\\
Pyramid                   & 25.70                      & 24.69                      & 47.74                      & 52.87                      & 40.57                      & 27.23                      & 20.02                      & 22.38                      & 19.74                      & 44.50                      & 88.81                      & 40.30                      & 7.22                      & \textbf{99.50}             & 57.25                      & 49.90                      & \multicolumn{1}{|l}{$\:${41.78                     }}\\
SnapKV                    & 25.54                      & 24.45                      & 48.03                      & \textbf{53.31}             & 40.75                      & 28.19                      & 20.13                      & 22.36                      & 19.55                      & 45.50                      & 89.20                      & 40.62                      & 6.97                      & \textbf{99.50}             & 58.45                      & 49.90                      & \multicolumn{1}{|l}{$\:${42.03                     }}\\
Ada-Pyramid               & \textbf{27.07}             & \textbf{25.61}             & 49.30                      & 53.02                      & 41.29                      & 27.83                      & \textbf{20.70}             & 23.18                      & \textbf{20.38}             & \textbf{51.50}             & \textbf{90.76}             & 40.62                      & 6.92                      & 99.00                      & \textbf{59.30}             & 50.88                      & \multicolumn{1}{|l}{$\:${\textbf{42.96}            }}\\
Ada-SnapKV                & 24.90                      & 24.41                      & \textbf{49.95}             & 53.15                      & \textbf{41.73}             & \textbf{28.55}             & 20.54                      & \textbf{23.21}             & 20.28                      & 50.50                      & 89.49                      & \textbf{40.71}             & \textbf{7.45}             & 99.00                      & 58.74                      & \textbf{52.40}             & \multicolumn{1}{|l}{$\:${42.81                     }}\\

			\hline\multicolumn{18}{c}{\small   B=256} \\
\textcolor{gray!80}{SLM}  & \textcolor{gray!80}{22.71} & \textcolor{gray!80}{23.79} & \textcolor{gray!80}{31.80} & \textcolor{gray!80}{43.43} & \textcolor{gray!80}{36.55} & \textcolor{gray!80}{25.55} & \textcolor{gray!80}{21.29} & \textcolor{gray!80}{20.68} & \textcolor{gray!80}{20.67} & \textcolor{gray!80}{46.00} & \textcolor{gray!80}{87.11} & \textcolor{gray!80}{40.82} & \textcolor{gray!80}{7.20} & \textcolor{gray!80}{99.50} & \textcolor{gray!80}{59.89} & \textcolor{gray!80}{49.19} & \multicolumn{1}{|l}{$\:${\textcolor{gray!80}{39.76}}}\\
Pyramid                   & 25.53                      & 33.15                      & 51.44                      & \textbf{55.03}             & 42.42                      & 28.62                      & 22.57                      & 23.37                      & 22.33                      & 56.50                      & 91.19                      & \textbf{41.28}             & 6.97                      & \textbf{99.50}             & 60.36                      & 51.18                      & \multicolumn{1}{|l}{$\:${44.47                     }}\\
SnapKV                    & 26.02                      & 32.49                      & 51.62                      & 54.40                      & \textbf{42.77}             & 28.94                      & 22.83                      & 23.54                      & 22.55                      & 53.50                      & 91.10                      & 40.95                      & 7.48                      & \textbf{99.50}             & 60.67                      & 53.39                      & \multicolumn{1}{|l}{$\:${44.48                     }}\\
Ada-Pyramid               & 25.12                      & \textbf{35.06}             & \textbf{52.28}             & 54.66                      & 41.89                      & 28.76                      & \textbf{23.14}             & 23.36                      & 22.67                      & 63.00                      & 90.72                      & 41.21                      & 7.75                      & \textbf{99.50}             & 61.47                      & 53.09                      & \multicolumn{1}{|l}{$\:${45.23                     }}\\
Ada-SnapKV                & \textbf{26.11}             & 33.39                      & 51.44                      & 54.94                      & 42.15                      & \textbf{29.54}             & 23.01                      & \textbf{23.85}             & \textbf{22.88}             & \textbf{63.50}             & \textbf{91.57}             & 40.94                      & \textbf{8.00}             & \textbf{99.50}             & \textbf{61.95}             & \textbf{54.33}             & \multicolumn{1}{|l}{$\:${\textbf{45.44}            }}\\

			\hline\multicolumn{18}{c}{\small   B=512} \\
\textcolor{gray!80}{SLM}  & \textcolor{gray!80}{25.51} & \textcolor{gray!80}{25.78} & \textcolor{gray!80}{34.19} & \textcolor{gray!80}{45.01} & \textcolor{gray!80}{35.91} & \textcolor{gray!80}{24.93} & \textcolor{gray!80}{23.61} & \textcolor{gray!80}{21.26} & \textcolor{gray!80}{23.57} & \textcolor{gray!80}{57.50} & \textcolor{gray!80}{87.86} & \textcolor{gray!80}{41.44} & \textcolor{gray!80}{6.98} & \textcolor{gray!80}{96.50} & \textcolor{gray!80}{60.85} & \textcolor{gray!80}{51.02} & \multicolumn{1}{|l}{$\:${\textcolor{gray!80}{41.37}}}\\
Pyramid                   & 28.71                      & 39.89                      & 52.86                      & 54.00                      & \textbf{44.20}             & 31.22                      & 24.74                      & 23.73                      & 24.28                      & 66.00                      & 91.07                      & 41.42                      & \textbf{8.39}             & \textbf{99.50}             & 61.99                      & 53.44                      & \multicolumn{1}{|l}{$\:${46.59                     }}\\
SnapKV                    & \textbf{29.22}             & 40.01                      & \textbf{53.15}             & 54.47                      & 43.63                      & \textbf{31.32}             & 25.04                      & 23.77                      & 24.19                      & 64.00                      & 92.05                      & 41.57                      & 8.01                      & \textbf{99.50}             & 63.21                      & 55.05                      & \multicolumn{1}{|l}{$\:${46.76                     }}\\
Ada-Pyramid               & 28.04                      & \textbf{40.63}             & 53.03                      & \textbf{54.71}             & 43.39                      & 30.26                      & 25.35                      & 24.12                      & 24.61                      & \textbf{69.00}             & 91.79                      & \textbf{42.55}             & 7.95                      & \textbf{99.50}             & 62.28                      & 54.49                      & \multicolumn{1}{|l}{$\:${46.98                     }}\\
Ada-SnapKV                & 29.07                      & 40.16                      & 52.44                      & 53.90                      & 43.05                      & 31.10                      & \textbf{25.75}             & \textbf{24.39}             & \textbf{24.85}             & \textbf{69.00}             & \textbf{92.34}             & 42.05                      & 7.98                      & \textbf{99.50}             & \textbf{63.43}             & \textbf{55.32}             & \multicolumn{1}{|l}{$\:${\textbf{47.15}            }}\\

			\hline\multicolumn{18}{c}{\small   B=1024} \\
\textcolor{gray!80}{SLM}  & \textcolor{gray!80}{24.97} & \textcolor{gray!80}{30.22} & \textcolor{gray!80}{37.06} & \textcolor{gray!80}{46.57} & \textcolor{gray!80}{39.14} & \textcolor{gray!80}{25.24} & \textcolor{gray!80}{26.01} & \textcolor{gray!80}{21.08} & \textcolor{gray!80}{25.72} & \textcolor{gray!80}{63.50} & \textcolor{gray!80}{88.87} & \textcolor{gray!80}{42.28} & \textcolor{gray!80}{6.98} & \textcolor{gray!80}{89.00} & \textcolor{gray!80}{61.30} & \textcolor{gray!80}{53.40} & \multicolumn{1}{|l}{$\:${\textcolor{gray!80}{42.58}}}\\
Pyramid                   & \textbf{29.62}             & 43.66                      & 54.10                      & \textbf{55.06}             & 44.22                      & 31.30                      & 27.27                      & 24.30                      & 25.68                      & 68.50                      & 91.27                      & 41.96                      & 7.73                      & \textbf{99.50}             & 63.13                      & 55.85                      & \multicolumn{1}{|l}{$\:${47.70                     }}\\
SnapKV                    & 29.28                      & 43.64                      & \textbf{54.34}             & 54.24                      & \textbf{44.34}             & 31.52                      & 27.80                      & 24.39                      & 25.95                      & 69.00                      & \textbf{91.72}             & 42.50                      & 7.80                      & \textbf{99.50}             & 62.99                      & 56.45                      & \multicolumn{1}{|l}{$\:${47.84                     }}\\
Ada-Pyramid               & 28.76                      & \textbf{44.57}             & 53.73                      & 54.89                      & 44.15                      & \textbf{31.97}             & 27.75                      & \textbf{25.26}             & 25.84                      & 70.50                      & 91.62                      & 42.37                      & 7.67                      & \textbf{99.50}             & 62.96                      & \textbf{56.52}             & \multicolumn{1}{|l}{$\:${48.00                     }}\\
Ada-SnapKV                & 29.23                      & 44.09                      & 53.82                      & 54.80                      & 44.01                      & 31.40                      & \textbf{28.86}             & 24.73                      & \textbf{26.04}             & \textbf{72.50}             & \textbf{91.72}             & \textbf{42.56}             & \textbf{7.82}             & \textbf{99.50}             & \textbf{63.22}             & 56.33                      & \multicolumn{1}{|l}{$\:${\textbf{48.16}            }}\\

			\hline\multicolumn{18}{c}{\small   B=2048} \\

\textcolor{gray!80}{SLM} & \textcolor{gray!80}{26.25} & \textcolor{gray!80}{35.81} & \textcolor{gray!80}{38.78} & \textcolor{gray!80}{48.71} & \textcolor{gray!80}{41.78} & \textcolor{gray!80}{25.01} & \textcolor{gray!80}{28.48} & \textcolor{gray!80}{22.13} & \textcolor{gray!80}{26.55} & \textcolor{gray!80}{67.50} & \textcolor{gray!80}{90.86} & \textcolor{gray!80}{42.21} & \textcolor{gray!80}{9.41} & \textcolor{gray!80}{87.50} & \textcolor{gray!80}{64.80} & \textcolor{gray!80}{57.56} & \multicolumn{1}{|l}{$\:${\textcolor{gray!80}{44.58}}} \\
Pyramid                  & 29.81                      & 45.13                      & 54.88                      & 55.74                      & 44.36                      & \textbf{31.71}             & 30.28                      & 24.83                      & 26.64                      & 71.00                      & 91.35                      & 42.42                      & \textbf{10.43}            & \textbf{99.50}             & 64.66                      & 58.64                      & \multicolumn{1}{|l}{$\:${48.84                     }} \\
SnapKV                   & 29.75                      & 45.18                      & 55.06                      & \textbf{56.09}             & 44.82                      & 31.42                      & 31.13                      & 24.92                      & \textbf{26.93}             & 72.00                      & \textbf{91.65}             & 42.81                      & 9.99                      & \textbf{99.50}             & 64.82                      & \textbf{59.30}             & \multicolumn{1}{|l}{$\:${49.09                     }} \\
Ada-Pyramid              & \textbf{30.88}             & 44.64                      & 54.94                      & 55.59                      & 45.00                      & 31.04                      & 30.30                      & \textbf{25.14}             & \textbf{26.93}             & 72.00                      & 91.64                      & \textbf{43.25}             & 10.09                     & \textbf{99.50}             & 64.60                      & 59.20                      & \multicolumn{1}{|l}{$\:${49.05                     }} \\
Ada-SnapKV               & 30.64                      & \textbf{45.23}             & \textbf{55.46}             & 55.55                      & \textbf{45.26}             & 31.14                      & \textbf{31.28}             & 24.89                      & 26.72                      & \textbf{72.50}             & 91.64                      & 42.75                      & 9.68                      & \textbf{99.50}             & \textbf{64.88}             & 59.25                      & \multicolumn{1}{|l}{$\:${\textbf{49.15}            }} \\

			\hline
		\end{tabular}
}
	\end{table*}

\begin{table*}[t!]
	\centering
	\small
	\caption{Detailed results of Mistral-7B-Instruct-v0.2 on LongBench. (Question-aware)}
	\label{tbl:detailed_mistral_longbench}
	\addtolength{\tabcolsep}{-4.1pt}
	 \resizebox{\textwidth}{!}{%
		\renewcommand{\arraystretch}{0.7}
		\begin{tabular}{
				l@{}   c@{\hspace{-0.1ex}}c@{\hspace{0.2ex}}  c@{\hspace{0.2ex}}c@{\hspace{-2.2 ex}}c@{\hspace{-2ex}}c   @{\hspace{-0.5ex}}c@{\hspace{-1.2ex}}c@{\hspace{-1.2ex}}c@{\hspace{0.8ex}}   c@{\hspace{-0.2ex}}c@{\hspace{-1.2ex}} c@{\hspace{0.4ex}}c@{\hspace{0ex}}c@{\hspace{1.7ex}}c@{\hspace{0.3ex}}c@{\hspace{1.5ex}}c@{}
			}
			\toprule
			& \multicolumn{3}{c}{\small Single-Doc. QA}                                                                                                                                   & \multicolumn{3}{c}{\small Multi-Doc. QA}                                                                                                                                          & \multicolumn{3}{c}{\small Summarization}                                                                                                                                             & \multicolumn{3}{c}{\small Few-shotLearning}                                                                                                                                     & \multicolumn{2}{c}{\small Synthetic}                                                                                & \multicolumn{2}{c}{\small Code}                                                                                   & \multicolumn{1}{c}{}                                  \\ \cmidrule(lr){2-4}\cmidrule(lr){5-7}\cmidrule(lr){8-10}\cmidrule(lr){11-13}\cmidrule(lr){14-15}\cmidrule(lr){16-17}
			&  \rotatebox[origin=c]{-45}{NrtvQA} &  \rotatebox[origin=c]{-45}{Qasper} &  \rotatebox[origin=c]{-45}{MF-en} &  \rotatebox[origin=c]{-45}{HotpotQA} &  \rotatebox[origin=c]{-45}{2WikiMQA} &  \rotatebox[origin=c]{-45}{Musique} &  \rotatebox[origin=c]{-45}{GovReport} &  \rotatebox[origin=c]{-45}{QMSum} &  \rotatebox[origin=c]{-45}{MultiNews} &  \rotatebox[origin=c]{-45}{TREC} &  \rotatebox[origin=c]{-45}{TriviaQA} &  \rotatebox[origin=c]{-45}{SAMSum} &  \rotatebox[origin=c]{-45}{PCount} &  \rotatebox[origin=c]{-45}{PRe} &  \rotatebox[origin=c]{-45}{Lcc} & \multicolumn{1}{l}{ \rotatebox[origin=c]{-45}{RB-P}} &  \rotatebox[origin=c]{0}{\makecell{Ave. \\ Score}}  \\ \midrule
			Full Cache & {26.74} & {32.84} & {50.00} & {43.45} & {27.77} & {18.49} & {32.91} & {24.64} & {26.99} & {71.00} & {86.23} & {43.32} & {2.94} & {86.31} & {57.39} & {54.32} & \multicolumn{1}{|l}{$\:${42.83}} \\
			
			\hline \multicolumn{18}{c}{   B=128} \\
\textcolor{gray!80}{SLM}  & \textcolor{gray!80}{18.02} & \textcolor{gray!80}{13.32} & \textcolor{gray!80}{27.41} & \textcolor{gray!80}{30.49} & \textcolor{gray!80}{21.81} & \textcolor{gray!80}{11.85} & \textcolor{gray!80}{15.49} & \textcolor{gray!80}{19.42} & \textcolor{gray!80}{17.84} & \textcolor{gray!80}{44.00} & \textcolor{gray!80}{80.74} & \textcolor{gray!80}{37.35} & \textcolor{gray!80}{3.57} & \textcolor{gray!80}{22.93} & \textcolor{gray!80}{49.07} & \textcolor{gray!80}{43.74} & \multicolumn{1}{|l}{$\:${\textcolor{gray!80}{28.57}}}\\
Pyramid                   & 20.78                      & 19.76                      & 42.92                      & 36.31                      & 22.37                      & 13.91                      & 18.84                      & 21.84                      & 20.42                      & 46.50                      & \textbf{84.55}             & \textbf{40.23}             & 2.79                      & 65.46                      & 51.49                      & 46.83                      & \multicolumn{1}{|l}{$\:${34.69                     }}\\
SnapKV                    & 20.98                      & 19.05                      & \textbf{44.63}             & 35.31                      & 22.91                      & 13.95                      & 18.76                      & 21.36                      & 20.31                      & 45.00                      & 83.83                      & 39.52                      & \textbf{3.23}             & 64.53                      & 52.19                      & 47.30                      & \multicolumn{1}{|l}{$\:${34.55                     }}\\
Ada-Pyramid               & \textbf{21.18}             & \textbf{20.23}             & 44.54                      & \textbf{37.97}             & 22.85                      & 14.54                      & 19.10                      & 21.91                      & \textbf{20.84}             & \textbf{52.00}             & 84.15                      & 39.49                      & 2.87                      & \textbf{71.81}             & \textbf{52.34}             & 47.50                      & \multicolumn{1}{|l}{$\:${\textbf{35.83}            }}\\
Ada-SnapKV                & 20.10                      & 20.14                      & 44.20                      & 37.32                      & \textbf{23.90}             & \textbf{15.29}             & \textbf{19.15}             & \textbf{22.27}             & 20.71                      & 50.00                      & 84.20                      & 39.64                      & 3.09                      & 67.11                      & 52.26                      & \textbf{48.25}             & \multicolumn{1}{|l}{$\:${35.48                     }}\\

			\hline \multicolumn{18}{c}{   B=256} \\
\textcolor{gray!80}{SLM}  & \textcolor{gray!80}{19.08} & \textcolor{gray!80}{15.30} & \textcolor{gray!80}{28.27} & \textcolor{gray!80}{31.87} & \textcolor{gray!80}{22.26} & \textcolor{gray!80}{11.37} & \textcolor{gray!80}{18.08} & \textcolor{gray!80}{19.37} & \textcolor{gray!80}{19.99} & \textcolor{gray!80}{51.00} & \textcolor{gray!80}{80.92} & \textcolor{gray!80}{39.62} & \textcolor{gray!80}{3.57} & \textcolor{gray!80}{15.90} & \textcolor{gray!80}{51.74} & \textcolor{gray!80}{45.22} & \multicolumn{1}{|l}{$\:${\textcolor{gray!80}{29.60}}}\\
Pyramid                   & 20.39                      & 22.63                      & 46.67                      & 38.64                      & 23.73                      & 15.82                      & 21.34                      & 22.30                      & 22.01                      & 57.50                      & 83.63                      & 40.49                      & 2.99                      & 75.84                      & 53.56                      & 50.03                      & \multicolumn{1}{|l}{$\:${37.35                     }}\\
SnapKV                    & 20.90                      & 21.95                      & 47.49                      & \textbf{39.62}             & 24.89                      & 15.04                      & 21.46                      & 22.80                      & 22.62                      & 57.50                      & 84.86                      & 40.51                      & \textbf{3.76}             & 77.95                      & 54.08                      & 50.98                      & \multicolumn{1}{|l}{$\:${37.90                     }}\\
Ada-Pyramid               & \textbf{22.21}             & 23.26                      & 46.92                      & 39.04                      & \textbf{25.01}             & \textbf{16.99}             & 21.43                      & \textbf{23.07}             & 22.28                      & 62.50                      & 85.70                      & \textbf{41.11}             & 2.71                      & 78.86                      & 54.26                      & 50.49                      & \multicolumn{1}{|l}{$\:${38.49                     }}\\
Ada-SnapKV                & 20.97                      & \textbf{23.71}             & \textbf{47.52}             & 38.48                      & 24.77                      & 15.76                      & \textbf{21.89}             & 22.94                      & \textbf{22.70}             & \textbf{63.50}             & \textbf{86.25}             & 40.75                      & 2.46                      & \textbf{80.24}             & \textbf{54.42}             & \textbf{51.46}             & \multicolumn{1}{|l}{$\:${\textbf{38.61}            }}\\

			\hline \multicolumn{18}{c}{   B=512} \\
\textcolor{gray!80}{SLM}  & \textcolor{gray!80}{21.12} & \textcolor{gray!80}{15.99} & \textcolor{gray!80}{30.82} & \textcolor{gray!80}{30.39} & \textcolor{gray!80}{22.32} & \textcolor{gray!80}{10.92} & \textcolor{gray!80}{21.43} & \textcolor{gray!80}{19.98} & \textcolor{gray!80}{22.88} & \textcolor{gray!80}{61.50} & \textcolor{gray!80}{82.11} & \textcolor{gray!80}{41.89} & \textcolor{gray!80}{3.21} & \textcolor{gray!80}{17.32} & \textcolor{gray!80}{53.43} & \textcolor{gray!80}{47.05} & \multicolumn{1}{|l}{$\:${\textcolor{gray!80}{31.40}}}\\
Pyramid                   & 21.81                      & 24.33                      & 48.67                      & 39.31                      & 25.14                      & 17.51                      & 23.22                      & 23.16                      & 23.87                      & 66.00                      & 85.20                      & 41.96                      & 3.08                      & 85.71                      & 55.10                      & 50.98                      & \multicolumn{1}{|l}{$\:${39.69                     }}\\
SnapKV                    & \textbf{24.29}             & \textbf{27.83}             & \textbf{49.00}             & 39.63                      & 25.26                      & 17.63                      & \textbf{23.53}             & 23.36                      & 24.44                      & 65.00                      & 86.18                      & 42.01                      & \textbf{3.27}             & 85.29                      & 56.10                      & 52.73                      & \multicolumn{1}{|l}{$\:${40.35                     }}\\
Ada-Pyramid               & 22.92                      & 26.37                      & 47.66                      & 39.49                      & 25.52                      & \textbf{18.50}             & 23.43                      & \textbf{23.53}             & 23.87                      & 66.50                      & 85.42                      & 41.98                      & 2.59                      & 85.49                      & 55.14                      & 52.29                      & \multicolumn{1}{|l}{$\:${40.04                     }}\\
Ada-SnapKV                & 23.62                      & 27.34                      & 48.70                      & \textbf{39.81}             & \textbf{26.42}             & 17.36                      & 23.42                      & 23.26                      & \textbf{24.50}             & \textbf{67.50}             & \textbf{86.46}             & \textbf{42.04}             & 3.04                      & \textbf{86.64}             & \textbf{56.11}             & \textbf{53.05}             & \multicolumn{1}{|l}{$\:${\textbf{40.58}            }}\\

			\hline \multicolumn{18}{c}{   B=1024} \\

\textcolor{gray!80}{SLM}  & \textcolor{gray!80}{22.15} & \textcolor{gray!80}{18.64} & \textcolor{gray!80}{31.03} & \textcolor{gray!80}{32.94} & \textcolor{gray!80}{22.45} & \textcolor{gray!80}{11.93} & \textcolor{gray!80}{23.89} & \textcolor{gray!80}{20.60} & \textcolor{gray!80}{25.48} & \textcolor{gray!80}{64.00} & \textcolor{gray!80}{84.71} & \textcolor{gray!80}{41.59} & \textcolor{gray!80}{3.49} & \textcolor{gray!80}{22.15} & \textcolor{gray!80}{53.72} & \textcolor{gray!80}{49.19} & \multicolumn{1}{|l}{$\:${\textcolor{gray!80}{33.00}}}\\
Pyramid                   & 24.12                      & 29.44                      & 48.83                      & 40.30                      & 25.94                      & 19.42                      & 25.29                      & 23.52                      & 25.77                      & 68.00                      & \textbf{86.30}             & 41.62                      & 2.84                      & 86.07                      & 56.06                      & 52.57                      & \multicolumn{1}{|l}{$\:${41.01                     }}\\
SnapKV                    & 24.10                      & \textbf{30.11}             & 48.97                      & 40.97                      & 26.89                      & 18.12                      & \textbf{25.93}             & 23.75                      & 26.02                      & 67.50                      & 86.25                      & 42.33                      & 2.94                      & 87.23                      & \textbf{57.07}             & 53.43                      & \multicolumn{1}{|l}{$\:${41.35                     }}\\
Ada-Pyramid               & 24.82                      & 28.94                      & 48.47                      & \textbf{41.44}             & 26.58                      & \textbf{19.75}             & 25.08                      & \textbf{23.84}             & 25.54                      & 68.00                      & 85.80                      & \textbf{42.94}             & \textbf{3.51}             & 85.68                      & 56.45                      & 52.72                      & \multicolumn{1}{|l}{$\:${41.22                     }}\\
Ada-SnapKV                & \textbf{25.11}             & 29.98                      & \textbf{49.27}             & 40.62                      & \textbf{27.05}             & 18.54                      & 25.85                      & 23.46                      & \textbf{26.08}             & \textbf{68.50}             & \textbf{86.30}             & 42.77                      & 2.92                      & \textbf{88.27}             & 56.88                      & \textbf{54.16}             & \multicolumn{1}{|l}{$\:${\textbf{41.61}            }}\\

			\hline \multicolumn{18}{c}{   B=2048} \\

\textcolor{gray!80}{SLM} & \textcolor{gray!80}{22.63} & \textcolor{gray!80}{22.68} & \textcolor{gray!80}{35.44} & \textcolor{gray!80}{33.66} & \textcolor{gray!80}{22.92} & \textcolor{gray!80}{13.76} & \textcolor{gray!80}{26.96} & \textcolor{gray!80}{20.80} & \textcolor{gray!80}{26.71} & \textcolor{gray!80}{66.00} & \textcolor{gray!80}{85.94} & \textcolor{gray!80}{42.11} & \textcolor{gray!80}{2.40} & \textcolor{gray!80}{25.33} & \textcolor{gray!80}{57.09} & \textcolor{gray!80}{52.82} & \multicolumn{1}{|l}{$\:${\textcolor{gray!80}{34.83}}} \\
Pyramid                  & 25.68                      & 31.39                      & \textbf{49.25}             & 41.01                      & 27.06                      & \textbf{19.35}             & 27.52                      & 23.46                      & 26.52                      & 70.00                      & 86.30                      & 42.65                      & 2.72                      & 85.93                      & 56.96                      & 53.63                      & \multicolumn{1}{|l}{$\:${41.84                     }} \\
SnapKV                   & 25.79                      & 32.54                      & 49.18                      & 42.09                      & 27.31                      & 18.91                      & 28.50                      & 24.01                      & 26.83                      & 70.00                      & \textbf{86.46}             & 42.86                      & \textbf{2.95}             & 86.56                      & 57.24                      & 53.89                      & \multicolumn{1}{|l}{$\:${42.20                     }} \\
Ada-Pyramid              & 26.41                      & 31.74                      & 49.08                      & 40.95                      & \textbf{27.79}             & 18.89                      & 27.54                      & 23.88                      & 26.78                      & \textbf{70.50}             & 86.30                      & \textbf{43.44}             & 2.61                      & 85.93                      & 57.22                      & 53.56                      & \multicolumn{1}{|l}{$\:${42.04                     }} \\
Ada-SnapKV               & \textbf{26.60}             & \textbf{32.68}             & 49.10                      & \textbf{42.65}             & 27.27                      & 18.97                      & \textbf{28.60}             & \textbf{24.17}             & \textbf{26.86}             & 70.00                      & 86.30                      & 43.12                      & 2.45                      & \textbf{86.81}             & \textbf{57.33}             & \textbf{54.11}             & \multicolumn{1}{|l}{$\:${\textbf{42.31}            }} \\

			\hline
		\end{tabular}
	}
	\end{table*}

\begin{table*}[t!]
	\centering
	\small
    \caption{Detailed results of Llama-3.1-8B-Instruct on LongBench. (Question-agnostic)}
		\label{tbl:detailed_llama_longbench_question_agnostic}
	\addtolength{\tabcolsep}{-4.1pt}
	 \resizebox{\textwidth}{!}{%
    \renewcommand{\arraystretch}{0.7}
    \begin{tabular}{
            l@{}   c@{\hspace{-0.1ex}}c@{\hspace{0.2ex}}  c@{\hspace{0.2ex}}c@{\hspace{-2.2 ex}}c@{\hspace{-2ex}}c   @{\hspace{-0.5ex}}c@{\hspace{-1.2ex}}c@{\hspace{-1.2ex}}c@{\hspace{0.8ex}}   c@{\hspace{-0.2ex}}c@{\hspace{-1.2ex}} c@{\hspace{0.4ex}}c@{\hspace{0ex}}c@{\hspace{1.7ex}}c@{\hspace{0.3ex}}c@{\hspace{1.5ex}}c@{}
        }
        \toprule
        & \multicolumn{3}{c}{ Single-Doc. QA}                                                                                                                                   & \multicolumn{3}{c}{ Multi-Doc. QA}                                                                                                                                          & \multicolumn{3}{c}{ Summarization}                                                                                                                                             & \multicolumn{3}{c}{ Few-shotLearning}                                                                                                                                     & \multicolumn{2}{c}{ Synthetic}                                                                                & \multicolumn{2}{c}{ Code}                                                                                   & \multicolumn{1}{c}{}                                  \\ \cmidrule(lr){2-4}\cmidrule(lr){5-7}\cmidrule(lr){8-10}\cmidrule(lr){11-13}\cmidrule(lr){14-15}\cmidrule(lr){16-17}
        & \small \rotatebox[origin=c]{-45}{NrtvQA} & \small \rotatebox[origin=c]{-45}{Qasper} & \small \rotatebox[origin=c]{-45}{MF-en} & \small \rotatebox[origin=c]{-45}{HotpotQA} & \small \rotatebox[origin=c]{-45}{2WikiMQA} & \small \rotatebox[origin=c]{-45}{Musique} & \small \rotatebox[origin=c]{-45}{GovReport} & \small \rotatebox[origin=c]{-45}{QMSum} & \small \rotatebox[origin=c]{-45}{MultiNews} & \small \rotatebox[origin=c]{-45}{TREC} & \small \rotatebox[origin=c]{-45}{TriviaQA} & \small \rotatebox[origin=c]{-45}{SAMSum} & \small \rotatebox[origin=c]{-45}{PCount} & \small \rotatebox[origin=c]{-45}{PRe} & \small \rotatebox[origin=c]{-45}{Lcc} & \multicolumn{1}{l}{\small \rotatebox[origin=c]{-45}{RB-P}} & \small \rotatebox[origin=c]{0}{\makecell{Ave. \\ Score}}  \\ \midrule
        Full Cache & 30.22          & 45.37          & 55.80          & 55.97          & 45.00          & 31.26          & 35.12          & 25.38          & 27.20          & 72.50          & 91.64          & 43.57          & 9.41          & 99.50          & 62.88          & 56.43          & \multicolumn{1}{|l}{$\:${49.20}}          \\
        \hline\multicolumn{18}{c}{\small   B=128} \\
\textcolor{gray!80}{SLM}  & \textcolor{gray!80}{13.57} & \textcolor{gray!80}{11.48} & \textcolor{gray!80}{24.47} & \textcolor{gray!80}{34.25} & \textcolor{gray!80}{25.35} & \textcolor{gray!80}{11.31} & \textcolor{gray!80}{20.06} & \textcolor{gray!80}{17.39} & \textcolor{gray!80}{17.80} & \textcolor{gray!80}{30.50} & \textcolor{gray!80}{50.14} & \textcolor{gray!80}{35.33} & \textcolor{gray!80}{3.00} & \textcolor{gray!80}{5.50}  & \textcolor{gray!80}{15.50} & \textcolor{gray!80}{60.25} & \multicolumn{1}{|l}{$\:${\textcolor{gray!80}{23.49} }}\\
Pyramid                   & \textbf{13.91}             & 13.05                      & \textbf{18.71}             & 27.43                      & 14.36                      & 6.35                       & 17.93                      & 17.25                      & 19.52                      & 21.00                      & 89.09                      & 38.03                      & 1.11                      & 5.00                       & 61.93                      & 57.00                      & \multicolumn{1}{|l}{$\:${26.35                      }}\\
SnapKV                    & 12.23                      & 11.88                      & 17.93                      & 25.32                      & 12.41                      & 8.20                       & 17.88                      & 16.84                      & 19.44                      & 22.00                      & 90.97                      & 37.83                      & 2.50                      & 5.50                       & 61.62                      & \textbf{57.54}             & \multicolumn{1}{|l}{$\:${26.26                      }}\\
Ada-Pyramid               & 13.30                      & \textbf{13.78}             & 17.92                      & \textbf{31.35}             & \textbf{15.90}             & 7.65                       & \textbf{19.20}             & \textbf{17.48}             & 19.83                      & 24.00                      & 90.67                      & \textbf{38.23}             & \textbf{4.10}             & \textbf{6.00}              & \textbf{63.40}             & 56.38                      & \multicolumn{1}{|l}{$\:${\textbf{27.45}             }}\\
Ada-SnapKV                & 13.52                      & 12.79                      & 18.30                      & 28.94                      & 14.13                      & \textbf{9.35}              & 19.04                      & 17.26                      & \textbf{19.94}             & \textbf{25.00}             & \textbf{91.44}             & 37.96                      & 1.54                      & 4.00                       & 63.34                      & 56.39                      & \multicolumn{1}{|l}{$\:${27.06                      }}\\

        \hline\multicolumn{18}{c}{\small   B=256} \\
\textcolor{gray!80}{SLM}  & \textcolor{gray!80}{14.25} & \textcolor{gray!80}{12.92} & \textcolor{gray!80}{28.75} & \textcolor{gray!80}{31.80} & \textcolor{gray!80}{19.19} & \textcolor{gray!80}{11.39} & \textcolor{gray!80}{22.86} & \textcolor{gray!80}{17.26} & \textcolor{gray!80}{22.73} & \textcolor{gray!80}{44.00} & \textcolor{gray!80}{52.80} & \textcolor{gray!80}{38.77} & \textcolor{gray!80}{4.00} & \textcolor{gray!80}{9.00}  & \textcolor{gray!80}{16.11} & \textcolor{gray!80}{59.79} & \multicolumn{1}{|l}{$\:${\textcolor{gray!80}{25.35} }}\\
Pyramid                   & 13.57                      & 17.36                      & 19.29                      & 30.95                      & 18.10                      & 8.21                       & 20.30                      & 17.65                      & 20.90                      & 26.00                      & 90.94                      & 40.02                      & 4.73                      & 4.50                       & 64.71                      & 55.06                      & \multicolumn{1}{|l}{$\:${28.27                      }}\\
SnapKV                    & 14.82                      & 16.32                      & 18.44                      & 29.53                      & 14.25                      & \textbf{9.26}              & 20.32                      & 17.56                      & 21.70                      & 28.50                      & 91.49                      & 40.15                      & 4.12                      & 5.00                       & 64.60                      & 54.78                      & \multicolumn{1}{|l}{$\:${28.18                      }}\\
Ada-Pyramid               & 14.39                      & \textbf{18.66}             & \textbf{20.79}             & 32.07                      & \textbf{20.08}             & 6.52                       & 20.84                      & \textbf{18.20}             & 21.51                      & 31.00                      & 91.39                      & 41.26                      & \textbf{7.10}             & \textbf{7.00}              & 65.87                      & \textbf{56.06}             & \multicolumn{1}{|l}{$\:${\textbf{29.55}             }}\\
Ada-SnapKV                & \textbf{15.19}             & 15.97                      & 20.05                      & \textbf{35.08}             & 16.31                      & 8.32                       & \textbf{21.31}             & 17.99                      & \textbf{22.11}             & \textbf{33.00}             & \textbf{91.68}             & \textbf{41.28}             & 5.93                      & 5.50                       & \textbf{66.14}             & 54.76                      & \multicolumn{1}{|l}{$\:${29.41                      }}\\

        \hline\multicolumn{18}{c}{\small   B=512} \\
\textcolor{gray!80}{SLM}  & \textcolor{gray!80}{14.47} & \textcolor{gray!80}{16.86} & \textcolor{gray!80}{34.18} & \textcolor{gray!80}{31.88} & \textcolor{gray!80}{18.76} & \textcolor{gray!80}{11.80} & \textcolor{gray!80}{26.01} & \textcolor{gray!80}{18.20} & \textcolor{gray!80}{25.10} & \textcolor{gray!80}{53.00} & \textcolor{gray!80}{60.04} & \textcolor{gray!80}{41.23} & \textcolor{gray!80}{4.00} & \textcolor{gray!80}{9.50}  & \textcolor{gray!80}{19.74} & \textcolor{gray!80}{57.87} & \multicolumn{1}{|l}{$\:${\textcolor{gray!80}{27.66} }}\\
Pyramid                   & 15.67                      & 21.59                      & 23.56                      & 35.63                      & 24.36                      & 9.77                       & 22.18                      & 19.05                      & 23.25                      & 34.00                      & 91.11                      & 42.44                      & 4.61                      & \textbf{17.50}             & 65.87                      & 54.41                      & \multicolumn{1}{|l}{$\:${31.56                      }}\\
SnapKV                    & 15.22                      & 22.90                      & 23.41                      & 33.00                      & 22.54                      & 8.91                       & 22.63                      & 18.31                      & 23.50                      & 35.00                      & 91.39                      & 41.85                      & 4.50                      & 15.50                      & \textbf{66.79}             & 53.75                      & \multicolumn{1}{|l}{$\:${31.20                      }}\\
Ada-Pyramid               & 16.05                      & \textbf{25.46}             & \textbf{25.75}             & 37.04                      & \textbf{26.72}             & \textbf{10.65}             & 22.80                      & \textbf{19.75}             & 23.28                      & 39.50                      & 91.65                      & 42.64                      & 5.75                      & 15.00                      & 65.55                      & \textbf{55.73}             & \multicolumn{1}{|l}{$\:${32.71                      }}\\
Ada-SnapKV                & \textbf{17.72}             & 24.22                      & 25.38                      & \textbf{38.00}             & 23.47                      & 9.25                       & \textbf{23.55}             & 18.91                      & \textbf{23.82}             & \textbf{43.00}             & \textbf{92.14}             & \textbf{42.81}             & \textbf{6.34}             & 15.00                      & 66.12                      & 55.32                      & \multicolumn{1}{|l}{$\:${\textbf{32.82}             }}\\

        \hline\multicolumn{18}{c}{\small   B=1024} \\
\textcolor{gray!80}{SLM}  & \textcolor{gray!80}{13.36} & \textcolor{gray!80}{21.55} & \textcolor{gray!80}{41.70} & \textcolor{gray!80}{32.80} & \textcolor{gray!80}{22.63} & \textcolor{gray!80}{12.88} & \textcolor{gray!80}{28.40} & \textcolor{gray!80}{19.36} & \textcolor{gray!80}{25.93} & \textcolor{gray!80}{62.00} & \textcolor{gray!80}{75.54} & \textcolor{gray!80}{41.76} & \textcolor{gray!80}{2.00} & \textcolor{gray!80}{11.00} & \textcolor{gray!80}{22.91} & \textcolor{gray!80}{57.05} & \multicolumn{1}{|l}{$\:${\textcolor{gray!80}{30.68} }}\\
Pyramid                   & 16.67                      & 29.11                      & 28.74                      & 40.13                      & 27.89                      & 13.43                      & 24.63                      & 20.36                      & 25.01                      & 43.00                      & \textbf{91.86}             & 43.48                      & 6.78                      & 52.50                      & 65.34                      & 54.87                      & \multicolumn{1}{|l}{$\:${36.49                      }}\\
SnapKV                    & 19.57                      & \textbf{31.67}             & 31.52                      & \textbf{42.04}             & 27.89                      & 12.94                      & 25.31                      & 19.66                      & 25.12                      & 48.50                      & 91.37                      & 42.83                      & 5.99                      & \textbf{56.00}             & \textbf{66.56}             & 53.97                      & \multicolumn{1}{|l}{$\:${37.56                      }}\\
Ada-Pyramid               & \textbf{20.04}             & 31.37                      & \textbf{32.77}             & 40.91                      & \textbf{30.40}             & 14.35                      & 24.97                      & \textbf{21.36}             & 25.37                      & 46.50                      & 91.61                      & 43.78                      & \textbf{9.13}             & 54.50                      & 65.35                      & 55.27                      & \multicolumn{1}{|l}{$\:${37.98                      }}\\
Ada-SnapKV                & 19.74                      & 30.47                      & 31.42                      & 40.82                      & 28.98                      & \textbf{16.07}             & \textbf{25.35}             & 20.57                      & \textbf{25.57}             & \textbf{53.50}             & 91.76                      & \textbf{43.81}             & 4.63                      & 53.50                      & 65.75                      & \textbf{55.90}             & \multicolumn{1}{|l}{$\:${\textbf{37.99}             }}\\

        \hline\multicolumn{18}{c}{\small   B=2048} \\
\textcolor{gray!80}{SLM}  & \textcolor{gray!80}{16.86} & \textcolor{gray!80}{31.83} & \textcolor{gray!80}{48.22} & \textcolor{gray!80}{33.59} & \textcolor{gray!80}{29.36} & \textcolor{gray!80}{15.74} & \textcolor{gray!80}{30.11} & \textcolor{gray!80}{20.89} & \textcolor{gray!80}{26.68} & \textcolor{gray!80}{65.50} & \textcolor{gray!80}{92.49} & \textcolor{gray!80}{44.12} & \textcolor{gray!80}{5.25} & \textcolor{gray!80}{21.00} & \textcolor{gray!80}{39.75} & \textcolor{gray!80}{56.66} & \multicolumn{1}{|l}{$\:${\textcolor{gray!80}{36.13} }}\\
Pyramid                   & 21.16                      & 36.66                      & 37.56                      & 43.28                      & 36.25                      & 19.72                      & 27.90                      & 21.73                      & 26.35                      & 53.00                      & \textbf{92.36}             & \textbf{44.61}             & 6.94                      & \textbf{89.50}             & 64.23                      & 53.11                      & \multicolumn{1}{|l}{$\:${42.15                      }}\\
SnapKV                    & 21.24                      & 39.86                      & 38.21                      & 45.96                      & 35.36                      & \textbf{21.20}             & \textbf{28.68}             & 21.49                      & 26.50                      & 58.00                      & \textbf{92.36}             & 44.06                      & 6.31                      & 88.50                      & \textbf{64.26}             & 53.73                      & \multicolumn{1}{|l}{$\:${42.86                      }}\\
Ada-Pyramid               & 20.75                      & 39.12                      & \textbf{40.18}             & 44.49                      & \textbf{41.17}             & 18.94                      & 27.90                      & \textbf{22.06}             & 26.71                      & 57.50                      & 91.86                      & 44.35                      & \textbf{10.30}            & 84.50                      & 63.72                      & \textbf{54.81}             & \multicolumn{1}{|l}{$\:${43.02                      }}\\
Ada-SnapKV                & \textbf{22.42}             & \textbf{40.26}             & 40.13                      & \textbf{49.60}             & 38.23                      & 19.70                      & 28.49                      & 21.76                      & \textbf{26.77}             & \textbf{61.50}             & 92.35                      & 44.04                      & 8.27                      & 85.00                      & 64.11                      & 54.09                      & \multicolumn{1}{|l}{$\:${\textbf{43.55}             }}\\

        \hline
    \end{tabular}
}
\end{table*}

\begin{table*}[t!]
	\centering
	\small
    \caption{Detailed results of Mistral-7B-Instruct-v0.2 on LongBench. (Question-agnostic)}
		\label{tbl:detailed_mistral_longbench_question_agnostic}
	\addtolength{\tabcolsep}{-4.1pt}
	 \resizebox{\textwidth}{!}{%
    \renewcommand{\arraystretch}{0.7}
    \begin{tabular}{
            l@{}   c@{\hspace{-0.1ex}}c@{\hspace{0.2ex}}  c@{\hspace{0.2ex}}c@{\hspace{-2.2 ex}}c@{\hspace{-2ex}}c   @{\hspace{-0.5ex}}c@{\hspace{-1.2ex}}c@{\hspace{-1.2ex}}c@{\hspace{0.8ex}}   c@{\hspace{-0.2ex}}c@{\hspace{-1.2ex}} c@{\hspace{0.4ex}}c@{\hspace{0ex}}c@{\hspace{1.7ex}}c@{\hspace{0.3ex}}c@{\hspace{1.5ex}}c@{}
        }
        \toprule
        & \multicolumn{3}{c}{ Single-Doc. QA}                                                                                                                                   & \multicolumn{3}{c}{ Multi-Doc. QA}                                                                                                                                          & \multicolumn{3}{c}{ Summarization}                                                                                                                                             & \multicolumn{3}{c}{ Few-shotLearning}                                                                                                                                     & \multicolumn{2}{c}{ Synthetic}                                                                                & \multicolumn{2}{c}{ Code}                                                                                   & \multicolumn{1}{c}{}                                  \\ \cmidrule(lr){2-4}\cmidrule(lr){5-7}\cmidrule(lr){8-10}\cmidrule(lr){11-13}\cmidrule(lr){14-15}\cmidrule(lr){16-17}
        & \small \rotatebox[origin=c]{-45}{NrtvQA} & \small \rotatebox[origin=c]{-45}{Qasper} & \small \rotatebox[origin=c]{-45}{MF-en} & \small \rotatebox[origin=c]{-45}{HotpotQA} & \small \rotatebox[origin=c]{-45}{2WikiMQA} & \small \rotatebox[origin=c]{-45}{Musique} & \small \rotatebox[origin=c]{-45}{GovReport} & \small \rotatebox[origin=c]{-45}{QMSum} & \small \rotatebox[origin=c]{-45}{MultiNews} & \small \rotatebox[origin=c]{-45}{TREC} & \small \rotatebox[origin=c]{-45}{TriviaQA} & \small \rotatebox[origin=c]{-45}{SAMSum} & \small \rotatebox[origin=c]{-45}{PCount} & \small \rotatebox[origin=c]{-45}{PRe} & \small \rotatebox[origin=c]{-45}{Lcc} & \multicolumn{1}{l}{\small \rotatebox[origin=c]{-45}{RB-P}} & \small \rotatebox[origin=c]{0}{\makecell{Ave. \\ Score}}  \\ \midrule
		Full Cache & {26.74} & {32.84} & {50.00} & {43.45} & {27.77} & {18.49} & {32.91} & {24.64} & {26.99} & {71.00} & {86.23} & {43.32} & {2.94} & {86.31} & {57.39} & {54.32} & \multicolumn{1}{|l}{$\:${42.83}} \\
        \hline\multicolumn{18}{c}{\small   B=128} \\
\textcolor{gray!80}{SLM}  & \textcolor{gray!80}{7.62}  & \textcolor{gray!80}{7.49}  & \textcolor{gray!80}{26.12} & \textcolor{gray!80}{18.97} & \textcolor{gray!80}{14.44} & \textcolor{gray!80}{7.26}  & \textcolor{gray!80}{19.46} & \textcolor{gray!80}{17.29} & \textcolor{gray!80}{19.46} & \textcolor{gray!80}{26.50} & \textcolor{gray!80}{73.27} & \textcolor{gray!80}{36.65} & \textcolor{gray!80}{1.50} & \textcolor{gray!80}{4.50}  & \textcolor{gray!80}{14.58} & \textcolor{gray!80}{59.10} & \multicolumn{1}{|l}{$\:${\textcolor{gray!80}{22.14} }}\\
Pyramid                   & 10.69                      & 8.37                       & 20.65                      & 18.34                      & 13.78                      & \textbf{5.99}              & 17.32                      & 17.71                      & \textbf{19.20}             & 22.50                      & 82.69                      & 37.48                      & 3.25                      & \textbf{3.00}              & 53.63                      & 58.13                      & \multicolumn{1}{|l}{$\:${24.55                      }}\\
SnapKV                    & 10.51                      & 9.52                       & 21.28                      & 18.76                      & 13.85                      & 5.57                       & 17.68                      & 18.05                      & 18.18                      & 21.50                      & 82.52                      & 36.61                      & \textbf{3.74}             & 2.50                       & 52.84                      & 59.04                      & \multicolumn{1}{|l}{$\:${24.51                      }}\\
Ada-Pyramid               & \textbf{12.53}             & \textbf{9.61}              & 20.94                      & 18.54                      & 14.38                      & 4.90                       & 17.54                      & \textbf{18.15}             & 19.02                      & 22.00                      & \textbf{84.44}             & \textbf{37.72}             & 3.67                      & 1.33                       & \textbf{55.49}             & 59.28                      & \multicolumn{1}{|l}{$\:${24.97                      }}\\
Ada-SnapKV                & 11.13                      & 9.33                       & \textbf{21.78}             & \textbf{19.41}             & \textbf{14.96}             & 5.95                       & \textbf{17.82}             & 17.39                      & 19.15                      & \textbf{24.50}             & 83.71                      & 37.33                      & 3.73                      & 1.50                       & 54.91                      & \textbf{59.92}             & \multicolumn{1}{|l}{$\:${\textbf{25.16}             }}\\

        \hline\multicolumn{18}{c}{\small   B=256} \\
\textcolor{gray!80}{SLM}  & \textcolor{gray!80}{8.37}  & \textcolor{gray!80}{7.47}  & \textcolor{gray!80}{28.82} & \textcolor{gray!80}{21.42} & \textcolor{gray!80}{15.30} & \textcolor{gray!80}{6.98}  & \textcolor{gray!80}{21.74} & \textcolor{gray!80}{18.23} & \textcolor{gray!80}{22.92} & \textcolor{gray!80}{38.50} & \textcolor{gray!80}{74.46} & \textcolor{gray!80}{37.77} & \textcolor{gray!80}{1.25} & \textcolor{gray!80}{3.75}  & \textcolor{gray!80}{17.64} & \textcolor{gray!80}{58.95} & \multicolumn{1}{|l}{$\:${\textcolor{gray!80}{23.97} }}\\
Pyramid                   & 9.86                       & \textbf{11.24}             & 21.86                      & 20.71                      & 13.40                      & 6.58                       & 20.07                      & 18.21                      & 21.04                      & 26.00                      & 85.73                      & 39.34                      & 3.11                      & \textbf{8.68}              & 57.18                      & 56.31                      & \multicolumn{1}{|l}{$\:${26.21                      }}\\
SnapKV                    & 10.99                      & 10.94                      & 23.27                      & 21.67                      & 13.87                      & 6.42                       & 20.16                      & 18.26                      & 21.54                      & 28.50                      & 85.72                      & 39.21                      & 2.56                      & 6.50                       & 57.74                      & 58.10                      & \multicolumn{1}{|l}{$\:${26.59                      }}\\
Ada-Pyramid               & \textbf{11.01}             & 9.98                       & 22.14                      & 22.26                      & \textbf{15.42}             & 7.17                       & 19.78                      & \textbf{18.74}             & 21.27                      & 31.00                      & \textbf{87.30}             & 39.32                      & 3.42                      & 8.43                       & 58.18                      & 58.08                      & \multicolumn{1}{|l}{$\:${27.09                      }}\\
Ada-SnapKV                & 10.45                      & 10.47                      & \textbf{23.67}             & \textbf{22.45}             & 13.48                      & \textbf{7.29}              & \textbf{20.58}             & 18.21                      & \textbf{21.60}             & \textbf{32.00}             & 87.05                      & \textbf{39.74}             & \textbf{3.66}             & 8.13                       & \textbf{58.99}             & \textbf{58.44}             & \multicolumn{1}{|l}{$\:${\textbf{27.26}             }}\\

        \hline\multicolumn{18}{c}{\small   B=512} \\
\textcolor{gray!80}{SLM}  & \textcolor{gray!80}{9.54}  & \textcolor{gray!80}{9.05}  & \textcolor{gray!80}{32.27} & \textcolor{gray!80}{21.64} & \textcolor{gray!80}{15.17} & \textcolor{gray!80}{8.76}  & \textcolor{gray!80}{24.89} & \textcolor{gray!80}{18.33} & \textcolor{gray!80}{25.39} & \textcolor{gray!80}{49.00} & \textcolor{gray!80}{75.51} & \textcolor{gray!80}{39.40} & \textcolor{gray!80}{1.50} & \textcolor{gray!80}{5.75}  & \textcolor{gray!80}{21.43} & \textcolor{gray!80}{57.64} & \multicolumn{1}{|l}{$\:${\textcolor{gray!80}{25.95} }}\\
Pyramid                   & 11.72                      & 12.41                      & 23.61                      & 23.77                      & 14.36                      & 6.61                       & 21.92                      & 19.09                      & 23.25                      & 37.50                      & 87.18                      & 40.61                      & 3.40                      & 14.29                      & 59.68                      & 57.17                      & \multicolumn{1}{|l}{$\:${28.54                      }}\\
SnapKV                    & 11.01                      & \textbf{13.37}             & 24.86                      & 25.33                      & 13.39                      & 6.37                       & 22.98                      & 19.00                      & \textbf{23.48}             & 41.50                      & 86.94                      & 40.05                      & \textbf{3.83}             & 14.97                      & 60.18                      & 57.33                      & \multicolumn{1}{|l}{$\:${29.04                      }}\\
Ada-Pyramid               & 11.51                      & 12.59                      & 24.90                      & 23.29                      & \textbf{15.62}             & 7.10                       & 21.78                      & \textbf{19.41}             & 23.13                      & 45.00                      & 87.42                      & 40.96                      & 3.37                      & \textbf{19.10}             & 60.65                      & \textbf{58.46}             & \multicolumn{1}{|l}{$\:${29.64                      }}\\
Ada-SnapKV                & \textbf{13.07}             & 12.48                      & \textbf{26.69}             & \textbf{26.33}             & 14.12                      & \textbf{7.74}              & \textbf{23.02}             & 19.13                      & 23.43                      & \textbf{47.00}             & \textbf{87.81}             & \textbf{41.05}             & 3.77                      & 17.83                      & \textbf{60.95}             & 57.20                      & \multicolumn{1}{|l}{$\:${\textbf{30.10}             }}\\

        \hline\multicolumn{18}{c}{\small   B=1024} \\
\textcolor{gray!80}{SLM} & \textcolor{gray!80}{11.05} & \textcolor{gray!80}{14.15} & \textcolor{gray!80}{38.89} & \textcolor{gray!80}{23.79} & \textcolor{gray!80}{18.25} & \textcolor{gray!80}{9.71}  & \textcolor{gray!80}{27.20} & \textcolor{gray!80}{20.25} & \textcolor{gray!80}{26.20} & \textcolor{gray!80}{52.50} & \textcolor{gray!80}{76.96} & \textcolor{gray!80}{41.79} & \textcolor{gray!80}{0.00} & \textcolor{gray!80}{8.50}  & \textcolor{gray!80}{24.24} & \textcolor{gray!80}{56.58} & \multicolumn{1}{|l}{$\:${\textcolor{gray!80}{28.13} }}\\
Pyramid                  & 12.22                      & 16.32                      & 28.01                      & 26.55                      & \textbf{15.58}             & 7.59                       & 23.91                      & \textbf{20.04}             & 25.38                      & 46.00                      & 87.41                      & 41.37                      & 3.74                      & 34.29                      & 63.44                      & \textbf{58.19}             & \multicolumn{1}{|l}{$\:${31.88                      }}\\
SnapKV                   & 13.43                      & 16.30                      & 30.57                      & 27.04                      & 14.88                      & \textbf{8.48}              & \textbf{24.97}             & 19.71                      & \textbf{25.64}             & 50.50                      & 87.32                      & 41.19                      & 2.87                      & 28.21                      & 63.55                      & 57.38                      & \multicolumn{1}{|l}{$\:${32.00                      }}\\
Ada-Pyramid              & 12.34                      & 17.09                      & 30.24                      & 27.13                      & 14.01                      & 8.26                       & 23.60                      & 20.03                      & 25.01                      & 52.50                      & 87.26                      & \textbf{42.75}             & \textbf{3.96}             & \textbf{39.33}             & 62.92                      & 57.67                      & \multicolumn{1}{|l}{$\:${32.76                      }}\\
Ada-SnapKV               & \textbf{13.86}             & \textbf{18.39}             & \textbf{31.07}             & \textbf{29.03}             & 14.92                      & 8.24                       & 24.82                      & 19.86                      & 25.59                      & \textbf{55.50}             & \textbf{87.81}             & 41.98                      & 3.04                      & 35.96                      & \textbf{64.32}             & 57.95                      & \multicolumn{1}{|l}{$\:${\textbf{33.27}             }}\\

        \hline\multicolumn{18}{c}{\small   B=2048} \\
\textcolor{gray!80}{SLM} & \textcolor{gray!80}{11.80} & \textcolor{gray!80}{20.40} & \textcolor{gray!80}{43.35} & \textcolor{gray!80}{25.01} & \textcolor{gray!80}{18.21} & \textcolor{gray!80}{9.80}  & \textcolor{gray!80}{29.40} & \textcolor{gray!80}{21.40} & \textcolor{gray!80}{26.73} & \textcolor{gray!80}{59.00} & \textcolor{gray!80}{83.06} & \textcolor{gray!80}{43.26} & \textcolor{gray!80}{0.38} & \textcolor{gray!80}{15.75} & \textcolor{gray!80}{30.11} & \textcolor{gray!80}{56.05} & \multicolumn{1}{|l}{$\:${\textcolor{gray!80}{30.86} }}\\
Pyramid                  & 14.04                      & 21.19                      & 36.29                      & 27.07                      & 15.74                      & 10.21                      & 25.62                      & 20.71                      & 26.25                      & 52.50                      & 87.31                      & 43.53                      & 4.05                      & 65.42                      & 64.97                      & 58.04                      & \multicolumn{1}{|l}{$\:${35.81                      }}\\
SnapKV                   & \textbf{15.91}             & 21.64                      & 35.48                      & 29.94                      & 16.34                      & 11.49                      & \textbf{27.29}             & 20.67                      & \textbf{26.78}             & 56.00                      & 87.46                      & 42.19                      & \textbf{4.93}             & 61.83                      & \textbf{65.34}             & 56.88                      & \multicolumn{1}{|l}{$\:${36.26                      }}\\
Ada-Pyramid              & 15.44                      & 21.48                      & 36.86                      & 29.43                      & \textbf{16.88}             & 10.75                      & 25.52                      & 20.74                      & 26.60                      & 54.50                      & 87.17                      & 43.05                      & 2.89                      & \textbf{70.96}             & 65.15                      & \textbf{58.66}             & \multicolumn{1}{|l}{$\:${36.63                      }}\\
Ada-SnapKV               & 15.84                      & \textbf{22.24}             & \textbf{38.67}             & \textbf{32.03}             & 16.54                      & \textbf{12.68}             & 26.35                      & \textbf{20.93}             & 26.65                      & \textbf{57.50}             & \textbf{87.49}             & \textbf{43.55}             & 4.66                      & 68.67                      & 64.24                      & 58.25                      & \multicolumn{1}{|l}{$\:${\textbf{37.27}             }}\\

        \hline
    \end{tabular}
}
\end{table*}

\subsection{Proof of Theorem \ref{thm:bound}}
\label{apdx:prof_bound}
\begin{theorem}
	The $L_1$ eviction loss  can be bounded by $\epsilon$:
	{\small 	\begin{align}
			\small
			L_1 \: \text{Eviction Loss} &\leq \epsilon = 2hC - 2C\sum_{i\in [1, h]}\sum_{j \in [1,n ]} \mathcal{I}_i^jA_i^j
	\end{align}	}
	where $C=Max\left\{\lVert V_iW_i^O\rVert_{\infty} \right\}$ is a constant number, representing the max row norm among all matrices.
\end{theorem}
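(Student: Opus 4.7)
The plan is to decompose the eviction loss head-by-head, bound each head's contribution by the product of an attention-difference term and a matrix norm term, and then compute the attention difference exactly by exploiting that $\hat{A}_i$ is obtained by zeroing out evicted entries of $A_i$ and renormalizing. First I would write
\[
y - \hat{y} = \sum_{i \in [1,h]} (A_i - \hat{A}_i)\, V_i W_i^O,
\]
apply the triangle inequality for the $L_1$ (row) norm, and then use the elementary inequality $\lVert u M \rVert_1 \le \lVert u \rVert_1 \cdot \lVert M \rVert_\infty$, where $\lVert M \rVert_\infty$ is the maximum row norm. Bounding each $\lVert V_i W_i^O \rVert_\infty$ by the global constant $C$ reduces the problem to controlling $\sum_i \lVert A_i - \hat{A}_i \rVert_1$.

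The central step is to compute $\lVert A_i - \hat{A}_i \rVert_1$ exactly. Let $S_i = \sum_{j} \mathcal{I}_i^j A_i^j$ be the total attention mass on retained positions. Because $\hat{A}_i$ is a softmax after setting the logits of evicted positions to $-\infty$, one gets $\hat{A}_i^j = 0$ for evicted $j$ and $\hat{A}_i^j = A_i^j / S_i$ for retained $j$. Splitting the sum over $j$ into retained and evicted indices,
\begin{align*}
\lVert A_i - \hat{A}_i \rVert_1 &= \sum_{j:\mathcal{I}_i^j=1} A_i^j \left( \tfrac{1}{S_i} - 1 \right) + \sum_{j:\mathcal{I}_i^j=0} A_i^j \\
&= (1 - S_i) + (1 - S_i) = 2\left(1 - \sum_{j} \mathcal{I}_i^j A_i^j \right),
\end{align*}
using $\sum_j A_i^j = 1$ in the last line. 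Summing over heads and multiplying by $C$ yields exactly the claimed bound $\epsilon = 2hC - 2C \sum_{i,j} \mathcal{I}_i^j A_i^j$.

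The only nontrivial part is the exact evaluation of $\lVert A_i - \hat{A}_i \rVert_1$; everything else is a routine application of norm inequalities. A minor subtlety worth checking is the choice of matrix norm convention, so that $\lVert u M \rVert_1 \le \lVert u \rVert_1 \cdot \lVert M \rVert_\infty$ is used with $\lVert M \rVert_\infty$ meaning the maximum $L_1$ norm over rows of $M$, consistent with the statement's definition of $C$. With this convention fixed, the derivation above is tight up to the single relaxation $\lVert V_i W_i^O \rVert_\infty \le C$, which is where the bound loses sharpness across heads.
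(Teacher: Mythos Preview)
Your proposal is correct and follows essentially the same route as the paper's own proof: decompose $y-\hat y$ head-by-head, apply the triangle inequality and the row-norm bound $\lVert uM\rVert_1 \le \lVert u\rVert_1 \lVert M\rVert_\infty$, then compute $\lVert A_i-\hat A_i\rVert_1$ exactly by splitting into retained and evicted indices to obtain $2(1-S_i)$ per head. The paper writes the same quantity as $\lVert(\mathbf{1}-\mathcal{I}_i/F_i)\odot A_i\rVert_1$ with $F_i=\lVert A_i\odot\mathcal{I}_i\rVert_1$ (your $S_i$), but the computation and the resulting bound are identical.
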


\begin{proof}
	Consider the softmax function as:
	\begin{align}
		softmax(x)^j = \frac{exp(x^j)}{\sum_j exp(x^j)}
	\end{align}
	Thus, the attention weight after eviction procedure is:
	{
		\begin{align}
			\hat{A}_i  = \text{softmax}(-\infty \odot ( \textbf{1} - \mathcal{I}_i) + s_i) \:\text{where} \: s_i = q_i K_i^T 
		\end{align}
	} 
	\begin{align}
		&\hat{A}_i^j =  \frac{exp(s_i^j-\infty \odot ( 1 - \mathcal{I}_i^j))}{\sum_j exp(s_i^j-\infty \odot ( 1 - \mathcal{I}_i^j))} \\
		&= \frac{\mathcal{I}_i^jexp(s_i^j)}{\sum_j \mathcal{I}_i^jexp(s_i^j)} = \frac{  \mathcal{I}_i^j exp(s^j_i) }{\sum_j exp(s^j_i)}\frac{\sum_j exp(s^j_i)}{\sum_j  \mathcal{I}_i^jexp(s^j_i) } 
	\end{align}
	Given $ A_i  = \text{softmax}(s_i) \:\text{where} \: s_i = q_i K_i^T$, we can get $A_i^j = \frac{exp(s_i^j)}{\sum_j exp(s_i^j)}$.
	\begin{align}
		\hat{A}_i^j  = \mathcal{I}_i^j A_i^j \frac{\sum_j exp(s^j_i)}{\sum_j  \mathcal{I}_i^jexp(s^j_i) } =\frac{ \mathcal{I}_i^j A_i^j}{||A_i \odot \mathcal{I}_i||_1}
	\end{align}
	Then we can obtain:
	\begin{equation}
		\hat{A}_i = \frac{A_i \odot \mathcal{I}_i}{||A_i \odot \mathcal{I}_i||_1}
	\end{equation}

	Thus:
	\begin{align}
		\hat{y} = \sum_{i \in [1, h]} \frac{A_i \odot \mathcal{I}_i}{||A_i \odot \mathcal{I}_i||_1} V_i W_i^O 
	\end{align}
	
	By calculating the L1 distance between their outputs, we can obtain
	{
		\begin{align}
			&||y-\hat{y}||_1 = ||\sum_{i\in [1, h]}  (\textbf{1}-\frac{\mathcal{I}_i}{||A_i \odot \mathcal{I}_i||_1}) \odot A_i V_iW_i^O||_1  \\ 
			&\leq  \sum_{i\in [1, h]}||  (\textbf{1}-\frac{\mathcal{I}_i}{||A_i \odot \mathcal{I}_i||_1}) \odot A_i V_iW_i^O||_1 \\
			&\leq \sum_{i\in [1, h]}||  (\textbf{1}-\frac{\mathcal{I}_i}{\lVert A_i \odot \mathcal{I}_i||_1}) \odot A_i\rVert_{1} \:  \lVert V_iW_i^O\rVert_{\infty} \\
			&\leq C \sum_{i\in [1, h]}||  (\textbf{1}-\frac{\mathcal{I}_i}{\lVert A_i \odot \mathcal{I}_i\rVert_1}) \odot A_i\rVert_{1} \\
			&where \: C = Max\left\{\lVert V_iW_i^O\rVert_{\infty} \right\} \notag
		\end{align}
	}
	By expanding $A_i$, we can further simplify the expression.
	\begin{align}
		\text{Let} \: \lVert A_i \odot \mathcal{I}_i\rVert_1 \: \text{as} \: F_i \in (0,1)
	\end{align}
	{
		\begin{align}
			&||y-\hat{y}||_1 \leq C \sum_{i\in [1, h]}||  (\textbf{1}-\frac{\mathcal{I}_i}{\lVert A_i \odot \mathcal{I}_i\rVert_1}) \odot A_i\rVert_{1} \\
			& =  C \sum_{i\in [1, h]} \sum_{j \in [1, n]} \frac{|F_i-\mathcal{I}_i^j|A_i^j}{F_i} 
		\end{align}
		Considering 
		\begin{align}
			\mathcal{I}_i^j & = 
			\begin{cases} 
				1 		&  \text{if  $K_i^j \: \text{and} \: V_i^j$ are retained }\\
				0 &\text{otherwise, evict  $K_i^j \: \text{and} \: V_i^j$}\\
			\end{cases}
			and \sum_{j \in [1,n] }A_i^j = 1
		\end{align}
		\begin{align}
			&||y-\hat{y}||_1	 \leq C \sum_{i\in [1, h]} \sum_{j \in [1, n]}^{if \mathcal{I}_i^j=0} A_i^j + C \sum_{i\in [1, h]} \frac{(1 - F_i)}{F_i} \sum_{j \in [1, n]}^{if \mathcal{I}_i^j=1}{A_i^j} \\
			& \text{Due to } F_i = \lVert A_i \odot \mathcal{I}_i\rVert_1 = \sum_{j \in [1, n]}  \mathcal{I}_i^j A_i^j = \sum_{j \in [1, n]}^{if \mathcal{I}_i^j=1} A_i^j \notag \\
			& ||y-\hat{y}||_1	 \leq  C \sum_{i\in [1, h]} \sum_{j \in [1, n]}^{if \mathcal{I}_i^j=0} A_i^j +  C \sum_{i\in [1, h]} (1 - \sum_{j \in [1, n]}^{if \mathcal{I}_i^j=1}  A_i^j) \\
			& = 2C  \sum_{i\in [1, h]} \sum_{j \in [1, n]}^{if \mathcal{I}_i^j=0} A_i^j \\
			& = 2C \sum_{i\in [1, h]} \sum_{j \in [1, n]} (1-\mathcal{I}_i^j) A_i^j \\
			& =  2hC - 2C \sum_{i\in [1, h]} \sum_{j \in [1, n]} \mathcal{I}_i^j A_i^j
		\end{align}
		Finally,
		\begin{equation}
			L_1 \: \text{Eviction Loss} \leq \epsilon = 2hC - 2C\sum_{i\in [1, h]}\sum_{j \in [1,n ]} \mathcal{I}_i^jA_i^j
		\end{equation}
	}
\end{proof}

\subsection{Proof of Theorem \ref{thm:upper_bound}}
\label{apdx:proof_upper_bound}
\begin{theorem}
	 The Top-k cache eviction $\left\{ \mathcal{I}^*_i \right\}$ minimizes the upper bound $\epsilon$ of $L_1$ eviction loss:
	{
		\begin{align}
			\text{Top-k eviction decision} \:	\left\{ \mathcal{I}^*_i \right\} = \argmin_{\left\{ \mathcal{I}_i \right\}} \epsilon  \: ,\\ \epsilon^*= \min_{\left\{ \mathcal{I}_i \right\}} \epsilon = 2hC - 2C\sum_{i\in [1, h]}\sum_{j \in [1,n]}^{A_i^j \in \text{Top-k}(A_i,B_i)} A_i^j 
		\end{align}
	}
\end{theorem}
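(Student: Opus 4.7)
The plan is to reduce the minimization of $\epsilon$ to a collection of independent per-head selection subproblems and then solve each by a standard exchange argument. First, since $2hC$ and $2C$ are constants independent of $\{\mathcal{I}_i\}$, I would observe that
\begin{equation}
\argmin_{\{\mathcal{I}_i\}} \epsilon \;=\; \argmax_{\{\mathcal{I}_i\}} \sum_{i \in [1,h]} \sum_{j \in [1,n]} \mathcal{I}_i^j A_i^j ,
\end{equation}
subject to the constraints $\mathcal{I}_i^j \in \{0,1\}$ and $\sum_{j \in [1,n]} \mathcal{I}_i^j = B_i$ for each head $i$.

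Next I would note that both the objective and the constraints decompose across heads: the objective is a sum of per-head contributions $\sum_j \mathcal{I}_i^j A_i^j$, and the budget $B_i$ is imposed on head $i$ alone. Hence the joint maximization splits into $h$ independent subproblems, one for each $i$: choose an indicator $\mathcal{I}_i \in \{0,1\}^n$ with exactly $B_i$ ones so as to maximize $\sum_{j} \mathcal{I}_i^j A_i^j$.

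For the per-head subproblem, I would invoke a standard exchange argument. Suppose an optimizer retains an index $j_1$ whose weight $A_i^{j_1}$ is strictly smaller than the weight $A_i^{j_2}$ of some evicted index $j_2$; swapping $\mathcal{I}_i^{j_1}$ and $\mathcal{I}_i^{j_2}$ preserves the budget and strictly increases the objective, contradicting optimality. Therefore every optimizer must retain precisely the $B_i$ indices with the largest attention weights, which is exactly the Top-$k$ eviction rule $\mathcal{I}_i^*$. Substituting $\{\mathcal{I}_i^*\}$ into $\epsilon$ gives $\epsilon^*$ in the stated form.

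The only delicate point is ties at the boundary of the Top-$k$ set, in which case the argmin is not unique; I would address this by noting that any tie-breaking rule produces the same value $\epsilon^*$, so the theorem's claim about the minimum (and about Top-$k$ achieving it) remains correct. Beyond this, the argument is essentially elementary combinatorial optimization, and the main conceptual step is recognizing that the upper bound in Theorem~\ref{thm:bound} is linear in the indicator variables and separable across heads, which is precisely what permits the reduction to per-head Top-$k$ selection.
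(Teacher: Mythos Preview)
Your proposal is correct and follows essentially the same approach as the paper: reduce minimizing $\epsilon$ to maximizing the separable sum $\sum_{i,j}\mathcal{I}_i^j A_i^j$, split into $h$ independent per-head budgeted selection problems, and conclude that Top-$k$ is optimal for each. Your version is in fact slightly more thorough than the paper's, since you supply the exchange argument and address ties, whereas the paper simply asserts that Top-$k$ maximizes each subproblem.
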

\begin{proof}
	Given the budget allocation results $\left\{B_i\right\}$, the objective of minimizing $\epsilon$ can be decomposed into $i \in [1, h]$ independent subproblems, each expressed as follows:
	
	\begin{equation}
		\arg\max_{\mathcal{I}_i} \sum_{j \in [1,n]} \mathcal{I}_i^j A_i^j, \quad \text{s.t.} \quad \sum_{j \in [1,n]} \mathcal{I}_i^j = B_i
	\end{equation}
	
	The Top-k cache eviction maximizes the $h$ independent subproblems, thereby minimizing the upper bound $\epsilon$. Formally:
	
	\begin{align}
		\mathcal{I}_i^*	=	\argmax_{ \mathcal{I}_i} \sum_{j \in [1,n]} \mathcal{I}_i^j A_i^j \end{align}
	
	The optimal solution satisfies:
	\begin{align}
		\min_{\mathcal{I}_i} \sum_{j \in [1,n]} \mathcal{I}_i^{j} A_i^j = \sum_{j \in [1,n]} \mathcal{I}_i^{*j} A_i^j = \sum_{j \in [1,n]}^{A_i^j \in \text{Top-k}(A_i,B_i)} A_i^j 
	\end{align}
\end{proof}

\subsection{Proof of Theorem \ref{thm:better}}
\label{apdx:proof_better}
\begin{theorem}
	The adaptive budget allocation  $\{B^*_i\}$ derived from Algorithm~\ref{alg:allocation}  achieves the minimal upper bound $\epsilon^{**}$ for loss associated with Top-k eviction methods:
	{\begin{equation}
			\epsilon^{**} = \min_{\left\{B_i\right\}} \epsilon^{*}
	\end{equation}}
\end{theorem}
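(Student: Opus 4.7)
The plan is to recast the optimization of $\epsilon^*$ over budget allocations $\{B_i\}$ with $\sum_i B_i = B$ as a cardinality-constrained subset-sum problem. Since $2hC$ is a constant independent of the allocation, minimizing $\epsilon^*$ is equivalent to maximizing
\[
F(\{B_i\}) \;=\; \sum_{i \in [1,h]} \sum_{j \in [1,n]}^{A_i^j \in \text{Top-}k(A_i, B_i)} A_i^j.
\]
For any fixed $\{B_i\}$, the inner Top-$k$ selection picks the $B_i$ largest entries of $A_i$ (with any fixed tie-breaking rule), so $F(\{B_i\})$ equals $\sum_{(i,j) \in S} A_i^j$ for a specific size-$B$ subset $S$ of the index set $\{(i,j) : i \in [1,h],\, j \in [1,n]\}$.

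Next, I would argue that the supremum of $F$ over feasible allocations equals the supremum of $\sum_{(i,j) \in \tilde{S}} A_i^j$ over all size-$B$ subsets $\tilde S$, with no further structural restriction. One direction is immediate since every allocation yields such a subset. For the converse, given an arbitrary size-$B$ subset $\tilde S$, define $B_i' = |\tilde S \cap (\{i\} \times [1,n])|$; then $\sum_i B_i' = B$, and replacing the entries of $\tilde S$ within each head by that head's $B_i'$ largest entries can only increase the objective, producing a head-wise Top-$k$ subset with value at least that of $\tilde S$. Hence the maximum reduces to an unconstrained global Top-$B$ selection across the concatenated weights.

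Finally, I would verify that Algorithm~\ref{alg:allocation} realizes precisely this global Top-$B$ selection. Line~2 selects the $B$ largest entries of $A = \text{Cat}(\{A_i\})$, and $B^*_i$ is set to the count of those entries falling in head $i$. By a standard exchange argument, the selected entries within head $i$ must be the $B^*_i$ largest of $A_i$; otherwise swapping a non-selected larger entry of head $i$ for a smaller selected one would contradict the global Top-$B$ property. Thus the allocation $\{B^*_i\}$ attains the global maximum of $F$, and substituting back gives $\epsilon^{**} = \min_{\{B_i\}} \epsilon^*$.

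The main obstacle I expect is largely bookkeeping rather than conceptual: carefully justifying the equivalence between head-wise Top-$k$ subsets and arbitrary size-$B$ subsets under a consistent tie-breaking convention, and making the exchange argument in the last step rigorous when multiple attention weights coincide. Both issues can be handled by fixing an arbitrary total order on the index set $(i,j)$ and using it uniformly in both Algorithm~\ref{alg:allocation} and the per-head Top-$k$ operator, so that every maximizing subset is realized by a well-defined allocation.
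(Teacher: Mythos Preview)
Your proposal is correct and takes essentially the same approach as the paper: both reduce minimizing $\epsilon^*$ over allocations to the global Top-$B$ selection over the concatenated weights, and then observe that Algorithm~\ref{alg:allocation} realizes exactly this global selection. Your version is more careful than the paper's brief argument, in particular the explicit exchange argument showing that the global Top-$B$ entries falling in head $i$ are necessarily that head's $B_i^*$ largest, and the attention to tie-breaking; the paper simply asserts the key inequality and summarizes it as ``a global optimal solution always outperforming a local optimal solution.''
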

\begin{proof}
	From line 3 of Algorithm~\ref{alg:allocation}, which adaptively allocates budgets $\left\{ B_i^*\right\}$ based on the Top-k indices $T = \text{Top-k}(A, B).\text{indices}$, it follows that the resulting Top-k eviction leads to an upper bound $\epsilon^{**}$:
	\begin{align}
		\epsilon^{**}=  2hC - 2C\sum_{i\in [1, h]}\sum_{j \in [1,n]}^{A_i^j \in \text{Top-k}(A_i,B_i^*)} A_i^j \\
		=  2hC - 2C\sum_{i \in [1, h]} \sum_{j\in [1, n]}^{A_i^j \in \text{Top-k}(A,B)} A_i^j
	\end{align}
	Given a fixed overall budget, i.e., $\sum_{i \in [1,h]} B_i = \sum_{i \in [1,h]} B_i^*$, we can derive that $\epsilon^{**} \leq \epsilon^*$ for any budget allocation results $\left\{B_i\right\}$. This is because, in the upper bound, the term $\sum_{i\in [1, h]}\sum_{j\in [1,n]}^{A_i^j \in \text{Top-k}(A, B)} A_i^j \geq \sum_{i \in [1, h]} \sum_{j \in [1, n]}^{A_i^j \in \text{Top-k}(A_i, B_i)} A_i^j$. This result can also be understood as a global optimal solution always outperforming a local optimal solution.
\end{proof}

\subsection{Detailed  Visualization of Head Concentration}
\label{apdx:heads}
Figure \ref{fig:all_layers} supplements Figure \ref{fig:accu_weights} in the main paper by presenting the visualization results across all layers. It can be observed that in all layers, different heads exhibit significant variations in attention concentration. This indicates that the adaptive allocation strategy has great potential to reduce the eviction loss in practice.
\begin{figure*}[h!]
	\hspace{1cm}
	\centering
	\includegraphics[width=0.8\textwidth]{./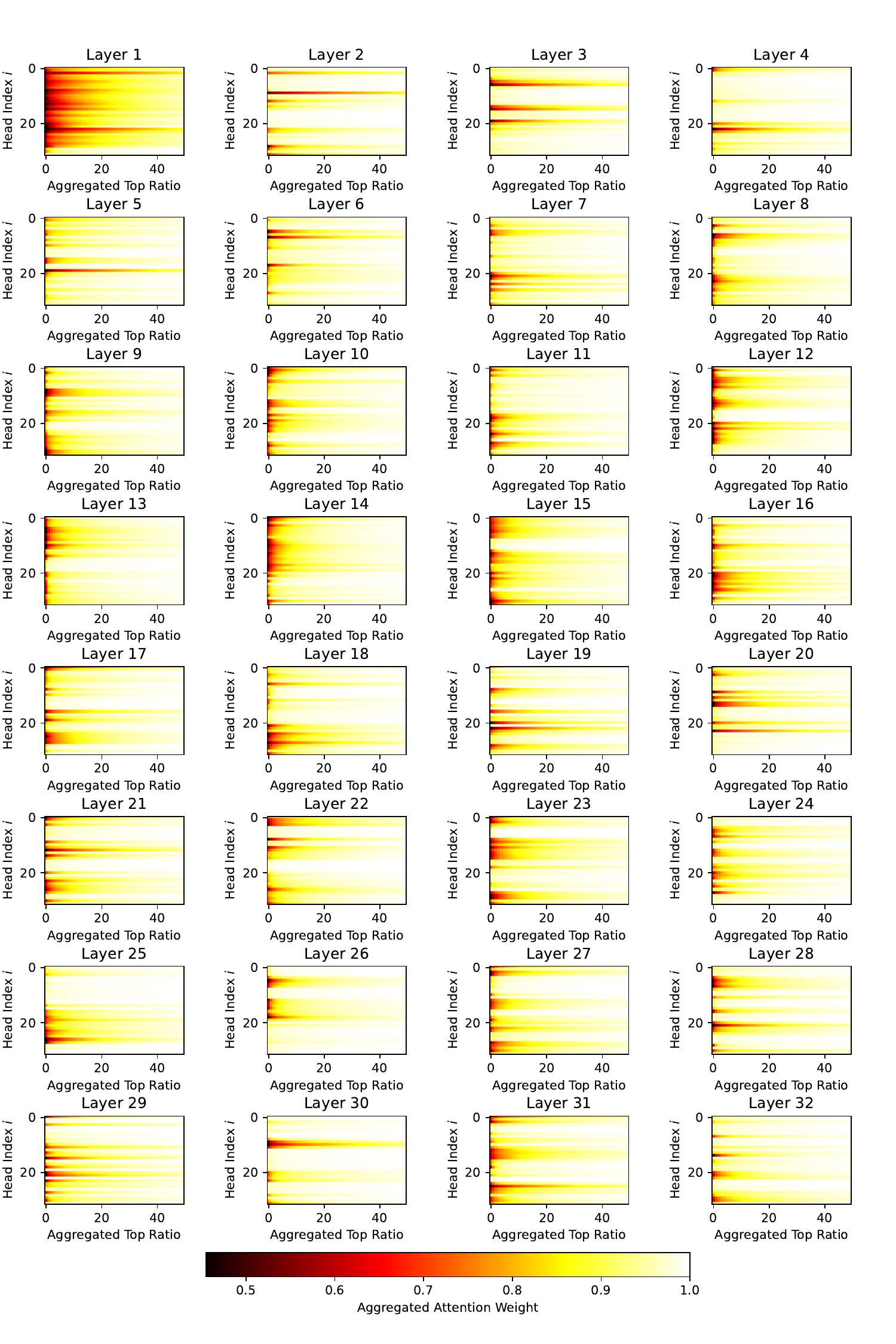}
	\caption{ Visualization of Heads' Concentrations}
	\label{fig:all_layers}
\end{figure*}

\begin{table*}[h]
	\small
	\centering
	\caption{S-NIAH and MK-NIAH templates in Ruler Benchmark.~\cite{hsieh2024ruler}}
	\label{tab:ruler_task_template1}
	\resizebox{\linewidth}{!}{
		\begin{tabular}{cp{0.9\linewidth}}
			\toprule
			
			\begin{tabular}{@{}c@{}}Single NIAH\\Subtask-1 \\(S-NIAH-1)\end{tabular} & 
			\begin{tabular}{@{}p{\linewidth}@{}} 
				\textbf{Task Template:} \\
				Some special magic numbers are hidden within the following text. Make sure to memorize it. I will quiz you about the numbers afterwards.\\
				\textcolor{lightgray}{The grass is green. The sky is blue. The sun is yellow. Here we go. There and back again.} \\
				\textcolor{lightgray}{......} One of the special magic numbers for \textcolor{violet}{\{word\}} is: \textcolor{orange}{\{number\}}. \textcolor{lightgray}{......}\\
                \textcolor{question_color}{What is the special magic number for \{word\} mentioned in the provided text?} \\ \\
                \textcolor{question_color}{The special magic number for \{word\} mentioned in the provided text is}
            \end{tabular}\\
			
			\midrule
			
			\begin{tabular}{@{}c@{}}Single NIAH\\Subtask-2\\(S-NIAH-2)\end{tabular} & 
			\begin{tabular}{@{}p{\linewidth}@{}} 
				\textbf{Task Template:} \\
				Some special magic numbers are hidden within the following text. Make sure to memorize it. I will quiz you about the numbers afterwards.\\
				\textcolor{lightgray}{Paul Graham Essays.} \\
				\textcolor{lightgray}{......} One of the special magic numbers for \textcolor{violet}{\{word\}} is: \textcolor{orange}{\{number\}}. \textcolor{lightgray}{......}\\
                \textcolor{question_color}{What is the special magic number for \{word\} mentioned in the provided text?} \\ \\
            \textcolor{question_color}{The special magic number for \{word\} mentioned in the provided text is}
            \end{tabular}\\
			
			\midrule
			
			
			
			\begin{tabular}{@{}c@{}}Single NIAH\\Subtask-3\\(S-NIAH-3)\end{tabular} & 
			\begin{tabular}{@{}p{\linewidth}@{}} 
				\textbf{Task Template:} \\
				Some special magic words are hidden within the following text. Make sure to memorize it. I will quiz you about the words afterwards.\\
				\textcolor{lightgray}{Paul Graham Essays.} \\
				\textcolor{lightgray}{......} One of the special magic words for \textcolor{violet}{\{word\}} is: \textcolor{orange}{\{word\}}. \textcolor{lightgray}{......}\\
                \textcolor{question_color}{What is the special magic word for \{word\} mentioned in the provided text?} \\ \\
                \textcolor{question_color}{The special magic word for \{word\} mentioned in the provided text is}
            \end{tabular}\\
			
			\midrule
			
			\begin{tabular}{@{}c@{}}Multi-keys NIAH\\Subtask-1\\(MK-NIAH-1)\end{tabular} & 
			\begin{tabular}{@{}p{\linewidth}@{}} 
				\textbf{Task Template:} \\
				Some special magic numbers are hidden within the following text. Make sure to memorize it. I will quiz you about the numbers afterwards.\\
				\textcolor{lightgray}{Paul Graham Essays.} \\
				\textcolor{lightgray}{......} \textcolor{lightgray}{One of the special magic numbers for \{word-1\} is: \{number-1\}.} \textcolor{lightgray}{......}\\
				\textcolor{lightgray}{......} \textcolor{lightgray}{One of the special magic numbers for \{word-2\} is: \{number-2\}.} \textcolor{lightgray}{......}\\
				\textcolor{lightgray}{......} \textcolor{lightgray}{One of the special magic numbers for \{word-3\} is: \{number-3\}.} \textcolor{lightgray}{......}\\
				\textcolor{lightgray}{......} One of the special magic numbers for \textcolor{violet}{\{word-4\}} is: \textcolor{orange}{\{number-4\}}. \textcolor{lightgray}{......}\\
                \textcolor{question_color}{What is the special magic number for \{word-4\} mentioned in the provided text?} \\ \\
                \textcolor{question_color}{The special magic number for \{word-4\} mentioned in the provided text is}
            \end{tabular}\\
			
			\midrule
			
			\begin{tabular}{@{}c@{}}Multi-keys NIAH\\Subtask-2\\(MK-NIAH-2)\end{tabular} & 
			\begin{tabular}{@{}p{\linewidth}@{}} 
				\textbf{Task Template:} \\
				Some special magic numbers are hidden within the following text. Make sure to memorize it. I will quiz you about the numbers afterwards.\\
				\textcolor{lightgray}{One of the special magic numbers for \{word-1\} is: \{number-1\}.} \\
				\textcolor{lightgray}{One of the special magic numbers for \{word-2\} is: \{number-2\}.} \\
				\textcolor{lightgray}{......} One of the special magic numbers for \textcolor{violet}{\{word-x\}} is: \textcolor{orange}{\{number-x\}}. \textcolor{lightgray}{......}\\
				\textcolor{lightgray}{One of the special magic numbers for \{word-n-1\} is: \{number-n-1\}.} \\
				\textcolor{lightgray}{One of the special magic numbers for \{word-n\} is: \{number-n\}.} \\
                \textcolor{question_color}{What is the special magic number for \{word-x\} mentioned in the provided text?} \\ \\
                \textcolor{question_color}{The special magic number for \{word-x\} mentioned in the provided text is}
            \end{tabular}\\
			
			\midrule
			
			\begin{tabular}{@{}c@{}}Multi-keys NIAH\\Subtask-3\\(MK-NIAH-3)\end{tabular} & 
			\begin{tabular}{@{}p{\linewidth}@{}} 
				\textbf{Task Template:} \\
				Some special magic uuids are hidden within the following text. Make sure to memorize it. I will quiz you about the uuids afterwards.\\
				\textcolor{lightgray}{One of the special magic uuids for \{uuid-1\} is: \{uuid-1\}.} \\
				\textcolor{lightgray}{One of the special magic uuids for \{uuid-2\} is: \{uuid-2\}.} \\
				\textcolor{lightgray}{......} One of the special magic uuids for \textcolor{violet}{\{uuid-x\}} is: \textcolor{orange}{\{uuid-x\}}. \textcolor{lightgray}{......}\\
				\textcolor{lightgray}{One of the special magic uuids for \{uuid-n-1\} is: \{uuid-n-1\}.} \\
				\textcolor{lightgray}{One of the special magic uuids for \{uuid-n\} is: \{uuid-n\}.} \\
                \textcolor{question_color}{What is the special magic number for \{uuid-x\} mentioned in the provided text?} \\ \\
                \textcolor{question_color}{The special magic number for \{uuid-x\} mentioned in the provided text is}
            \end{tabular}\\

			\bottomrule
	\end{tabular}}
\end{table*}

\begin{table*}[h]
	\small
	\centering
	\caption{MV-NIAH, MQ-NIAH, VT, CWE, and FWE templates in Ruler Benchmark.~\cite{hsieh2024ruler}}
	\label{tab:ruler_task_template2}
	\resizebox{\linewidth}{!}{
		\begin{tabular}{cp{0.9\linewidth}}
			\toprule
			
			
			
			\begin{tabular}{@{}c@{}}Multi-values NIAH\\(MV-NIAH)\end{tabular} & 
			\begin{tabular}{@{}p{\linewidth}@{}} 
				\textbf{Task Template:} \\
				Some special magic numbers are hidden within the following text. Make sure to memorize it. I will quiz you about the numbers afterwards.\\
				\textcolor{lightgray}{Paul Graham Essays.} \\
				\textcolor{lightgray}{......} One of the special magic numbers for \textcolor{violet}{\{word\}} is: \textcolor{orange}{\{number-1\}}. \textcolor{lightgray}{......}\\
				\textcolor{lightgray}{......} One of the special magic numbers for \textcolor{violet}{\{word\}} is: \textcolor{orange}{\{number-2\}}. \textcolor{lightgray}{......}\\
				\textcolor{lightgray}{......} One of the special magic numbers for \textcolor{violet}{\{word\}} is: \textcolor{orange}{\{number-3\}}. \textcolor{lightgray}{......}\\
				\textcolor{lightgray}{......} One of the special magic numbers for \textcolor{violet}{\{word\}} is: \textcolor{orange}{\{number-4\}}. \textcolor{lightgray}{......}\\
                \textcolor{question_color}{What are all the special magic numbers for \{word\} mentioned in the provided text?} \\ \\
                \textcolor{question_color}{The special magic numbers for \{word\} mentioned in the provided text are}
            \end{tabular}\\
			
			\midrule
			
			
			
			\begin{tabular}{@{}c@{}}Multi-queries NIAH\\(MQ-NIAH)\end{tabular} & 
			\begin{tabular}{@{}p{\linewidth}@{}} 
				\textbf{Task Template:} \\
				Some special magic numbers are hidden within the following text. Make sure to memorize it. I will quiz you about the numbers afterwards.\\
				\textcolor{lightgray}{Paul Graham Essays.} \\
				\textcolor{lightgray}{......} One of the special magic numbers for \textcolor{violet}{\{word-1\}} is: \textcolor{orange}{\{number-1\}}. \textcolor{lightgray}{......} \\    
				\textcolor{lightgray}{......} One of the special magic numbers for \textcolor{violet}{\{word-2\}} is: \textcolor{orange}{\{number-2\}}. \textcolor{lightgray}{......} \\
				\textcolor{lightgray}{......} One of the special magic numbers for \textcolor{violet}{\{word-3\}} is: \textcolor{orange}{\{number-3\}}. \textcolor{lightgray}{......} \\
				\textcolor{lightgray}{......} One of the special magic numbers for \textcolor{violet}{\{word-4\}} is: \textcolor{orange}{\{number-4\}}. \textcolor{lightgray}{......}\\
                \textcolor{question_color}{What are all the special magic numbers for \{word-1\}, \{word-2\}, \{word-3\}, and \{word-4\} mentioned in the provided text?} \\ \\
                \textcolor{question_color}{The special magic numbers for \{word-1\}, \{word-2\}, \{word-3\}, and \{word-4\} mentioned in the provided text are}
            \end{tabular}\\
			
			\midrule
			
			\begin{tabular}{@{}c@{}}Variable Tracking\\(VT)\end{tabular} & 
			\begin{tabular}{@{}p{\linewidth}@{}} 
				\textbf{Task Template:} \\
				\textcolor{blue}{\{one task example\}} \\
				Memorize and track the chain(s) of variable assignment hidden in the following text.\\\\
				\textcolor{lightgray}{The grass is green. The sky is blue. The sun is yellow. Here we go. There and back again.}
				\textcolor{lightgray}{......} VAR \textcolor{orange}{\{X1\}} = \textcolor{violet}{\{number\}} \textcolor{lightgray}{......}\\
				\textcolor{lightgray}{......} VAR \textcolor{orange}{\{X2\}} = \textcolor{orange}{\{X1\}} \textcolor{lightgray}{......}\\
				\textcolor{lightgray}{......} VAR \textcolor{orange}{\{X3\}} = \textcolor{orange}{\{X2\}} \textcolor{lightgray}{......}\\
				\textcolor{lightgray}{......} VAR \textcolor{orange}{\{X4\}} = \textcolor{orange}{\{X3\}} \textcolor{lightgray}{......}\\
				\textcolor{lightgray}{......} VAR \textcolor{orange}{\{X5\}} = \textcolor{orange}{\{X4\}} \textcolor{lightgray}{......}\\
                \textcolor{question_color}{Question: Find all variables that are assigned the value \{number\} in the text above.} \\\\
                \textcolor{question_color}{Answer: According to the chain(s) of variable assignment in the text above, 5 variables are assigned the value \{number\}, they are: }
			\end{tabular}\\
			
			\midrule
			
			\begin{tabular}{@{}c@{}}Common Words Extraction\\(CWE)\end{tabular} & 
			\begin{tabular}{@{}p{\linewidth}@{}} 
				\textbf{Task Template:} \\
				\textcolor{blue}{\{one task example\}} \\
				Below is a numbered list of words. In these words, some appear more often than others. Memorize the ones that appear most often.\\
				1. \textcolor{orange}{word-a} 2. \textcolor{lightgray}{word-b} 3. \textcolor{lightgray}{word-c} 4. \textcolor{orange}{word-a} 5. \textcolor{lightgray}{word-d} 6. \textcolor{orange}{word-a} 7. \textcolor{lightgray}{word-e} 8. \textcolor{lightgray}{word-f} \textcolor{lightgray}{......}\\
                \textcolor{question_color}{Question: What are the 10 most common words in the above list?} \\\\
                \textcolor{question_color}{Answer: The top 10 words that appear most often in the list are: }
			\end{tabular}\\
			
			\midrule
			
			\begin{tabular}{@{}c@{}}Frequent Words Extraction\\(FWE)\end{tabular} & 
			\begin{tabular}{@{}p{\linewidth}@{}} 
				\textbf{Task Template:} \\
				Read the following coded text and track the frequency of each coded word. Find the three most frequently appeared coded words. \textcolor{lightgray}{... ...} \textcolor{orange}{word-a} \textcolor{lightgray}{... word-b ... ... ... word-c ...} \textcolor{orange}{word-a} \textcolor{lightgray}{... word-d word-e ...} \textcolor{orange}{word-a} \textcolor{lightgray}{... ... word-f ... ... ... ... word-g ... word-h ...} \textcolor{orange}{word-a} \textcolor{lightgray}{... word-i ......} \\
                \textcolor{question_color}{Question: Do not provide any explanation. Please ignore the dots '....'. What are the three most frequently appeared words in the above coded text?} \\\\
                \textcolor{question_color}{Answer: According to the coded text above, the three most frequently appeared words are:}
			\end{tabular}\\
			
			\bottomrule
	\end{tabular}}
\end{table*}

\begin{table*}[h]
	\small
	\centering
	\caption{QA templates in Ruler Benchmark.~\cite{hsieh2024ruler}}
	\label{tab:ruler_task_template3}
	\resizebox{\linewidth}{!}{
		\begin{tabular}{cp{0.9\linewidth}}
			\toprule
			
			\begin{tabular}{@{}c@{}}Single\\Hop QA\\(S-QA)\end{tabular} & 
			\begin{tabular}{@{}p{\linewidth}@{}} 
				\textbf{Task Template:} \\
				Answer the question based on the given documents. Only give me the answer and do not output any other words.\\\\
				The following are given documents.\\\\
				\textcolor{lightgray}{Document 1:} \\ \textcolor{lightgray}{\{document-1\}} \\
				\textcolor{lightgray}{......} \\
				\textcolor{violet}{Document x:} \\ \textcolor{orange}{\{document-x\}} \\
				\textcolor{lightgray}{......} \\
				\textcolor{lightgray}{Document n:} \\ \textcolor{lightgray}{\{document-n\}} \\\\
				
                \textcolor{question_color}{Answer the question based on the given documents. Only give me the answer and do not output any other words.}\\\\
                \textcolor{question_color}{Question: \{question\}}\\\\
                \textcolor{question_color}{Answer: }
			\end{tabular}\\
			
			\midrule
			
			\begin{tabular}{@{}c@{}}Multi\\Hop QA\\(M-QA)\end{tabular} & 
			\begin{tabular}{@{}p{\linewidth}@{}} 
				\textbf{Task Template:} \\
				Answer the question based on the given documents. Only give me the answer and do not output any other words.\\\\
				The following are given documents.\\\\
				\textcolor{lightgray}{Document 1:} \\ \textcolor{lightgray}{\{document-1\}} \\
				\textcolor{lightgray}{......} \\
				\textcolor{violet}{Document x:} \\ \textcolor{orange}{\{document-x\}} \\
				\textcolor{lightgray}{......} \\
				\textcolor{violet}{Document y:} \\ \textcolor{orange}{\{document-y\}} \\
				\textcolor{lightgray}{......} \\
				\textcolor{lightgray}{Document n:} \\ \textcolor{lightgray}{\{document-n\}} \\\\
				
                \textcolor{question_color}{Answer the question based on the given documents. Only give me the answer and do not output any other words.}\\\\
                \textcolor{question_color}{Question: \{question\}}\\\\
                \textcolor{question_color}{Answer: }
			\end{tabular}\\
			
			\bottomrule
	\end{tabular}}
\end{table*}

\begin{table*}[thb!]
	\centering
	\caption{Details of 16 Datasets in LongBench}
	\label{tab:information_dataset}
	 \resizebox{\textwidth}{!}{%
		\begin{tabular}{@{}llllrlr@{}}
			\toprule
			Label     & Task                & Task Type     & Eval metric & Avg len & Language & Sample Num        \\ \midrule
			NrtvQA    & NarrativeQA         & Single-Doc. QA & F1          & 18,409   & EN       & 200            \\
			Qasper     & Qasper              & Single-Doc. QA & F1          & 3,619    & EN       & 200            \\
			MF-en     & MultiFieldQA-en     & Single-Doc. QA & F1          & 4,559    & EN       & 150            \\
			HotpotQA  & HotpotQA            & Multi-Doc. QA  & F1          & 9,151    & EN       & 200            \\
			2WikiMQA  & 2WikiMultihopQA     & Multi-Doc. QA  & F1          & 4,887    & EN       & 200            \\
			Musique   & MuSiQue             & Multi-Doc. QA  & F1          & 11,214   & EN       & 200            \\
			GovReport & GovReport           & Summarization & Rouge-L     & 8,734    & EN       & 200            \\
			QMSum     & QMSum               & Summarization & Rouge-L     & 10,614   & EN       & 200            \\
			MultiNews & MultiNews           & Summarization & Rouge-L     & 2,113    & EN       & 200            \\
			TREC      & TREC                & Few-shotLearning      & Accuracy    & 5,177    & EN       & 200            \\
			TriviaQA  & TriviaQA            & Few-shotLearning      & F1          & 8,209    & EN       & 200            \\
			SAMSum    & SAMSum              & Few-shotLearning      & Rouge-L     & 6,258    & EN       & 200            \\
			PCount    & PassageCount        & Synthetic     & Accuracy    & 11,141   & EN       & 200            \\
			PRe       & PassageRetrieval-en & Synthetic     & Accuracy    & 9,289    & EN       & 200            \\
			Lcc       & LCC                 & Code          & Edit Sim    & 1,235      & Python/C\#/Java & 500\\
			RB-P      & RepoBench-P         & Code          & Edit Sim    & 4,206      & Python/Java & 500    \\ \bottomrule
		\end{tabular}%
		}
\end{table*}

\begin{table*}[h]
	\small
	\centering
	\caption{LongBench templates. Single-Doc. QA Tasks.}
	\label{tab:longbenc_template_1}
	\resizebox{\linewidth}{!}{
		\begin{tabular}{cp{0.9\linewidth}}
			\toprule
			
			\begin{tabular}{@{}c@{}}NarrativeQA\end{tabular} & 
			\begin{tabular}{@{}p{\linewidth}@{}} 
				\textbf{Task Template:} \\
				
				You are given a story, which can be either a novel or a movie script, and a question. Answer the question asconcisely as you can, using a single phrase if possible. Do not provide any explanation. \\\\
				Story: \textcolor{orange}{\{context\}} \\\\
				
				\textcolor{question_color}{Now, answer the question based on the story asconcisely as you can, using a single phrase if possible. Do not provide any explanation.}\\\\
				
				\textcolor{question_color}{Question: \textcolor{question_color}{\{question\}}}\\
				
			\end{tabular}\\

			\midrule
			
			\begin{tabular}{@{}c@{}}Qasper\end{tabular} & 
			\begin{tabular}{@{}p{\linewidth}@{}} 
				\textbf{Task Template:} \\
				
				You are given a scientific article and a question. Answer the question as concisely as you can, using a single phrase or sentence if possible. If the question cannot be answered based on the information in the article, write "unanswerable". If the question is a yes/no question, answer "yes", "no", or "unanswerable". Do not provide any explanation.\\\\
				Article: \textcolor{orange}{\{context\}}\\\\
				\textcolor{question_color}{Answer the question based on the above article as concisely as you can, using a single phrase or sentence if possible. If the question cannot be answered based on the information in the article, write "unanswerable". If the question is a yes/no question, answer "yes", "no", or "unanswerable". Do not provide any explanation.}\\\\
				\textcolor{question_color}{Question: \textcolor{question_color}{\{question\}}}\\
				
			\end{tabular}\\

			\midrule
			
			\begin{tabular}{@{}c@{}}MultifieldQA EN\end{tabular} & 
			\begin{tabular}{@{}p{\linewidth}@{}} 
				\textbf{Task Template:} \\
				
				Read the following text and answer briefly.\\\\
				\textcolor{orange}{\{context\}}\\\\
				\textcolor{question_color}{Now, answer the following question based on the above text, only give me the answer and do not output any other words.}\\\\
				\textcolor{question_color}{Question: \textcolor{question_color}{\{question\}}} \\
			\end{tabular}\\

			\bottomrule
	\end{tabular}}
\end{table*}

\begin{table*}[h]
	\small
	\centering
	\caption{LongBench templates. Multi-Doc. QA Tasks.}
	\label{tab:longbenc_template_2}
	\resizebox{\linewidth}{!}{
		\begin{tabular}{cp{0.9\linewidth}}
			\toprule
			
			\begin{tabular}{@{}c@{}}HotpotQA\end{tabular} & 
			\begin{tabular}{@{}p{\linewidth}@{}} 
				\textbf{Task Template:} \\
				
				Answer the question based on the given passages. Only give me the answer and do not output any other words.\\\\
				The following are given passages.\\
				\textcolor{orange}{\{context\}}\\\\
				\textcolor{question_color}{Answer the question based on the given passages. Only give me the answer and do not output any other words.}\\\\
				\textcolor{question_color}{Question: \textcolor{question_color}{\{question\}}} \\
				
			\end{tabular}\\

			\midrule
			
			\begin{tabular}{@{}c@{}}2WikimQA\end{tabular} & 
			\begin{tabular}{@{}p{\linewidth}@{}} 
				\textbf{Task Template:} \\
				
				Answer the question based on the given passages. Only give me the answer and do not output any other words.\\\\
				The following are given passages.\\
				\textcolor{orange}{\{context\}}\\\\
				\textcolor{question_color}{Answer the question based on the given passages. Only give me the answer and do not output any other words.}\\\\
				\textcolor{question_color}{Question: \textcolor{question_color}{\{question\}}}
				
			\end{tabular}\\

			\midrule
			
			\begin{tabular}{@{}c@{}}Musique\end{tabular} & 
			\begin{tabular}{@{}p{\linewidth}@{}} 
				\textbf{Task Template:} \\
				
				Answer the question based on the given passages. Only give me the answer and do not output any other words.\\\\
				The following are given passages.\\
				\textcolor{orange}{\{context\}}\\\\
				\textcolor{question_color}{Answer the question based on the given passages. Only give me the answer and do not output any other words.}\\\\
				\textcolor{question_color}{Question: \textcolor{question_color}{\{question\}}}
				
			\end{tabular}\\
			
			\bottomrule
	\end{tabular}}
\end{table*}

\begin{table*}[h]
	\small
	\centering
	\caption{LongBench templates. Summarization Tasks.}
	\label{tab:longbenc_template_3}
	\resizebox{\linewidth}{!}{
		\begin{tabular}{cp{0.9\linewidth}}
			\toprule
			
			\begin{tabular}{@{}c@{}}Gov Report\end{tabular} & 
			\begin{tabular}{@{}p{\linewidth}@{}} 
				\textbf{Task Template:} \\
				
				You are given a report by a government agency. Write a one-page summary of the report.\\\\
				Report:\\
				\textcolor{orange}{\{context\}}\\\\
				\textcolor{question_color}{Now, write a one-page summary of the report.} \\
			\end{tabular}\\

			\midrule
			
			\begin{tabular}{@{}c@{}}QMSum\end{tabular} & 
			\begin{tabular}{@{}p{\linewidth}@{}} 
				\textbf{Task Template:} \\
				
				You are given a meeting transcript and a query containing a question or instruction. Answer the query in one or more sentences.\\\\
				Transcript:\\
				\textcolor{orange}{\{context\}}\\\\
				\textcolor{question_color}{Now, answer the query based on the above meeting transcript in one or more sentences.}\\\\
				\textcolor{question_color}{Query: \textcolor{question_color}{\{question\}}} \\
			\end{tabular}\\

			\midrule
			
			\begin{tabular}{@{}c@{}}Multi News\end{tabular} & 
			\begin{tabular}{@{}p{\linewidth}@{}} 
				\textbf{Task Template:} \\
				
				You are given several news passages. Write a one-page summary of all news. \\\\
				News:\\
				\textcolor{orange}{\{context\}}\\\\
				\textcolor{question_color}{Now, write a one-page summary of all the news.}\\
			\end{tabular}\\

			\bottomrule
	\end{tabular}}
\end{table*}

\begin{table*}[h]
	\small
	\centering
	\caption{LongBench templates. Few-shot Learning Tasks.}
	\label{tab:longbenc_template_4}
	\resizebox{\linewidth}{!}{
		\begin{tabular}{cp{0.9\linewidth}}
			\toprule

			\begin{tabular}{@{}c@{}}TREC\end{tabular} & 
			\begin{tabular}{@{}p{\linewidth}@{}} 
				\textbf{Task Template:} \\
				
				Please determine the type of the question below. Here are some examples of questions.\\\\
				\textcolor{orange}{\{context\}}\\
				\textcolor{question_color}{\{question\}} \\
				
			\end{tabular}\\

			\midrule
			
			\begin{tabular}{@{}c@{}}TriviaQA\end{tabular} & 
			\begin{tabular}{@{}p{\linewidth}@{}} 
				\textbf{Task Template:} \\
				
				Answer the question based on the given passage. Only give me the answer and do not output any other words. The following are some examples.\\\\
				\textcolor{orange}{\{context\}}\\\\
				\textcolor{question_color}{\{question\}} \\
				
			\end{tabular}\\

			\midrule
			
			\begin{tabular}{@{}c@{}}SAMSum\end{tabular} & 
			\begin{tabular}{@{}p{\linewidth}@{}} 
				\textbf{Task Template:} \\
				
				Summarize the dialogue into a few short sentences. The following are some examples.\\\\
				\textcolor{orange}{\{context\}}\\\\
				\textcolor{question_color}{\{question\}} \\
			\end{tabular}\\

			\bottomrule
	\end{tabular}}
\end{table*}

\begin{table*}[h]
	\small
	\centering
	\caption{LongBench templates. Synthetic Tasks.}
	\label{tab:longbenc_template_5}
	\resizebox{\linewidth}{!}{
		\begin{tabular}{cp{0.9\linewidth}}
			\toprule

			\begin{tabular}{@{}c@{}}Passage Count\end{tabular} & 
			\begin{tabular}{@{}p{\linewidth}@{}} 
				\textbf{Task Template:} \\
				
				There are some paragraphs below sourced from Wikipedia. Some of them may be duplicates. Please carefully read these paragraphs and determine how many unique paragraphs there are after removing duplicates. In other words, how many non-repeating paragraphs are there in total?\\\\
				\textcolor{orange}{\{context\}}\\\\
				\textcolor{question_color}{Please enter the final count of unique paragraphs after removing duplicates. The output format should only contain the number, such as 1, 2, 3, and so on.}\\
			\end{tabular}\\

			\midrule
			
			\begin{tabular}{@{}c@{}}Passage Retrieval EN\end{tabular} & 
			\begin{tabular}{@{}p{\linewidth}@{}} 
				\textbf{Task Template:} \\
				
				Here are 30 paragraphs from Wikipedia, along with an abstract. Please determine which paragraph the abstract is from.\\\\
				\textcolor{orange}{\{context\}}\\\\
				The following is an abstract.\\\\
				\textcolor{question_color}{\{question\}}\\\\
				\textcolor{question_color}{Please enter the number of the paragraph that the abstract is from. The answer format must be like "Paragraph 1", "Paragraph 2", etc.}\\
			\end{tabular}\\

			\bottomrule
	\end{tabular}}
\end{table*}

\begin{table*}[h]
	\small
	\centering
	\caption{LongBench templates. Code Tasks.}
	\label{tab:longbenc_template_6}
	\resizebox{\linewidth}{!}{
		\begin{tabular}{cp{0.9\linewidth}}
			\toprule

			\begin{tabular}{@{}c@{}}Lcc\end{tabular} & 
			\begin{tabular}{@{}p{\linewidth}@{}} 
				\textbf{Task Template:} \\
				
				Please complete the code given below. \\
				\textcolor{orange}{\{context\}} \\
				\textcolor{question_color}{Next line of code:}
			\end{tabular}\\
			
			\midrule
			
			\begin{tabular}{@{}c@{}}Repobench-P\end{tabular} & 
			\begin{tabular}{@{}p{\linewidth}@{}} 
				\textbf{Task Template:} \\
				
				Please complete the code given below. \\
				\textcolor{orange}{\{context\}} \\
				\textcolor{question_color}{\{question\}} \\
				\textcolor{question_color}{Next line of code:}
			\end{tabular}\\
			
			
			\bottomrule
	\end{tabular}}
\end{table*}

\FloatBarrier

\end{document}